\pdfoutput=1
\documentclass[sigconf]{acmart}

\usepackage{bbold}
\usepackage{booktabs}
\usepackage{enumitem}
\setlist{topsep=2pt, itemsep=1pt, parsep=0pt, partopsep=0pt, leftmargin=0.5em, labelsep=0.4em}
\definecolor{modperf}{HTML}{4e7a10}  
\definecolor{usercol}{HTML}{472a79}  
\definecolor{teamcol}{HTML}{1a6b8a}  
\definecolor{scopecol}{HTML}{b35900} 
\definecolor{fixcol}{HTML}{0d7377}   
\definecolor{classcol}{HTML}{8b1a4a} 
\definecolor{evidcol}{HTML}{2e7d32}  
\definecolor{multicol}{HTML}{c45c2a} 
\definecolor{majorEdit}{HTML}{4CAF50} 
\definecolor{minorEdit}{HTML}{1B5E20} 
\newif\ifshowedits
\showeditsfalse   
\newcommand{\editmajor}[1]{\ifshowedits\textcolor{majorEdit}{#1}\else#1\fi}
\newcommand{\editminor}[1]{\ifshowedits\textcolor{minorEdit}{#1}\else#1\fi}
\newcommand{\editmajorfig}[1]{\ifshowedits\fcolorbox{majorEdit}{white}{#1}\else#1\fi}

\usepackage{amsthm}
\usepackage{newunicodechar}
\newunicodechar{§}{\S}
\newunicodechar{’}{'}
\newunicodechar{—}{---}
\newunicodechar{–}{--}
\newunicodechar{≥}{\ensuremath{\geq}}
\newunicodechar{µ}{\ensuremath{\mu}}
\newunicodechar{é}{\'e}

\newtheorem{proposition}{Proposition}

\AtBeginDocument{%
  }

\setcopyright{acmlicensed}
\copyrightyear{2026}
\acmYear{2026}
\acmDOI{XXXXXXX.XXXXXXX}
\acmConference[UIST '26]{The 39th Annual ACM Symposium on User Interface Software and Technology}{November 2--5, 2026}{Detroit, MI, USA}
\acmISBN{978-1-4503-XXXX-X/2026/11}

\renewcommand\footnotetextcopyrightpermission[1]{}
\begin{document}

\title{Augmenting Human Performance with an XR Agent Learning from Online Behavior and BCI Evidence}

\newcommand{\authsep}{\hskip 1.4em plus .5em minus .25em\relax}%
\author{Ziheng~``Leo''~Li\textsuperscript{*\textdagger}\authsep Xichen~He\textsuperscript{*\textdagger}\authsep Haoyan~Chen\textsuperscript{*\textdagger}\authsep Charlie~Zou\textsuperscript{*\textdagger}\authsep Sheng~Bai\textsuperscript{\textdagger}\authsep Benjamin~Yang\textsuperscript{\textdagger}\authsep Mengyuan~Wu\textsuperscript{\textdagger}\authsep Jake~Ledner\textsuperscript{\textdagger}\authsep Yi-Jie~Cheng\textsuperscript{\textdagger}\authsep Akito~Yamauchi\textsuperscript{\textdagger}\authsep Dishita~G~Turakhia\textsuperscript{\textdaggerdbl}\authsep Steven~Feiner\textsuperscript{\textdagger}\authsep Paul~Sajda\textsuperscript{\textdagger}}
\affiliation{%
  \institution{%
    \textsuperscript{\textdagger}Columbia University\\
    \{xh2623, hc3512, jz3331, sb5019, by2297, mw3209, jl7133, yc4708, ay2668, psajda\}@columbia.edu, \{zihengleoli, feiner\}@cs.columbia.edu\\
    \textsuperscript{\textdaggerdbl}Emory University\\
    dishita.turakhia@emory.edu\\[2pt]
    {\normalfont\footnotesize\textsuperscript{*}The first four authors contributed equally.}%
  }
  \country{}}
\renewcommand{\shortauthors}{Li et al.}

\renewcommand{\shortauthors}{Li et al.}

\begin{abstract}
We present \textbf{OLIVE}, a framework for adapting a foundation model to provide real-time assistance in temporally demanding, high-stakes, and dynamic tasks. We show that passive EEG, fused online with behavioral evidence, can meaningfully extend the number of targets users detect and engage beyond their unaided action bandwidth.
OLIVE learns from both \textit{explicit} behavioral signals (the targets the user shoots down \editminor{in an XR first-person shooter game}) and \textit{implicit} physiological signals (fixation-locked EEG) to provide timely guidance, continuously adapting a frozen vision--language model's inference on which items are task-relevant by jointly estimating per-source reliability without manual labels or offline training.
Through three user studies, including two live deployments of an assistive agent driven by OLIVE in XR, we show that OLIVE Pareto-dominates prior test-time adaptation frameworks, achieving the highest convergence rate at comparable convergence speed. \editmajor{Combining implicit physiological and explicit behavioral signals, the OLIVE agent produces the largest and most reliable within-session improvement to a user's ability to detect and engage targets, largely independent of the individual's skill. When the target switches silently, the agent that uses both behavioral and physiological signals reconverges significantly faster than the behavior-only agent ($1.27$\,times faster on average, $p = .008$), restoring trustworthy guidance at the moment the task changes, precisely when reliable assistance matters most.}
\end{abstract}

\begin{CCSXML}
<ccs2012>
  <concept>
    <concept_id>10003120.10003121.10003124.10010392</concept_id>
    <concept_desc>Human-centered computing~Mixed / augmented reality</concept_desc>
    <concept_significance>500</concept_significance>
  </concept>
  <concept>
    <concept_id>10010147.10010257.10010282.10010284</concept_id>
    <concept_desc>Computing methodologies~Machine learning~Online learning settings</concept_desc>
    <concept_significance>500</concept_significance>
  </concept>
  <concept>
    <concept_id>10010147.10010257.10010282.10010292</concept_id>
    <concept_desc>Computing methodologies~Machine learning~Learning from implicit feedback</concept_desc>
    <concept_significance>300</concept_significance>
  </concept>
  <concept>
    <concept_id>10003120.10011738.10011775</concept_id>
    <concept_desc>Human-centered computing~Accessibility~Accessibility technologies</concept_desc>
    <concept_significance>300</concept_significance>
  </concept>
  <concept>
    <concept_id>10003120.10003121.10003128</concept_id>
    <concept_desc>Human-centered computing~Interaction techniques</concept_desc>
    <concept_significance>300</concept_significance>
  </concept>
  <concept>
    <concept_id>10010147.10010257.10010293.10010300.10010304</concept_id>
    <concept_desc>Computing methodologies~Machine learning~Mixture models</concept_desc>
    <concept_significance>100</concept_significance>
  </concept>
  <concept>
    <concept_id>10003120.10003121.10003124.10011751</concept_id>
    <concept_desc>Human-centered computing~Collaborative interaction</concept_desc>
    <concept_significance>100</concept_significance>
  </concept>
</ccs2012>
\end{CCSXML}

\ccsdesc[500]{Human-centered computing~Mixed / augmented reality}
\ccsdesc[500]{Computing methodologies~Online learning settings}
\ccsdesc[300]{Computing methodologies~Learning from implicit feedback}
\ccsdesc[300]{Human-centered computing~Accessibility technologies}
\ccsdesc[300]{Human-centered computing~Interaction techniques}
\ccsdesc[100]{Computing methodologies~Mixture models}
\ccsdesc[100]{Human-centered computing~Collaborative interaction}

\keywords{human-AI collaboration, brain--computer interface, performance augmentation, online adaptation, EEG, fixation-related potentials, attention-limited tasks, shared autonomy, extended reality, AI agent}
\begin{teaserfigure}
  \includegraphics[width=\textwidth]{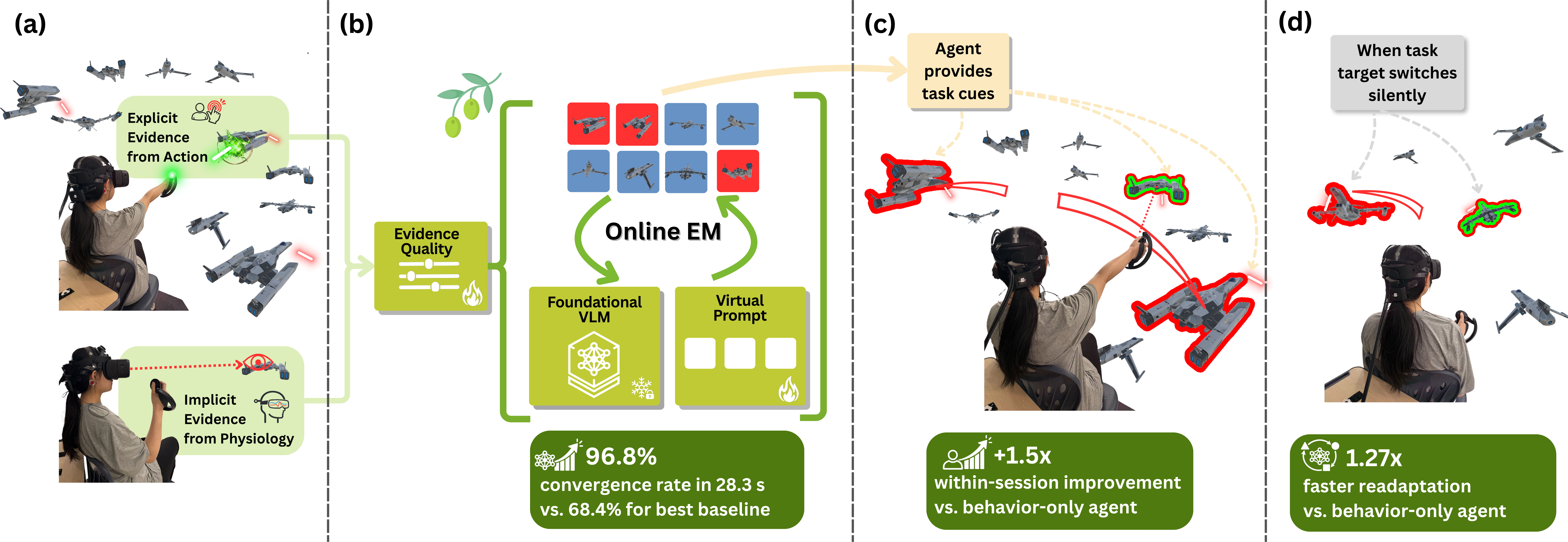}
  \caption{OLIVE deployed in SpaceShooter, an XR target-identification game. \textbf{(a)}~Two evidence streams (explicit: targets shot down; implicit: fixation-locked EEG) feed into an online expectation--maximization (EM) loop. \textbf{(b)}~The EM update adapts a virtual prompt atop a frozen vision--language model (VLM), weighting each channel by estimated reliability. \textbf{(c)}~Live deployment: OLIVE renders directional line cues toward high-belief targets outside the user's field of view. \textbf{(d)}~Silent target switch: OLIVE-IE guidance reconverges faster than the behavior-only agent, reducing the window of misaligned support.}
  \Description{Four-panel teaser. (a) An XR operator in SpaceShooter; two evidence streams feed into OLIVE: explicit evidence from targets the user shoots down; and implicit evidence from EEG. (b) OLIVE's online EM loop adapts a virtual prompt on a frozen VLM, weighted by evidence quality; key result: 96.8\% guidance convergence in 28.3 s. (c) Live deployment: agent highlights targets; OLIVE-IE operators show the largest within-session improvement in target throughput. (d) Silent target switch: OLIVE-IE guidance reconverges 14.7 s faster on average (p=0.008), reducing the operator support gap.}
  \label{fig:teaser}
\end{teaserfigure}


\maketitle

\section{Introduction}

Modern work increasingly places people in settings where attention, rather than raw computation, is the bottleneck. \editminor{Air traffic} controllers monitor dense trajectories for emerging conflicts; radiologists scan images for subtle but consequential signals; security operators must distinguish true threats from distractors under time pressure.
Attention failures in these settings carry real costs: vigilance decrements under sustained load are implicated in mid-air collisions, undetected cancers, and catastrophic process-control failures~\cite{warm2008vigilance, parasuraman1997humans}.
What unifies these domains is a common computational structure: candidates exceed what a human can inspect in time, actions must be taken within expiring windows, and the target class may shift without warning. We call these \emph{operational vigilance (OV) tasks} (defined formally in §\ref{sec:background-attention}).

Human visual search is fundamentally capacity-limited: recognition degrades under clutter \cite{whitney2011visual}, search is guided by selective prioritization rather than exhaustive inspection \cite{wolfe2021guided}, and even noticed items may go unacted upon due to downstream attentional bottlenecks \cite{tombu2011unified}. A natural response is to add automation, but decades of supervisory control research show the challenge is not simply to automate more; the challenge is to coordinate human and machine behavior without undermining judgment or creating inappropriate reliance \cite{sheridan1992adaptive,lee2004trust}. Static alarms and one-shot recommendations are poor fits; what is needed is an assistant that adapts as the task evolves.

Passive brain--computer interfaces (BCI) offer a promising route. Rather than requiring explicit user commands, passive BCIs infer task-relevant states from naturally occurring neural and physiological signals during ongoing interaction \cite{george2010passive,zander2011towards}. \editminor{Among these, fixation-aligned neurophysiology is particularly attractive for time-critical search-and-act tasks: electroencephalography (EEG) components elicited during visual search are reliably enhanced for task-relevant targets~\cite{eimer1996n2pc,polich2007updating}, and pupillometry adds a complementary index of effort~\cite{beatty1982task}.} Because these signals can be aligned to natural eye movements during free viewing \cite{dimigen2011coregistration,kamienkowski2012fixation,sharma2024distinguishing}, they provide an always-on pre-action readout of what the user may have recognized before they had a chance to act \cite{golenia2018implicit,shishkin2016eeg}.

Yet BCI research has, for decades, framed these interfaces largely as assistive technologies. \editmajor{We build on an active line of work that instead treats BCI as a \emph{performance augmentation} tool~\cite{valeriani2017group}: a copilot that learns from attentional neural signals a user produces in a task, and in real-time, helps the user to handle more than they could manage on their own, without requiring deliberate BCI commands.}

Building such a system requires solving a difficult online learning problem. Explicit user behavior is sparse: users act on only a fraction of what they \textit{recognize}, and the absence of action on an item does not imply it is irrelevant. While implicit physiological signals from \textit{recognition} can supplement behavior, they are noisy. Existing paradigms each capture part of this but not the whole: Test-time adaptation of foundation models assumes unlabeled visual streams, not human-contingent evidence \cite{iwasawa2021test,shu2022test,farina2024frustratingly}. Crowd-learning methods handle reliability-weighted annotations but not online visual foundation model adaptation \cite{dawid1979maximum,raykar2010learning}. Reinforcement learning sidesteps the label problem but is far too sample-hungry for settings where opportunities expire in seconds \cite{Pohlmeyer2014,Iturrate2010}. \editmajor{Closest in spirit are EEG-guided visual search and neurophysiological relevance-feedback systems, which infer relevance from brain signals to steer image triage, retrieval, or generation~\cite{gerson2006cortically,delatorreortiz2020brain,jacucci2019integrating}; but these largely operate over curated or offline streams rather than doing what we actually need: learning what \emph{this} user is currently treating as task-relevant, and driving an agent to act on the belief at the same time.}

To address this, we introduce \textbf{OLIVE} (\textbf{O}nline \textbf{L}atent \textbf{I}nference from \textbf{V}ariable \textbf{E}vidence), the algorithmic framework underlying such an agent. OLIVE maintains an evolving posterior over item target-states by fusing behavioral and fixation-locked EEG evidence as they arrive, jointly estimating per-source reliability and adapting a frozen vision--language model (VLM), all within the time budget of the task itself. We instantiate and evaluate this framework across three user studies in a demanding extended reality (XR) \editminor{first-person shooter game}, asking whether OLIVE converges reliably within an active task round (US1), whether the resulting agent improves live task performance (US2), and whether it readapts quickly when the target class shifts silently (US3).
\editmajor{Because absolute throughput is dominated by each operator's individual skill, our primary augmentation outcome is the \emph{within-session} change in throughput (whether an operator learns to exploit the agent's guidance as a session unfolds), which isolates the assistance effect from baseline ability.}

Our contributions are threefold:
\begin{itemize}[noitemsep,topsep=2pt]
    \item We introduce \textbf{OLIVE}, a general online latent-inference framework that fuses sparse explicit behavioral signals and noisy implicit EEG into a frozen VLM during task execution, jointly estimating per-source reliability weights without manual labels or offline retraining. OLIVE serves as the backbone of an AI agent for \emph{performance augmentation}: operating passively as an intelligent copilot, extending operator capability during normal task execution without requiring deliberate BCI commands.
    \item We show that fixation-locked EEG, incorporated as an implicit evidence channel in OLIVE, achieves 96.8\% guidance convergence in 28.3\,s (the first third of each 90\,s round), Pareto-dominating all single-modality variants and algorithmic baselines in simulation, and remains robust under cognitive overload that degrades explicit-only and implicit-only methods.
    \item \editmajor{We demonstrate across two live-deployment studies that OLIVE-IE (the full model, fusing \textbf{i}mplicit EEG and \textbf{e}xplicit behavioral evidence) produces the largest and most reliable within-session improvement: under sustained operational load (US2), OLIVE-IE operators increased target throughput ($\Delta = +0.031$ kills/s, $p = .003$), independent of shooting skill; and after repeated silent target switches (US3), OLIVE-IE operators improved most on the new target class ($\Delta = +0.070$ kills/s, $p = .001$), with guidance reconverging $14.7$\,s faster than the behavior-only agent ($p = .008$).}
\end{itemize}

\editmajor{\textit{Data and code availability.}
The OLIVE implementation and reproduction scripts, together with a de-identified dataset
of the fixation-locked EEG and pupil epochs for the study cohort, are available
online.\footnote{Code and reproduction scripts: \url{https://github.com/ApocalyVec/olive}.
Dataset: \url{https://huggingface.co/datasets/ApocalyVec/olive-frp}.} Each epoch includes
the target label, incoming-saccade metadata, per-fixation decoder output probabilities, and
condition and round information. The participant-specific EEG decoder is being prepared for separate publication and is not released here; because OLIVE is decoder-agnostic, the released per-fixation probabilities suffice for replication.}

\section{Background}
\paragraph{Attention-Limited Tasks and Computational Assistance.}
\label{sec:background-attention}
Many real-world domains share a common structure: the environment produces a continuous stream of candidates (aircraft trajectories, radiological images, security feeds), the number of candidates exceeds what the operator can inspect before action windows expire, and the relevant target class may shift without explicit notification.
We term these \emph{operational vigilance (OV) tasks}: they extend classical vigilance~\cite{mackworth1948breakdown} by requiring not just detection but timely execution.
OV tasks recur in air traffic control (ATC)~\cite{kuchar2000review}, radiology~\cite{doi2007computer}, and security triage~\cite{chandola2009anomaly}, and share three structural properties:
\begin{itemize}
    \item \textbf{Competing concurrency}: the number of candidates simultaneously demanding attention exceeds the user's inspection bandwidth \cite{warm2008vigilance, parasuraman1997humans, endsley1995toward}.
    \item \textbf{Limited action window}: the user must act on a target within an expiring window, a requirement absent from classical vigilance~\cite{mackworth1948breakdown}.
    \item \textbf{Contextual target shift}: what constitutes a target can change mid-task without explicit notification, requiring the user (and any assisting system) to readapt.
\end{itemize}
Rule-based systems hardcode relevance and produce alert fatigue \cite{parasuraman1997humans, sheridan1992adaptive, iqbal2008effects, hudson2003predicting}; deep learning matches expert accuracy \cite{rajpurkar2018deep, chandola2009anomaly, bergmann2019mvtec} but adapts to populations, not individuals, and goes stale silently when task definitions shift \cite{gama2014survey, lee2004trust}. \editmajor{Adaptive and gaze-aware interfaces adjust layout and content to context~\cite{lindlbauer2019context,todi2021adapting,pfeuffer2021artention, turakhia2025adaptive}, but optimize how information is \emph{presented} rather than infer which items are task-relevant from the user's own evidence.} This leaves an open question: \emph{can an assistant learn, in real time, what \textbf{this} user is currently treating as relevant?}

\paragraph{Online Foundation Model Adaptation.}
Foundation VLMs such as CLIP \cite{radford2021learning} provide open-vocabulary representations that generalize across categories without task-specific retraining. Test-time adaptation (TTA) personalizes such models during inference via entropy minimization, prompt tuning, and consistency regularization \cite{iwasawa2021test, shu2022test, farina2024frustratingly, zhou2022learning}, but assumes the adaptation signal derives from the unlabeled visual stream. A more fitting lineage comes from \emph{learning from crowds}: Dawid and Skene’s Expectation-Maximization (EM) framework jointly estimates annotator reliability and the latent true label rather than pooling labels \cite{dawid1979maximum}, and was extended to ML by Raykar et al., who parameterized per-annotator sensitivity and specificity \cite{raykar2010learning}. The mapping to our setting is direct: one annotator is the user’s explicit action (high specificity, positive-only), another is the user’s physiological response at each fixation (denser but noisier, soft-labeled \cite{dimigen2021regression}); yet existing crowd-learning models do not account for this structural heterogeneity or the need to update a VLM.

\editminor{Reinforcement Learning offers a third framework, updating a policy from task-outcome rewards \cite{Pohlmeyer2014, Iturrate2010, DenHengst2020, Intayoad2020, zheng2023adaptive, park2025pretraining, freitag2025finetuning}, but is too sample-hungry for our seconds-long windows, and EEG non-stationarity destabilizes the reward signal \cite{Fidencio2025, Girdler2022, Vukelic2023}. Human-centered interactive-AI guidelines \cite{amershi2019guidelines, cai2019human, kocielnik2019will} assume deliberate, discrete input rather than the continuous passive evidence OLIVE exploits.} OLIVE occupies the space none of these frameworks covers: online adaptation of a foundation model from heterogeneous, human-contingent evidence using an EM update that runs within the task's own time budget.

\paragraph{Physiological Signals as Implicit Evidence.}
\label{sec:background-physio}
That a user acts on only a fraction of the targets they recognize provides a concrete opportunity: the physiological record of a fixation carries evidence of recognition even when no action follows. Fixation-related potentials (FRPs), EEG components time-locked to natural fixation onsets during free viewing, are the primary vehicle, with principled frameworks now available to separating FRP components from overlapping activity of rapid eye movements \cite{dimigen2011coregistration, dimigen2021regression}. Three signals have well-established links to target detection: the N2pc (contralateral posterior negativity, 180--300\,ms) marks covert attentional deployment to task-relevant items \cite{eimer1996n2pc, luck1994electrophysiological}; the P300 (300--600\,ms, parietal) reflects target-category confirmation and tracks the subjective probability that a fixated item is a target \cite{polich2007updating}; and task-evoked pupillary responses (TEPR, 500--2000\,ms) index cognitive effort and evaluative uncertainty \cite{van2018pupil, beatty1982task}. All three are extractable during free viewing without interrupting primary task performance \footnote{TEPR is excluded from our implementation due to luminance contamination in XR; see Appendix~\ref{app:pupil}}.

Prior work shows fixation-locked EEG distinguishes target from distractor fixations above chance from single trials \cite{golenia2018implicit, shishkin2016eeg}; more broadly, \editminor{functional near-infrared spectroscopy} (fNIRS) and EEG have been applied to workload-adaptive interfaces \cite{solovey2009using, solovey2012brainput, peck2013using, hirshfield2011brain, wang2021taming}, cognitive state sensing in VR \cite{savalle2024presence}, and closed-loop deployment in ATC \cite{arico2016passive} and threat monitoring \cite{pope1995biocybernetic, saproo2016neural}. What these systems share is a decode-then-act architecture: none feeds physiological posteriors into an online EM loop that jointly estimates source reliability and updates a visual foundation model---the missing link OLIVE provides.

\textit{Shared Autonomy and Task Design.}
\label{sec:background-interface}
Knowing which items are task-relevant is necessary but not sufficient for a useful assistant. Parasuraman and Riley’s automation framework \cite{parasuraman1997humans}, Sheridan’s supervisory-control hierarchy \cite{sheridan1992adaptive}, and Horvitz’s mixed-initiative principles \cite{horvitz1999principles} establish that effective human--machine teaming requires matching machine initiative to user capacity and system confidence; Lee and See’s trust-calibration model adds that trust must track demonstrated reliability over time, or users will over-rely on a confident-but-wrong system \cite{lee2004trust}. Shared-autonomy systems have operationalized these principles, including in robotics via BCI-driven shared control \cite{tonin2010shared, schaff2020residual} and in HCI via passive neural sensing for adaptation \cite{afergan2014dynamic, yuksel2016learn}; yet all assume a fixed machine policy and do not accommodate an assistant whose model of relevance is itself evolving in real time.

The form of assistance also matters ethically: hard automation erodes situational awareness \cite{sheridan1992adaptive}, and covert neural sensing adds the obligation that users must form accurate mental models of what the system infers and when it will act, or implicit use of brain data risks manipulation that undermines the autonomy it is meant to protect \cite{gray2019ethical, gordon2024ethical, bennett2023agency}; sensing-based approaches to attention and interruptibility management using physiological signals \cite{fogarty2005examining, zuger2018sensing} illustrate the design challenges that arise when implicit biosignals govern system behavior. Micro-assists (brief, subtle attentional cues) preserve the user as the final decision-maker while reducing the cost of target localization. We evaluate OLIVE using a controlled XR surrogate task following established methodology in supervisory control and passive-BCI research \cite{pope1995biocybernetic, arico2016passive, saproo2016neural, luong2022affective}, affording ground truth and experimental control unavailable in real ATC or clinical settings.

\section{Methods}

\subsection{Problem Statement}
\label{sec:problem}

\editminor{We study assistance for operational vigilance (OV) tasks (§\ref{sec:background-attention}). Because a user acts on only a fraction of recognized items, labeling is structurally \emph{incomplete}, \emph{asymmetric} (absent action does not imply irrelevance), and \emph{heterogeneous}: signals differ in timing, confidence, and semantics; action events are sparse but consequential; fixation-locked EEG arrives earlier but noisier. All adaptation must occur online, within the task's own time budget.}

\subsection{SpaceShooter: A Surrogate Operational Vigilance Task}
\label{sec:tasks}

We instantiate an OV task as an XR SpaceShooter game in which participants defend a mothership against a mixed fleet of enemy and friendly ships.
Targets are enemies at an initial prevalence of $\pi_0 \approx 0.30$. Shooting a friendly ship incurs a disproportionately heavier score penalty than missing an enemy, creating an asymmetric cost structure that mirrors real OV domains such as ATC and medical triage.
\editmajor{SpaceShooter deliberately instantiates the three defining OV properties of §\ref{sec:background-attention}: \emph{competing concurrency} (the fleet size and pace exceeds what a user can engage), \emph{limited action windows} (each enemy must be acted on before it gets away), and \emph{contextual target shift} (the appearance of the enemy can change silently). The mechanics are designed to expose these properties under experimental control rather than to replicate any single domain; the same three properties recur in ATC, radiology triage, and surveillance, so the OLIVE agent demonstrated here has the capability those use cases demand.}
While the user is playing SpaceShooter, the OLIVE agent receives spaceship visual data from two scene cameras, explicit \editminor{trigger-pull events (hereafter ``shots'')}, and EEG-FRP-decoded target probability per spaceship. The agent aids the user by rendering subtle cues pointing to potential enemies. \editmajor{At runtime, a Unity XR frontend streams two $448{\times}448$ scene-camera crops per second plus shot and fixation events over gRPC to three Python services: a Vision bridge, the OLIVE inference/EM service, and a PhysioLabXR-based EEG service~\cite{physiolabxr2024}, keeping per-frame belief inference under 10\,ms.} \footnote{Full task design parameters (fleet composition, scoring, shooting mechanics, etc) are in Appendix~\ref{app:spaceshooter_design}.} 

To maintain operator engagement throughout the session, round difficulty $d$ is adapted using a QUEST$+$-style Bayesian staircase~\cite{watson2017questplus} targeting $\approx$70\% round score: challenging enough to sustain attention, easy enough that participants remain motivated to engage.\editmajor{\footnote{Higher adaptive difficulty is associated with increased per-round overwhelm ($r = 0.15$), consistent with the task's design intent.}} Difficulty directly controls fleet size: each round spawns $(1+d)\times 16$ ships, scaling the inference problem alongside task demand.
At the start of each SpaceShooter round, OLIVE's item-level beliefs are cleared and a fresh EM sequence begins, while annotator reliability parameters and the VLM prompt are warm-started from the previous round.

\begin{figure*}[t]
  \centering
  \includegraphics[width=0.245\linewidth]{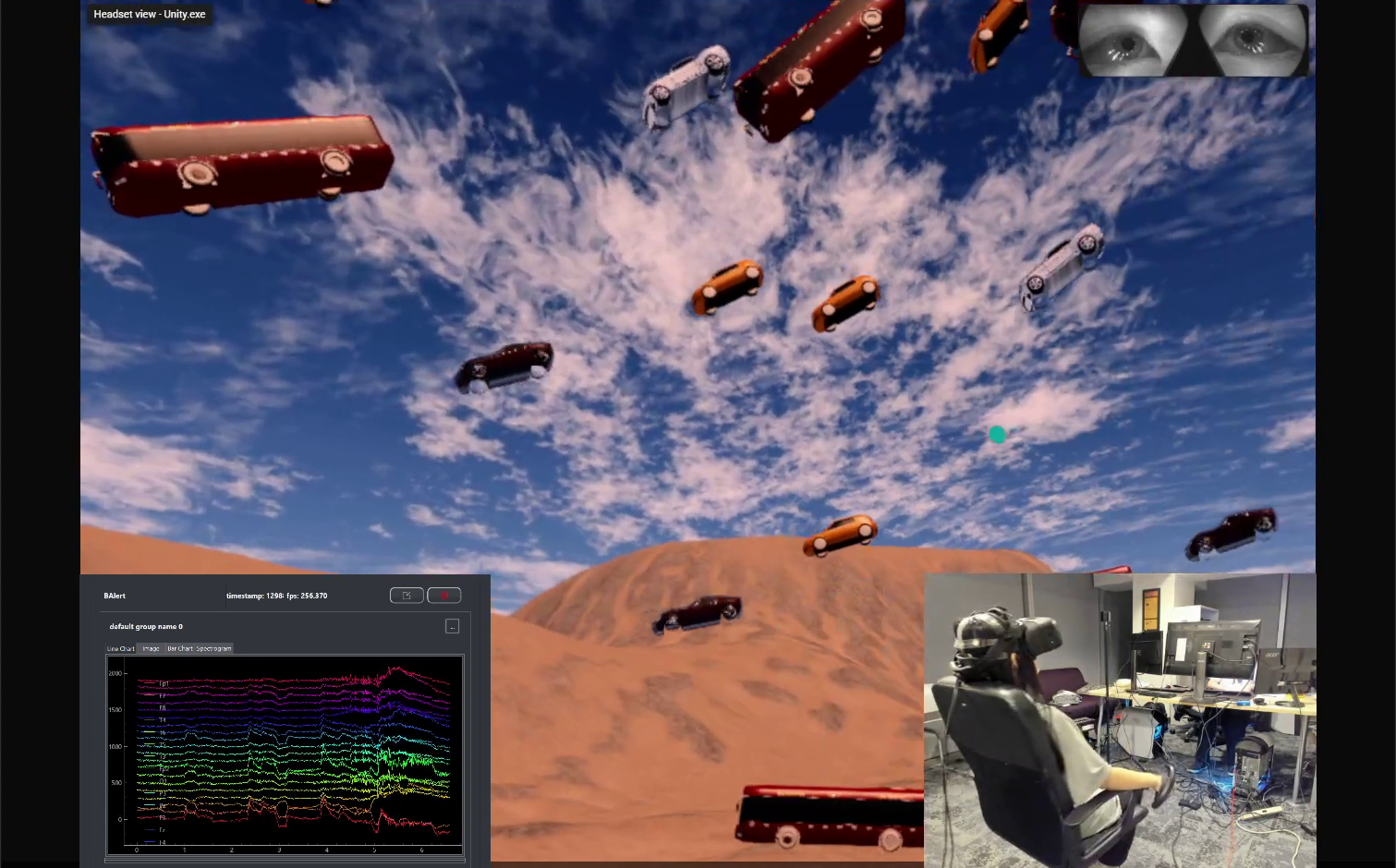}\hfill
  \includegraphics[width=0.245\linewidth]{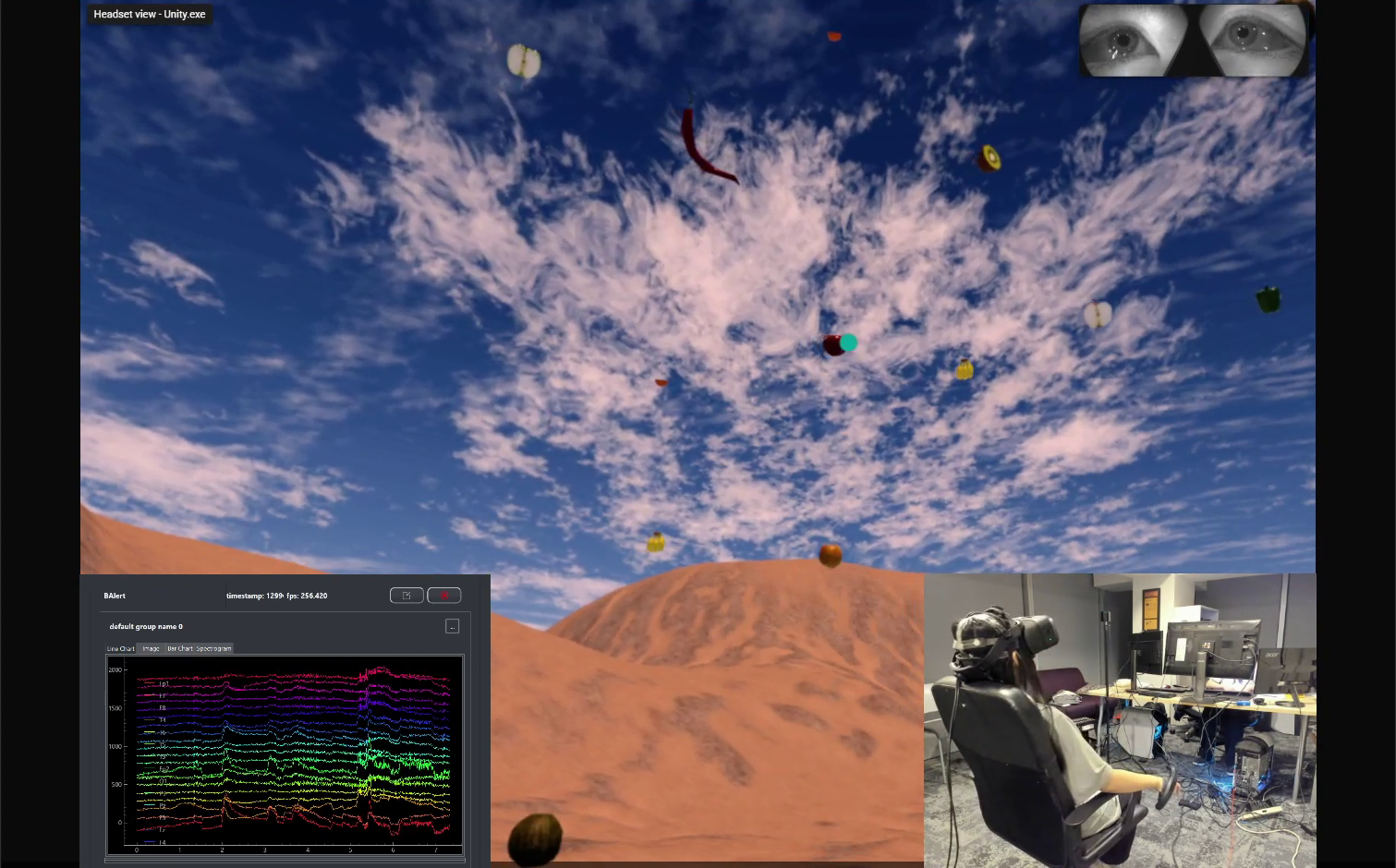}\hfill
  \includegraphics[width=0.245\linewidth]{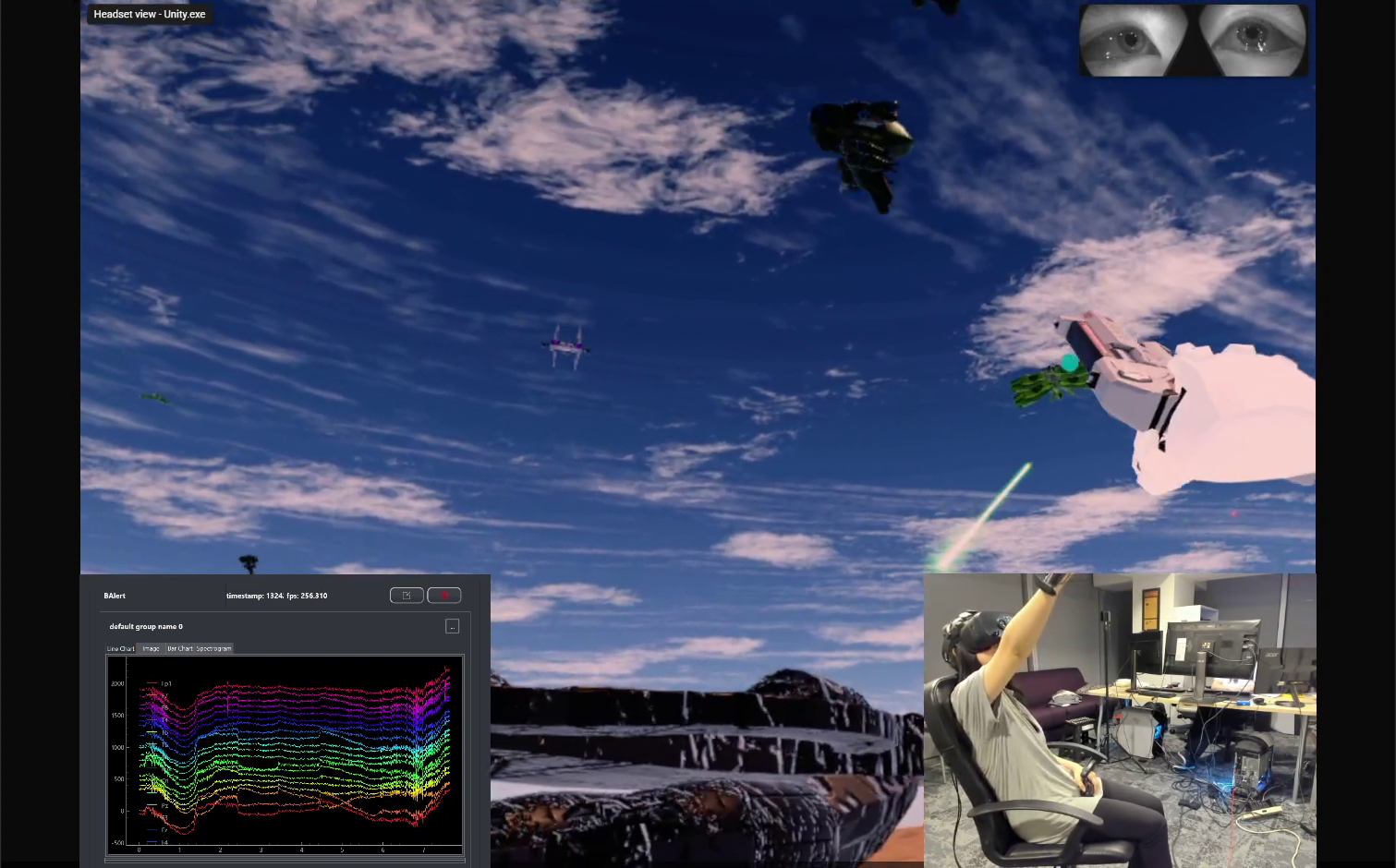}\hfill
  \includegraphics[width=0.245\linewidth]{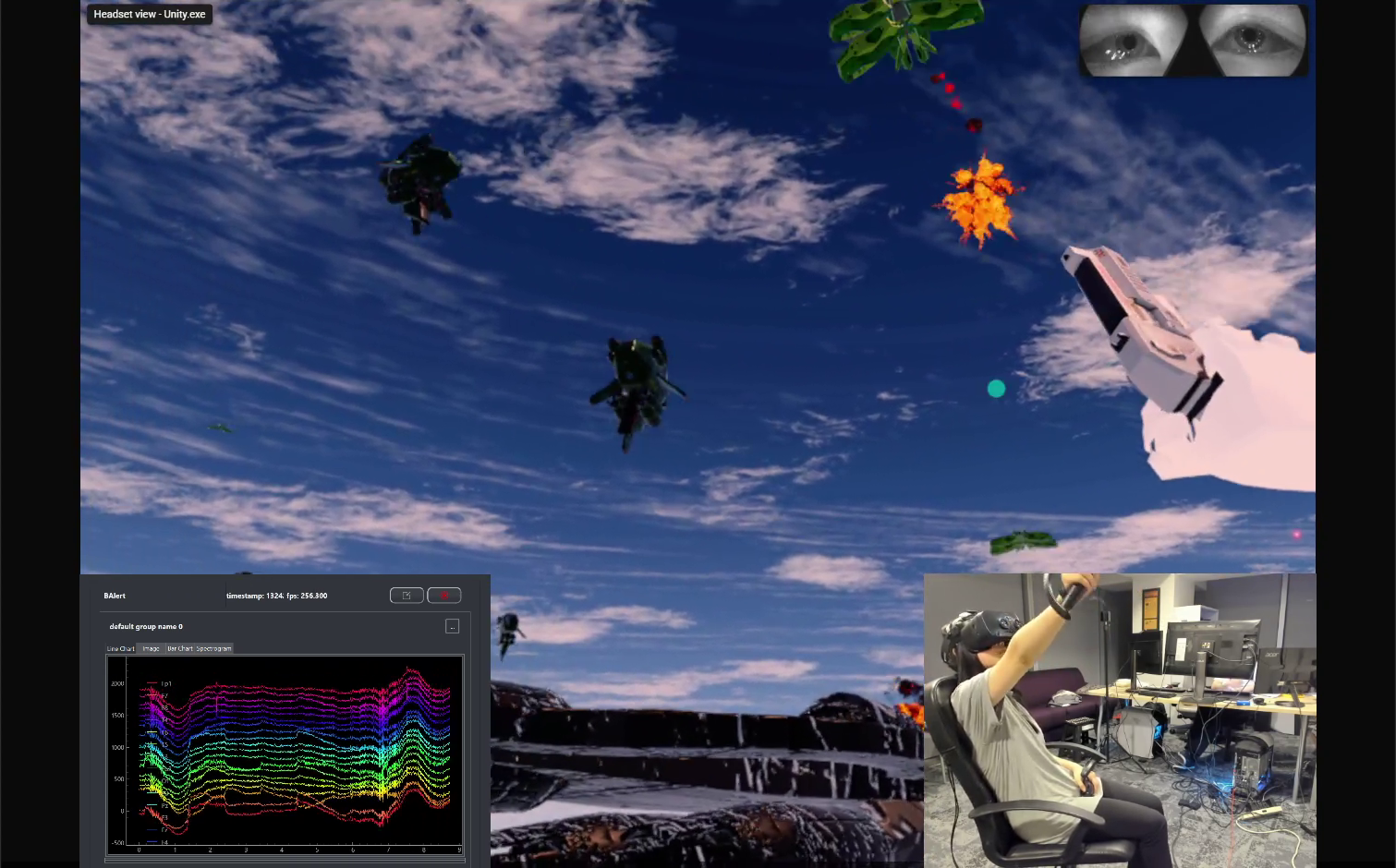}
  \caption{Task environment. \textbf{(a,b)} Visual Search: Participant searches for and counts targets; EEG labels train the physio decoder. \textbf{(c)} SpaceShooter: Participant aims while OLIVE renders guidance cues to high-belief targets. \textbf{(d)} Shooting yields an explicit action label for OLIVE's EM update.}
  \Description{Four screenshots. (a,b) Visual Search phase: participants find a target object (red bus; orange sphere) among distractors; fixation-locked EEG labels train the physiological decoder. (c) SpaceShooter: first-person view of spaceships with directional line guidance cues at screen edges and a real-time EEG display. (d) Participant shoots an enemy ship, generating an explicit behavioral label for OLIVE's online EM update.}
  \label{fig:task_screenshots}
\end{figure*}

\subsection{Visual Search: Calibrating the Physiological Decoder}
\label{sec:vs}

Before each SpaceShooter session, participants complete a \emph{visual search} (VS) calibration phase in which they search for and count targets in static arrays under a time budget.
Because ground-truth labels are known per array, fixation-locked EEG epochs are labeled and used to train a participant-specific physiological decoder offline.
\editmajor{The EEG pipeline is deliberately standard: raw EEG is bandpass-filtered and baseline-corrected with per-participant eye movement ICA artifact suppression; on each fixation a $[{-}200,+800]$\,ms epoch is locked to fixation onset, its FRP is extracted, and the decoder returns a target probability $\hat{y}_i^{\text{physio}} \in (0,1)$: indicating whether the user believes the fixated item is a target.}
OLIVE uses this decoder as a black box and is \editmajor{\emph{decoder-agnostic}: any calibrated per-fixation classifier that returns a target probability can be substituted}. Because the OLIVE algorithm continuously estimates each evidence channel's reliability, it compensates for decoder imprecision without requiring a perfect classifier.
\editminor{If a labeled calibration phase is unavailable, the decoder can instead be population-pretrained, since OLIVE already treats it as unreliable by default.}
Decoder training, FRP characterization, \editmajor{calibration cost,} and the rationale for excluding pupil dilation are detailed in Appendix~\ref{app:physio}.\editmajor{\footnote{Across all three studies, the grand-average target--distractor event-related potential (ERP) difference confirms a robust posterior P300 (300--600\,ms at Pz/POz), consistent with the decoder's discriminative signal (Figure~\ref{fig:frp_topomap_timeline}, Appendix~\ref{app:frp_topomap}).}}

\subsection{OLIVE: Online Latent Inference from Variable Evidence}
\label{sec:olive}

Our core contribution is OLIVE, an online adaptation layer that converts a frozen VLM into a task-personalized assistant by fusing mixed supervision streams during task execution.
OLIVE operates at the item level.
For each item currently visible in the scene, it maintains a posterior: how likely this item is a target.
This posterior is updated as new evidence arrives, which OLIVE does not treat as perfectly accurate, as supervised learning would.
Instead, OLIVE models each evidence channel as a noisy annotator with its own semantics and reliability, and jointly updates the posterior over item ``targetness'', the reliability of each evidence source, and the VLM that processes visual evidence.
The joint update is what makes OLIVE more than a label aggregator: the latent inference layer and the foundation VLM co-adapt, using mixed evidence to refine beliefs, which in turn refine the visual decision boundary that drives the assistive agent. In this section, we describe the core design of the OLIVE algorithm. 
Appendix~\ref{app:parameterization} provides a more in-depth look.

\paragraph{Task latent formulation.}
Let $i \in \{1,\dots,N\}$ index the items seen so far in the current task.
Each item has a latent binary target state $z_i \in \{0,1\}$ with prior $P(z_i{=}1) = \pi$ ($\pi \approx 0.30$ in SpaceShooter).
OLIVE accumulates three evidence types per item: visual crops $\mathcal{C}_i$, an \editmajor{\emph{explicit} (behavioral)} action label $\hat{y}_i^{\text{act}}$, and an \editmajor{\emph{implicit} (physiological)} soft label $\hat{y}_i^{\text{physio}}$, and maintains the posterior $\mu_i = P(z_i{=}1 \mid \mathcal{C}_i, \hat{y}_i^{\text{act}}, \hat{y}_i^{\text{physio}})$; these posteriors, representing how likely each item is a target, are the primary output consumed by the downstream assistance layer. We describe each evidence type in turn:


\paragraph{\editmajor{Prompt-tuning of a foundation model as the visual scorer.}}
\editmajor{The visual channel judges how much each item \emph{looks like} the target. A frozen VLM (CLIP ViT-B/16~\cite{radford2021learning}) embeds each item's image crop, and a small learnable ``virtual prompt''~\cite{zhou2022learning} stands in for a textual description of the target; the item's visual evidence $\Lambda_{\text{vis}}$ is how closely the two match. We freeze the VLM, as a frozen contrastive VLM is the standard open-vocabulary visual backbone across domains ~\cite{tiu2022expert,wang2022medclip,zhang2023biomedclip}, and tune only the virtual prompt, thereby allowing the agent to be steered toward a new target concept without hefty retraining. This establishes a parameter-efficient route~\cite{lester2021power,jia2022visual,shu2022test}, fast enough to be tuned and served online, and swappable for another backbone.}

\paragraph{\editmajor{Explicit evidence from positive-only user actions.}}
\editmajor{When the user engages an item (shoots it down), that is direct evidence the item is a target, contributing an explicit term $\Lambda_{\text{act}}$. Crucially, \emph{not} acting on an item is not evidence against it: under load, the user may simply not have reached a recognized target in time, so non-actions are treated as unlabeled rather than negative. OLIVE parameterizes this channel by two intuitive scalars: how reliably the operator acts on true targets, and how often they mistakenly fire on non-targets, both estimated online, so an imprecise user is absorbed as a \emph{less-trusted} channel rather than as wrong labels.}

\paragraph{\editmajor{Implicit evidence from a soft physiological annotator.}}
\editmajor{The instant the operator's gaze first lands on an item, a fixation-locked EEG epoch is decoded into a soft probability that the item is a target: a pre-action read of recognition, contributing an implicit term $\Lambda_{\text{physio}}$. Only the \emph{first} fixation on each item counts, so looking again cannot inflate belief. OLIVE treats the decoder as a noisy annotator and estimates online how discriminative its target vs.\ non-target responses are for \emph{this} operator in \emph{this} session, together with a per-fixation confidence, so a noisy session degrades gracefully into a down-weighted channel rather than corrupting the posterior.}

\paragraph{\editmajor{Why this parameterization.}}
\editmajor{Each channel is characterized by only a handful of interpretable reliability parameters: how much to trust the operator's actions, and how informative the EEG is. They are all estimated online. This is what lets OLIVE \emph{assume imperfection by default}: imprecise operators and noisy EEG raise a channel's inferred uncertainty rather than injecting corrupted evidence, so no single bad channel can derail the belief.}

\paragraph{Online EM}
OLIVE optimizes its parameters (evidence reliability and VLM virtual prompt) in an EM loop during task execution, triggered whenever the user takes an action or has a new fixation.

\emph{E-step.}
We estimate how likely each item is to be a target in the E-step. That is, for an item $i$, log-odds accumulate additively over all three evidence channels:
\begin{equation}
  \ell_i \;=\; \log\frac{\pi}{1{-}\pi}
            + \Lambda_{\text{vis}}(i)
            + \Lambda_\text{act}(\hat{y}_i^{\text{act}})
            + \Lambda_\text{physio}(\hat{y}_i^{\text{physio}}),
  \label{eq:estep}
\end{equation}
where $\Lambda_\text{physio}$ contributes only when a fixation has been observed for item $i$.

\editmajor{In addition, we add a \emph{prevalence anchoring} step that turns these log-odds into the posteriors $\mu_i$: it rescales beliefs so the agent always has something to suggest rather than collapsing to silence even when targets are rare. Being the same global shift applied to every item, it leaves the belief \emph{ranking}, and thus the cues, unchanged, so the exact value of $\pi$ never affects guidance (Proposition~\ref{prop:anchor}).}

\emph{M-step A: source reliability.}
Action channel parameters are updated by soft-weighted maximum likelihood with pseudo-count regularization (strength $\lambda$):
\begin{equation}
  \hat{\alpha} = \frac{\sum_i \mu_i \hat{y}_i^{\text{act}} + \lambda\alpha^{(0)}}{\sum_i \mu_i + \lambda},
  \qquad
  \hat{\beta} = \frac{\sum_i (1{-}\mu_i) \hat{y}_i^{\text{act}} + \lambda \beta^{(0)}}{\sum_i (1{-}\mu_i) + \lambda}.
\end{equation}
All seen items contribute to denominators, even those the user did not act on.

\editmajor{Raw estimates are then projected onto the ordered half-space $\alpha\geq\beta$, to prevent polarity inversion, and the physiological means $(\nu_+,\nu_-)$ are updated by the analogous quality-weighted soft MLE. Because the projection keeps $\alpha\geq\beta$ and $\nu_+\geq\nu_-$ every round, misidentified reliability never causes an evidence channel to become adversarial (Proposition~\ref{prop:polarity}).}
Together, Propositions~\ref{prop:anchor} and~\ref{prop:polarity} guarantee that OLIVE cannot collapse under any data sequence: the mean posterior is always anchored to $\pi$, and all annotator parameters always push posteriors in the semantically correct direction.

\emph{M-step B: prompt updates.}
\editmajor{The virtual prompt $\boldsymbol{\theta}_{\text{prompt}}$ receives one gradient step minimizing the quality-weighted soft binary cross-entropy between the VLM's crop scores and the current posteriors $\mu_i$; the VLM backbone stays frozen, so only the prompt adapts.}

\section{User Study 1 (US1): Convergence and Robustness}
US1 is an \editmajor{off-policy evaluation} of OLIVE's estimator across three evidence configurations (E, I, IE) and three model-type baselines (olive-base, TPT \cite{shu2022test}, TDA \cite{farina2024frustratingly}). \editmajor{We ask three research questions:}
\begin{itemize}
    \item \editmajor{\textbf{RQ1.1 Pareto frontier:} Does OLIVE-IE occupy a Pareto-superior operating point in the convergence rate $\times$ speed space, relative to single-evidence variants and algorithmic baselines?}
    \item \editmajor{\textbf{RQ1.2 Sensitivity to evidence quality:} How sensitive is OLIVE-IE's convergence to user-level evidence quality (shot accuracy and EEG classifier AUC), compared to single-modality variants and baselines?}
    \item \editmajor{\textbf{RQ1.3 Overload robustness:} Is OLIVE-IE's convergence robust to cognitive overload where single-modality variants and baselines degrade?}
\end{itemize}

\subsection{Study Design}

\begin{figure}[h]
  \centering
  \includegraphics[width=\columnwidth]{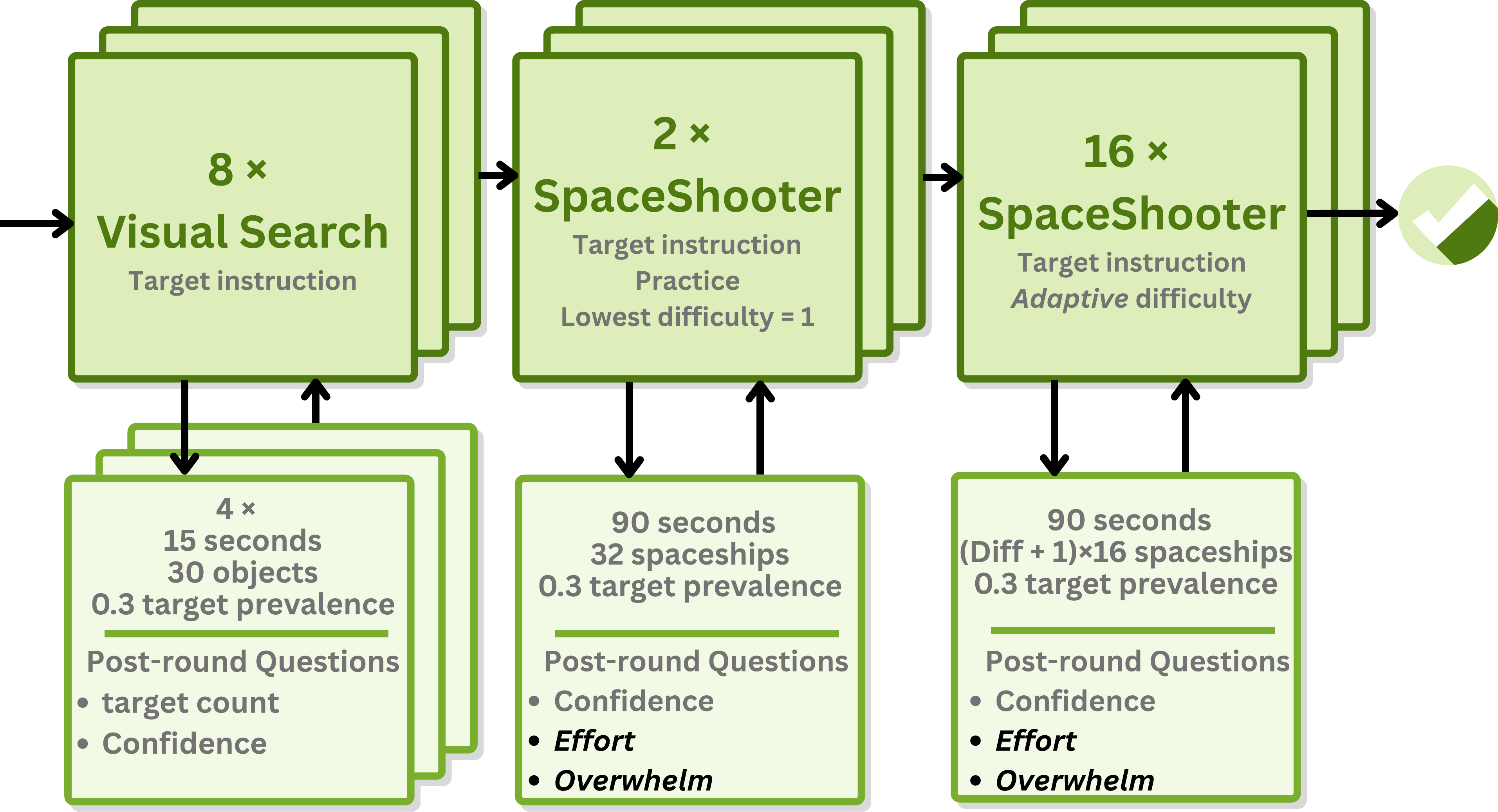}
  \caption{US1 session structure. Visual Search calibration, two practice rounds, then 16 scored SpaceShooter rounds (90\,s each); 18 rounds total.}
  \Description{A horizontal timeline diagram showing the US1 session structure: a Visual Search calibration phase followed by two practice SpaceShooter rounds and then 16 scored SpaceShooter rounds of 90 seconds each (18 rounds total, including the two practice rounds).}
  \label{fig:procedure_us1}
\end{figure}

US1 uses a within-subject offline replay design to characterize OLIVE across evidence configurations and algorithmic baselines before any live deployment \editminor{(session structure in Figure~\ref{fig:procedure_us1})}:
\textbf{E} (shots only; EEG-free deployment reference),
\textbf{I} (EEG fixation labels only; no shots),
\textbf{IE} (both channels; primary condition; E and I are ablations).
Three algorithmic baselines receive the same evidence streams as the corresponding OLIVE variant:
\textbf{olive-base} (M-step~A disabled; reliability weights frozen),
\textbf{TPT} (VLM prompt via entropy minimization~\cite{shu2022test}),
\textbf{TDA} (feature-cache retrieval, no backpropagation~\cite{farina2024frustratingly}).

The primary outcomes are \textbf{belief convergence} (OLIVE has formed a stable, correct internal ranking) and \textbf{guidance convergence}: the tighter, user-facing bar: all four of the agent's highlighted items must be true targets, sustained. A model that ranks correctly only \emph{occasionally} still misdirects the operator; guidance is only trustworthy when the agent is \emph{consistently} right. At the 30\% target base rate, chance guidance precision is roughly $0.3$; clearing the bar means an operator can follow the cues with confidence. Formal thresholds and rationale are in Appendix~\ref{app:convergence}. US1 asks whether OLIVE achieves this bar, how quickly, and whether it holds under degraded evidence and cognitive overload.

\subsection{Participants}
$N = 57$ participants (ages 19 to 35; $\bar{x} = 23.9$, $\sigma = 3.4$ years) completed US1.
Participants were recruited from the local university community.
All procedures were approved by the Columbia University Institutional Review Board, and all participants provided written informed consent prior to participation.
Participants reported normal or corrected-to-normal vision. All completed the visual search calibration phase, providing participant-specific physio decoders.

\subsection{Results and Discussion}

\subsubsection{\editmajor{RQ1.1: OLIVE Sits at the Pareto Frontier}}

\begin{figure}[t]
\centering
\includegraphics[width=\columnwidth]{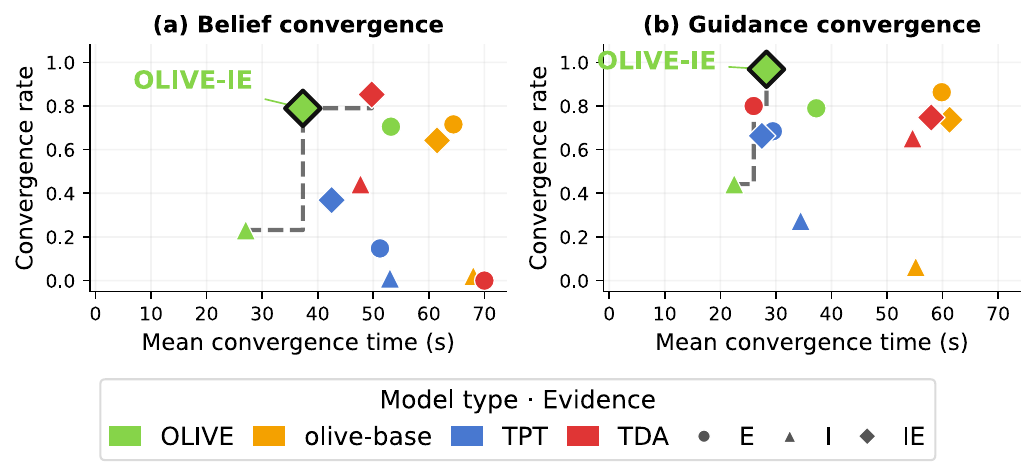}
\caption{OLIVE-IE occupies the upper-right corner of the convergence rate $\times$ speed space for both belief and guidance convergence ($N = 57$; 5 seeds). No other variant or baseline simultaneously matches its rate and speed. Best viewed in color.}
\Description{Two scatter plots side by side showing convergence rate (y-axis, 0--1) vs.\ convergence speed (x-axis, seconds) for multiple model types and evidence variants. Points are colored by model type and shaped by evidence variant (circles for E, triangles for I, diamonds for IE). OLIVE-IE diamond markers cluster at the upper-right, indicating the highest rate and fastest speed, forming the Pareto frontier.}
\label{fig:us1_pareto}
\end{figure}

\editminor{OLIVE-IE achieves the highest guidance convergence rate of all models (96.8\%, $n = 92/95$ rounds; 28.3\,s), dominating the closest rate competitor (olive-base-E: 86.3\%, 59.9\,s) and the fastest baseline (TPT-E: 29.5\,s but 68.4\%). TDA-E ranks well (80.0\% at 26.0\,s) yet registers \emph{zero} belief convergence: its top-4 is often correct while its scores never reach the AUC\,$>0.90$ threshold, ranking without a calibrated belief. For belief convergence, OLIVE-IE (78.9\%, 37.3\,s) is the fastest model on the high-rate frontier (TDA-IE: 85.3\% but 49.7\,s). No variant or baseline matches OLIVE-IE on both metrics (RQ1.1).}

\textbf{OLIVE-IE locks onto the correct target set within the first third of the round in 97\% of rounds (92/95; mean 28.3\,s of 90\,s), giving users trustworthy agent support for the rest of the round, and no user's shooting skill or EEG quality changes that.}

\subsubsection{\editmajor{RQ1.2: Sensitivity to Evidence Quality}}

OLIVE (E and IE variants) shows no sensitivity to shot accuracy (all $|r| < 0.32$, $p > .20$), while \editminor{olive-base-E's belief convergence time rises significantly with shot accuracy ($r = 0.564$, $p = .015$)}. OLIVE-IE's guidance convergence rate additionally improves with higher EEG quality ($r = 0.717$, $p < .001$), a positive dependency confirming the implicit channel's contribution.

Splitting participants by median offline validation AUC ($\text{cut} = 0.77$; Table~\ref{tab:damage_control}; 95 rounds per variant) shows OLIVE-I drops from 53\% to 36\% guidance convergence with low EEG (Fisher $p = .068$), while OLIVE-IE holds at 94\%--100\% ($p = .14$, n.s.): M-step~A suppresses $\Lambda_\text{physio}$ when $\hat{\nu}_+ - \hat{\nu}_-$ collapses, falling back to behavioral evidence before corrupted posteriors accumulate \editmajor{(RQ1.2)}.

\begin{table}[t]
\small
\caption{Guidance convergence rate by EEG quality group (median offline validation AUC split, $\text{cut}=0.77$). $\dagger p<.10$, one-sided Fisher exact.}
\label{tab:damage_control}
\begin{tabular}{lcccc}
\toprule
Model & Low EEG & High EEG & $\Delta$ (Low $-$ High) & $p$ \\
\midrule
OLIVE-I  & 36\% & 53\% & 17\,pp & .068$^\dagger$ \\
OLIVE-IE & 94\% & 100\% & 6\,pp  & .14 \\
\bottomrule
\end{tabular}
\end{table}

\subsubsection{\editmajor{RQ1.3: User Overload's Effect on Model Performance}}

Using Spearman correlations and mixed-effects models ($N = 57$ participants): \editminor{OLIVE-I shows a significant positive correlation between overwhelm and guidance convergence \emph{time} ($\rho = 0.539$, $p = .021$): more overwhelm predicts a longer time to converge, consistent with overload degrading the attentional signals that drive the EEG evidence.} olive-base-E shows the complementary failure: workload significantly predicts lower belief convergence rate ($r = -0.564$, $p = .015$), degrading the explicit channel. OLIVE-IE shows no significant relationship with any overload metric (all $|\rho| < 0.25$, $p > .11$) \editmajor{(RQ1.3)}.


\textbf{OV tasks demand the most from a user at precisely the moments when individual judgment is most fragile, and this is exactly when OLIVE-IE is most indispensable.}
\editminor{Operator failure concentrates at moments of peak cognitive load, exactly when single-modality systems also fail; OLIVE-IE is immune because the channels fail through independent mechanisms. Participants reflected this, shifting toward agent reliance as rounds progressed (\textit{``I often start with shooting by myself and then follow the agent later''}: P34), trusting the model for harder decisions as convergence arrived.}

\section{User Study 2 (US2): Live Agent Performance}
\label{sec:us2}

US2 asks whether OLIVE's convergence behavior is preserved in live deployment and whether it translates to measurable operational benefit. \editmajor{We frame US2 around two exploratory research questions rather than confirmatory hypotheses:}
\begin{itemize}
    \item \editmajor{\textbf{RQ2.1 Convergence in live deployment:} Does OLIVE achieve comparable or superior guidance convergence rates and speeds in live deployment (US2) relative to offline simulation (US1)? Active collaboration between the user and the agent generates richer behavioral evidence than the fixed offline replays in US1, which may strengthen rather than disrupt convergence.}
    \item \editmajor{\textbf{RQ2.2 EEG-augmented throughput advantage:} Does OLIVE-IE produce within-session target throughput improvement independent of operator shooting skill, and does OLIVE-E benefit high-skill operators disproportionately? Fixation-locked EEG responses occur regardless of shooting precision, which could make the implicit channel's contribution skill-independent.}
\end{itemize}

\subsection{Study Design}

US2 uses a between-subjects design with OLIVE deployed live; conditions are assigned once at session start to prevent belief contamination across conditions.
The VS calibration and adaptive difficulty protocol are identical to US1.
Participants were blind to condition assignment (single-blind design).
Unlike US1, US2 uses \textbf{TimeScaledRespawn}: a destroyed ship immediately respawns, keeping the battlefield populated throughout each 120\,s round (15 SS rounds: 2 acclimation $+$ 13 adaptive-difficulty).

\paragraph{Conditions and assignment.}
Control and Oracle bracket the performance range (unaided baseline and assistance operating on the ground-truth); IE is the primary OLIVE condition (Pareto-dominant in US1); E is retained as the EEG-free deployment baseline.

\paragraph{Primary outcome metric.}
We measure \textbf{target throughput}: confirmed target kills per 120\,s round, computed from per-round shot-event logs.
Fractional coverage (kills / targets present) is unsuitable in TimeScaledRespawn because destroyed targets immediately respawn, growing the denominator with performance; throughput uses fixed time as the denominator and avoids this bias.

\paragraph{Statistical approach.}
\editminor{All between-condition comparisons use Welch's $t$-test (no equal-variance or equal-size assumption); within-condition improvement uses one-sample $t$-tests against zero; linear mixed-effects (LME) models with by-participant random intercepts handle the imbalance through their likelihood framework.}

\subsection{Agent Support Policies: Visual Guidance Cues.}
The OLIVE agent (presented to participants as their ``wingman'') withholds all guidance when OLIVE's beliefs are still in flux; actively misleading users with uncertain cues is worse than no guidance at all. Once beliefs concentrate sufficiently, three cue modalities activate: \textbf{out-of-view directional lines} (inspired by~\cite{gustafson2008wedge,gruenefeld2018beyond}) pointing toward high-belief targets outside the user's gaze, \textbf{outline highlights} on in-view suspects, and \textbf{aim assist} that engages when the user's crosshair approaches a top-ranked target. Cue intensity is graded continuously by belief strength, requiring no task-specific calibration. \editmajor{Concretely, per-item posteriors are exponential moving average (EMA)-smoothed and gated on their spread: if the P90$-$P50 gap of the belief distribution is below a small threshold, the agent stays silent (an uncertain agent misleads); otherwise items above the median receive graded soft cues and the top-4 ranked items receive hard cues (sticky top-4, matching the Precision@4 guidance metric).}

\editmajor{\textit{Oracle cueing and cue density.} The Oracle baseline ignores OLIVE's beliefs and marks every currently-\emph{attacking} ground-truth target ship (a ship within bombing range, $\approx$30\,u (Unity units) of the player) using the same indicators. OLIVE (E/IE) instead applies hard cues only to its top-4 ($k{=}4$) ranked items, plus continuously graded soft cues. Per frame, this is roughly an order of magnitude fewer salient cues than Oracle, which marks all live attacking targets (tens of ships at once); cue density is thus \emph{not} matched across conditions. Crucially, this denser Oracle cueing did not raise reported overwhelm (post-round overwhelm ratings are comparable across all conditions, with none elevated), because the cues are deliberately subtle and nonoccluding (thin out-of-view lines, outline highlights, graded aim-assist) rather than screen-filling markers. Full algorithmic details (suppression threshold, quantile mapping, smoothing) are in Appendix~\ref{app:cue_policy}.}

\begin{figure}[h]
  \centering
  \begin{minipage}[t]{0.49\columnwidth}
    \includegraphics[width=\linewidth]{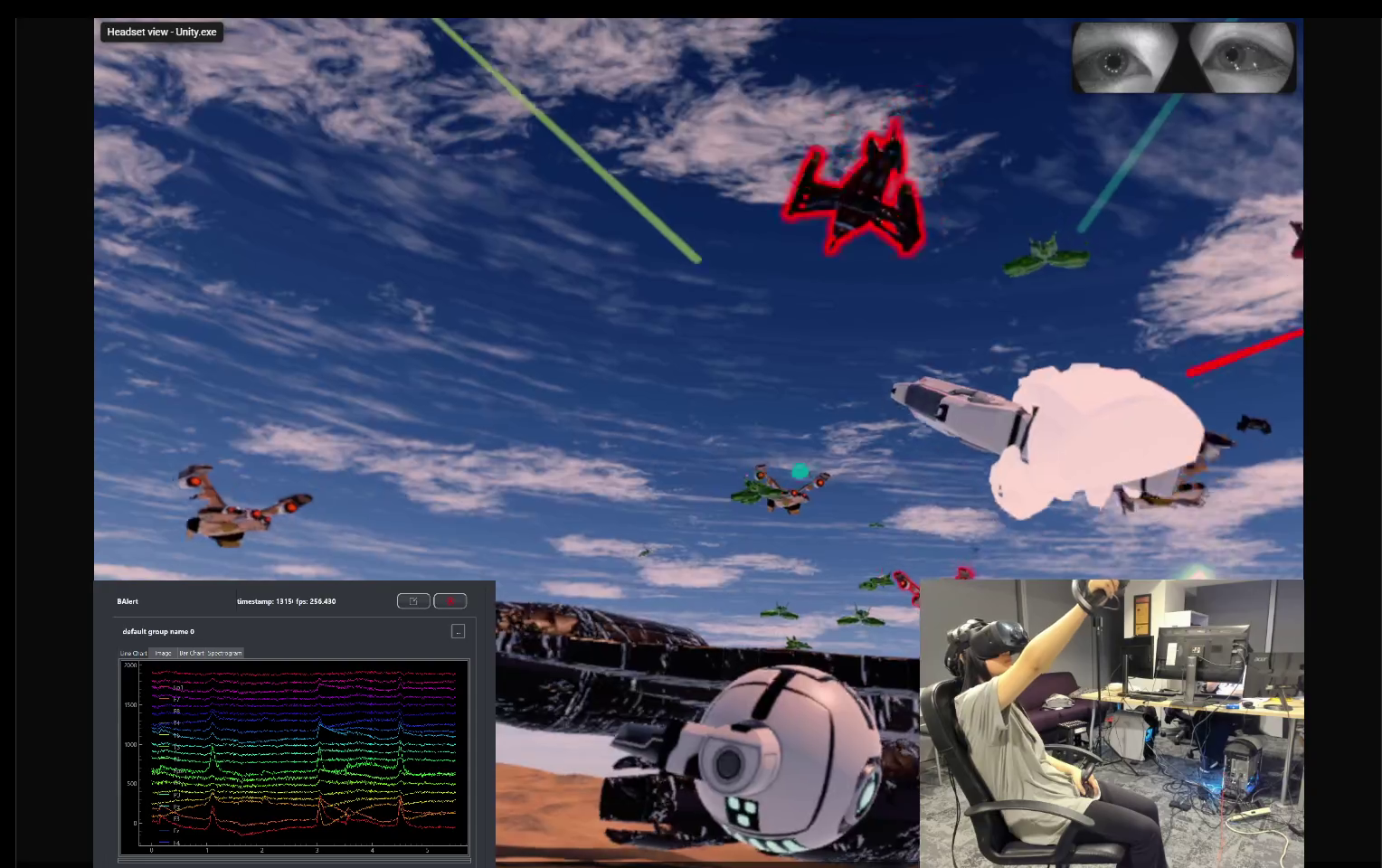}
  \end{minipage}\hfill
  \begin{minipage}[t]{0.49\columnwidth}
    \includegraphics[width=\linewidth]{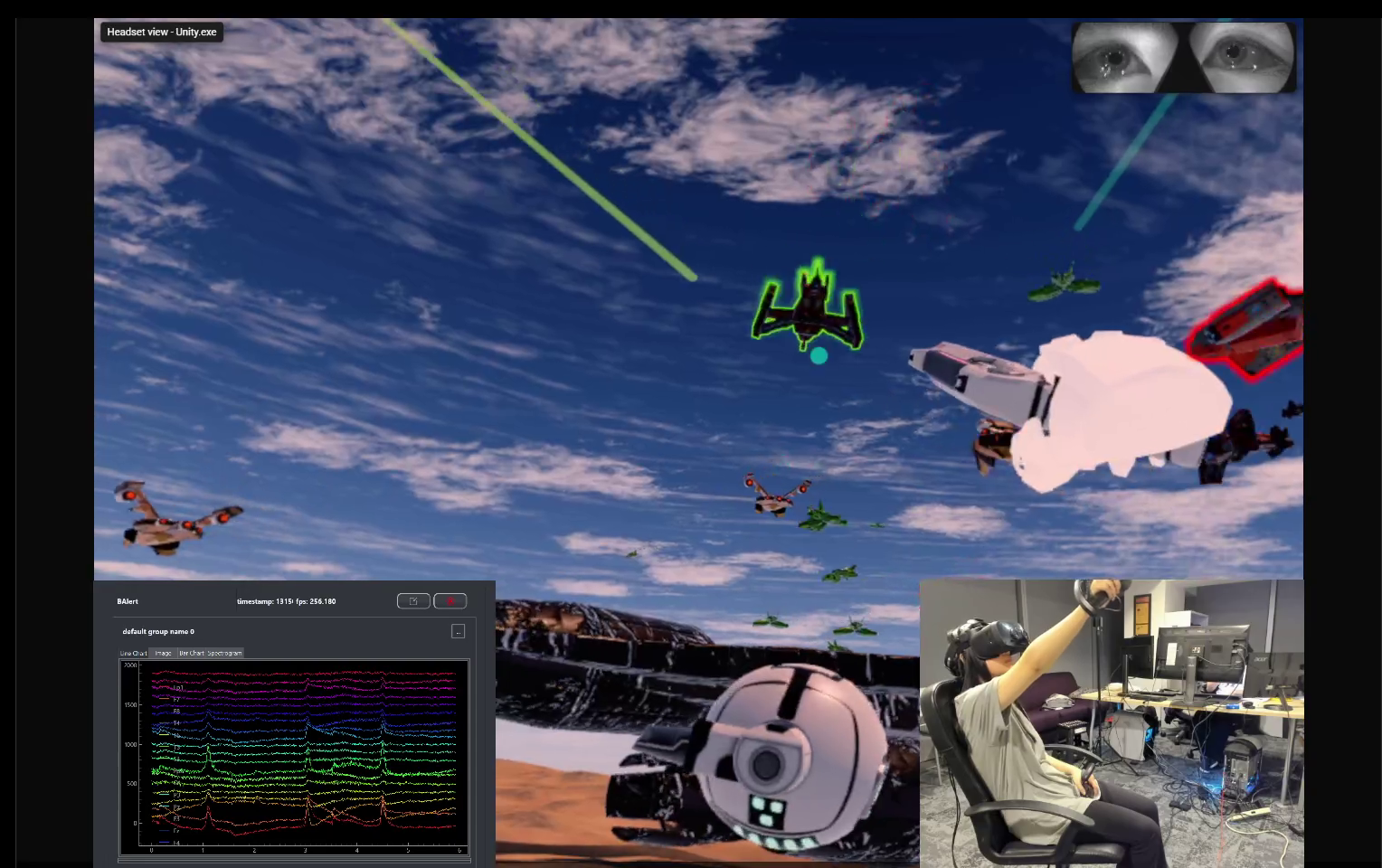}
  \end{minipage}
  \caption{Agent cues. \textit{Left:} out-of-view directional lines toward high-belief targets. \textit{Right:} aim-assist highlight on OLIVE's top-ranked item.}
  \Description{Two side-by-side screenshots from the SpaceShooter XR environment. Left: the player's view with bright directional lines pointing toward the edges of the screen, indicating high-belief targets outside the current field of view. Right: the player aims at an enemy ship that is highlighted with a subtle outline, indicating OLIVE's top-ranked target.}
  \label{fig:task_cues}
\end{figure}

\subsection{Participants}

\editmajor{$N = 46$ participants who had completed US1 returned for US2 (ages 19 to 35), with a minimum 4-day washout between sessions.} \editmajor{Conditions were assigned \emph{online} by covariate-adaptive minimization (Pocock--Simon style~\cite{pocock1975sequential}) on each operator's US1 starting difficulty as a skill proxy placing each arriving participant in the least-populated condition within their difficulty stratum (Appendix~\ref{app:us2_assignment}). This balances baseline ability across conditions rather than raw counts, so cell sizes are unequal by design; the per-condition (and per-analysis) $n$ are reported with each table. A one-way ANOVA on adaptive difficulty confirmed no difference between conditions (max $F = 1.99$, $p = .13$), so the imbalance affects statistical power, without introducing bias, plus the primary outcomes are within-person changes, for which each operator is their own baseline.}

\subsection{Results and Discussion}

\subsubsection{\editmajor{RQ2.1}: OLIVE Guidance Convergence Is Maintained in Live Deployment}

A key question is whether active user participation alters the agent's convergence behavior relative to the offline simulation in US1. Table~\ref{tab:us2_convergence} compares per-round guidance and belief convergence rates between US1 (offline simulation) and US2 (live deployment).

\editmajor{Both IE and E achieve near-total \textbf{guidance convergence} in the live task (IE $99.4\%$, E $97.1\%$), matching or exceeding US1 benchmarks of 96.8\% (IE) and 79.0\% (E). Belief convergence rates also improve (IE: 84.6\% vs.\ 78.9\%; E: 82.9\% vs.\ 70.5\%), consistent with denser behavioral evidence from live operator decisions. Convergence \emph{times} are longer than US1 (IE guidance: 74.5 vs.\ 28.3\,s), expected given the more demanding live environment (continuous fleet turnover, escalating difficulty). Convergence thus occupies a larger fraction of the 120\,s live rounds than in offline replay ($\approx$62\% vs.\ a third); the remaining post-convergence window is nonetheless sufficient to produce the throughput gains reported in \S5.4.2. Active deployment thus does not disrupt convergence rates despite the harder conditions (RQ2.1).}

\begin{table}[h]
  \centering
  \caption{OLIVE convergence: US2 live deployment vs.\ US1 offline simulation. Times are mean $\pm$ SE (seconds).}
  \label{tab:us2_convergence}
  \small
  \begin{tabular}{l l c c c c}
    \hline
    \textbf{Cond.} & \textbf{Type} & \textbf{US2 Rate} & \textbf{US2 Time} & \textbf{US1 Rate} & \textbf{US1 Time} \\
    \hline
    IE & Guidance & \editmajor{$99.4\%$} & \editmajor{$74.5 \pm 4.2$}  & $96.8\%$ & $28.3 \pm 2.1$ \\
    IE & Belief   & \editmajor{$84.6\%$} & \editmajor{$65.3 \pm 4.2$}  & $78.9\%$ & $37.3 \pm 2.3$ \\
    E  & Guidance & \editmajor{$97.1\%$} & \editmajor{$76.2 \pm 4.8$} & $79.0\%$ & $37.3 \pm 3.5$ \\
    E  & Belief   & \editmajor{$82.9\%$} & \editmajor{$63.0 \pm 4.7$} & $70.5\%$ & $53.2 \pm 3.0$ \\
    \hline
  \end{tabular}
\end{table}

\subsubsection{\editmajor{RQ2.2}: OLIVE-IE Produces Reliable Within-Session Throughput Improvement}

\paragraph{Within-session improvement.}
Table~\ref{tab:us2_delta} reports mean target throughput in the first three and last three rounds of each session.
\editmajor{OLIVE-IE shows the largest within-session improvement and is the only condition reaching significance ($\Delta = +0.031$ kills/s, one-sample $p = .003$).
OLIVE-E and Control show marginal positive trends ($p = .094$ and $p = .091$); Oracle's improvement did not reach significance ($\Delta = +0.024$, $p = .105$).}
\editminor{Crucially, OLIVE clears its per-item beliefs at the start of every round (§\ref{sec:tasks}); only the prompt and reliability weights carry over.}
The within-session throughput improvement therefore reflects \emph{operator}-side learning: over repeated rounds, participants learn to read and act on OLIVE's within-round guidance cues more efficiently, trusting the highlight and aim-assist earlier within each round.
Oracle's non-significant delta is consistent with this interpretation: ground-truth guidance is reliable from second 1 of every round, so there is nothing for the operator to calibrate: no latency in useful cues, no need to wait for the agent to converge, and therefore no ramp-up of exploitation across rounds.
\begin{table}[h]
  \centering
  \caption{Target throughput (kills/s) early vs.\ late in session ($\pm$SE). $\dagger p<.10$, $* p<.05$, $** p<.01$; one-sample $t$-test against zero.}
  \label{tab:us2_delta}
  \small
  \begin{tabular}{l c c c c c}
    \hline
    \textbf{Condition} & \textbf{$n$} & \textbf{Early} & \textbf{Late} & \textbf{$\Delta$} & \textbf{$p$} \\
    \hline
    Control   & \editmajor{12} & \editmajor{$.112 \pm .016$} & \editmajor{$.126 \pm .019$} & \editmajor{$+.014 \pm .007$} & \editmajor{$.091^\dagger$} \\
    OLIVE-E   & \editmajor{10} & \editmajor{$.080 \pm .006$} & \editmajor{$.101 \pm .013$} & \editmajor{$+.021 \pm .010$} & \editmajor{$.094^\dagger$} \\
    OLIVE-IE  & \editmajor{12} & \editmajor{$.077 \pm .009$} & \editmajor{$.108 \pm .008$} & \editmajor{$\mathbf{+.031 \pm .007}$} & \editmajor{$.003^{**}$} \\
    Oracle    & \editmajor{8}  & \editmajor{$.047 \pm .012$} & \editmajor{$.071 \pm .008$} & \editmajor{$+.024 \pm .013$} & \editmajor{$.105$} \\
    \hline
  \end{tabular}
\end{table}

\paragraph{OLIVE-IE improves uniformly across skill levels.}
\editminor{Table~\ref{tab:us2_skill_mod} correlates each participant's baseline shooting precision with within-session improvement. OLIVE-IE's gain is essentially orthogonal to baseline skill (low- and high-precision operators improve equally, consistent with the agent compensating for whatever the operator cannot cover), whereas OLIVE-E's improvement concentrates among lower-skill operators for whom shot-pattern guidance helps most.}

\begin{table}[h]
  \centering
  \caption{Skill moderation: correlation between baseline hit rate and $\Delta$ throughput. Negative $r$ = lower-skill participants improve more.}
  \label{tab:us2_skill_mod}
  \small
  \begin{tabular}{l c c c c}
    \hline
    \textbf{Condition} & \textbf{$n$} & \textbf{$r$} & \textbf{slope} & \textbf{$p$} \\
    \hline
    Control   & \editmajor{12} & \editmajor{$+0.25$} & \editmajor{$+0.137$} & \editmajor{$.43$} \\
    OLIVE-E   & \editmajor{10} & \editmajor{$-0.63$} & \editmajor{$-0.457$} & \editmajor{$.05$} \\
    OLIVE-IE  & \editmajor{12} & \editmajor{$\mathbf{-0.02}$} & \editmajor{$-0.010$} & \editmajor{$.95$} \\
    Oracle    & \editmajor{8} & \editmajor{$+0.06$} & \editmajor{$+0.048$} & \editmajor{$.90$} \\
    \hline
  \end{tabular}
\end{table}

\editminor{Together these address \textbf{RQ2.2}: OLIVE-IE's within-session gain is significant and skill-independent, while OLIVE-E's concentrates among lower-skill operators.}

\editmajor{\textit{Operators follow the agent's cues.} A direct measure of augmentation is whether operators actually act on OLIVE's guidance. After each round, participants rated, on a 1--7 scale, how much they trusted the agent (\texttt{q\_trust}), relied on it to \emph{look} at targets (\texttt{q\_rely\_look}), and relied on it when \emph{shooting} (\texttt{q\_rely\_shoot}). Table~\ref{tab:trust_reliance} reports per-condition means for both live-deployment studies. Reliance and trust increase monotonically from Control to OLIVE-E to OLIVE-IE to Oracle in US2, and the same ordering holds in US3, where OLIVE-IE exceeds OLIVE-E on all three items. In US2, Oracle reaches statistical separation from Control on agent trust ($t=2.69$, $p=.043$) and gaze reliance ($t=2.97$, $p=.039$). OLIVE-IE's higher gaze-reliance than OLIVE-E (US2 $3.90$ vs.\ $2.99$; US3 $4.23$ vs.\ $3.44$) indicates that EEG-informed guidance produces more directionally specific cues that operators find worth following, corroborating the throughput advantage.\footnote{\editmajor{US2 logged single-item per-round workload and overwhelm ratings (as used in the US1 overload analysis); the full multi-item raw NASA-TLX was administered only in US3, where the silent switch made workload the focal construct.}}}

\begin{table}[h]
  \centering
  \caption{\editmajor{Post-round operator trust and reliance ratings (1--7 scale, per-condition mean) for US2 and US3. Control omitted (no agent to rate). Bold = highest per column.}}
  \label{tab:trust_reliance}
  \small
  \begin{tabular}{l c c c c}
    \hline
    \textbf{Condition} & \textbf{$n$} & \textbf{Trust} & \textbf{Rely (look)} & \textbf{Rely (shoot)} \\
    \hline
    \multicolumn{5}{l}{\emph{User Study 2 (live deployment)}} \\
    OLIVE-E   & \editmajor{12} & \editmajor{3.89} & \editmajor{2.99} & \editmajor{4.62} \\
    OLIVE-IE  & \editmajor{12} & \editmajor{3.95} & \editmajor{3.90} & \editmajor{4.64} \\
    Oracle    & \editmajor{8}  & \textbf{\editmajor{4.80}} & \textbf{\editmajor{4.35}} & \textbf{\editmajor{5.07}} \\
    \hline
    \multicolumn{5}{l}{\emph{\editmajor{User Study 3 (silent target change)}}} \\
    OLIVE-E   & \editmajor{13} & \editmajor{3.91} & \editmajor{3.44} & \editmajor{4.62} \\
    OLIVE-IE  & \editmajor{11} & \editmajor{4.57} & \editmajor{4.23} & \textbf{\editmajor{5.06}} \\
    Oracle    & \editmajor{8}  & \textbf{\editmajor{4.76}} & \textbf{\editmajor{4.64}} & \editmajor{4.82} \\
    \hline
  \end{tabular}
\end{table}

\editmajor{\textbf{OLIVE-IE delivers the largest, skill-independent within-session improvement, and operators increasingly act on its guidance as the session unfolds.}}
US3 tests whether OLIVE maintains this partnership when the target definition itself changes without warning.

\section{User Study 3 (US3): Adapting to Silent Target Changes}
\label{sec:us3}

US1 and US2 established that OLIVE converges reliably and that convergence translates to operational benefit under sustained load.
US3 asks a harder question: what happens when the target changes \emph{mid-round}, without warning and without any system reset?
Real OV scenarios routinely demand this kind of reorientation (e.g., an air-traffic controller reassigned to a different sector or a medic whose patient profile changes), and any viable agent must handle it without collapsing or requiring manual recalibration.

\editmajor{As in US2, we frame US3 around exploratory research questions:}
\begin{itemize}
    \item \editmajor{\textbf{RQ3.1 Faster model reconvergence:} Does OLIVE-IE achieve shorter post-switch guidance and belief reconvergence times, and higher reconvergence rates, than OLIVE-E? Fixation-locked EEG responses during the operator's perceptual reorientation begin shifting model priors before any confirming shot fires, potentially providing an earlier reconvergence signal than behavioral feedback alone.}
    \item \editmajor{\textbf{RQ3.2 Within-session readaptation:} Does OLIVE-IE show with\-in-session improvement in new-target throughput across repeated switch exposures? Since beliefs reset each round (§\ref{sec:tasks}), cross-round improvement reflects operator-side learning.}
    \item \editmajor{\textbf{RQ3.3 EEG augmentation advantage:} Does OLIVE-IE show a larger within-session throughput gain than OLIVE-E? If the advantage is EEG-driven, behavioral-only learning (OLIVE-E) should show a smaller gain, isolating the implicit channel's operational contribution.}
\end{itemize}

\subsection{Study Design}

US3 shares the between-subjects design (Control, Oracle, E, IE) and VS calibration protocol with US2.
At a prespecified time within each round, the enemy signature changes without visible system feedback: the current target type becomes friendly and a previously ignored type becomes the new target.
Participants were \emph{not informed} that a switch would occur and had to infer it from information such as ship's attacking behavior and the scores. During a round, the target switch happens at one of 75, 90, 105s, randomly drawn.
Rounds are lengthened from US2's 120 s to 200 s so that even the latest switch (105 s) leaves a ≥95 s post-switch window.
OLIVE receives \emph{no explicit reset signal}: readaptation must emerge from shifts in the evidence streams.

\paragraph{Primary outcome metric.}
We measure \textbf{new-target throughput}: confirmed hits on the new target category per second in the post-switch window ($\text{second\_target\_shot} \,/\, (200 - t_\text{switch})$\,s), computed per round.
Because switch times are balanced across rounds (three rounds at each of 75, 90, and 105\,s), participant-level means are unconfounded by the switch timing distribution.
Absolute post-switch hit counts cannot serve as the primary metric: an earlier switch yields a proportionally longer post-switch window and therefore mechanically more hits, independent of how well the participant reoriented.
The same statistical approach as US2 applies here: \editmajor{unequal condition sizes (reported per table)} are handled by Welch's $t$-tests for between-condition comparisons and by the LME likelihood framework (see US2 Study Design, Statistical approach).
\subsection{Participants}

\editmajor{$N = 45$ participants who had completed US2 returned for US3 (ages 19 to 30), with a minimum 4-day washout between sessions; participants were naive to the existence of a target switch prior to the study.} \editmajor{Conditions were assigned by the same covariate-adaptive minimization as US2 (Appendix); a one-way ANOVA on adaptive difficulty again confirmed no difference between conditions (max $F = 0.22$, $p = .88$), so the unequal cells affect power, not bias.}

\subsection{Results and Discussion}

\subsubsection{\editmajor{RQ3.1}: OLIVE-IE Reconverges Faster After the Switch}

As a validity check, near-universal switch detection was observed across all conditions (Control: $100\%$; E: $98.6\%$; IE: $100\%$; Oracle: $100\%$), confirming that any performance differences reflect reorientation ability, not unawareness.

Table~\ref{tab:us3_reconv} reports formal post-switch reconvergence using the same criteria as US1 and US2.
\editmajor{Both conditions achieve 100\% \textbf{guidance reconvergence} (Precision@4 $=$ 1.0 and top-4 ranking stability $\geq$ 0.75, sustained $\geq$ 10\,s; Appendix~F); OLIVE-IE does so $14.7$\,s faster on average ($53.8 \pm 4.1$\,s vs.\ $68.5 \pm 3.8$\,s; $t = -2.67$, $p = .008^{**}$).
For \textbf{belief reconvergence} (additionally requiring belief spread $\geq 0.02$), OLIVE-IE succeeds in more rounds ($56\%$ vs.\ $37\%$) and with a shorter mean time ($25.1 \pm 4.0$\,s vs.\ $30.7 \pm 5.6$\,s), though the time difference is not significant ($t = -0.81$, $p = .418$).}
EEG provides a \emph{leading} signal: fixation-locked responses during the participant's own perceptual reorientation begin shifting the model's priors before any confirming shot on the new target arrives.
\editmajor{This addresses \textbf{RQ3.1}: with the expanded sample, OLIVE-IE's guidance reconvergence is significantly faster than the behavior-only agent.}

\begin{table}[h]
  \centering
  \caption{Post-switch formal reconvergence (OLIVE-E: \editmajor{$n{=}10$}; OLIVE-IE: \editmajor{$n{=}10$}). Times are mean $\pm$ SE (seconds). $** p < .01$.}
  \label{tab:us3_reconv}
  \small
  \resizebox{\columnwidth}{!}{%
\begin{tabular}{l c c c c c}
    \hline
    & \multicolumn{2}{c}{\textbf{Guidance}} & & \multicolumn{2}{c}{\textbf{Belief}} \\
    \cline{2-3}\cline{5-6}
    \textbf{Cond.} & Rate & Time (s) & & Rate & Time (s) \\
    \hline
    OLIVE-E  & $100\%$ & \editmajor{$68.5 \pm 3.8$} & & \editmajor{$37\%$} & \editmajor{$30.7 \pm 5.6$} \\
    OLIVE-IE & $100\%$ & \editmajor{$\mathbf{53.8 \pm 4.1}$} & & \editmajor{$\mathbf{56\%}$} & \editmajor{$\mathbf{25.1 \pm 4.0}$} \\
    \hline
    IE vs.\ E & — & \editmajor{$t = -2.67,\ p = .008^{**}$} & & — & \editmajor{$t = -0.81,\ p = .418$} \\
    \hline
  \end{tabular}%
}
\end{table}

Faster model reconvergence has a direct behavioral consequence: as OLIVE-IE's guidance becomes reliable sooner after each switch, the operator--agent team builds a cumulative reorientation advantage across repeated exposures.

\subsubsection{\editmajor{RQ3.2}: OLIVE-IE Produces Reliable Within-Session Readaptation}

Table~\ref{tab:us3_delta} reports new-target throughput in the first three and last three rounds of each session.
\editmajor{\textbf{OLIVE-IE shows the largest within-session improvement} ($\Delta = +0.070$ kills/s, one-sample $p = .001^{**}$).
Oracle also improves significantly ($p = .012^*$), consistent with participants learning to follow the agent's immediate reorientation cue more efficiently across repeated switches; OLIVE-E and Control show positive, marginal trends ($p = .052^\dagger$ and $p = .061^\dagger$).}

\begin{table}[h]
  \centering
  \caption{New-target throughput (kills/s in post-switch window) early vs.\ late session. $\dagger p<.10$, $* p<.05$, $** p<.01$.}
  \label{tab:us3_delta}
  \small
  \resizebox{\columnwidth}{!}{%
\begin{tabular}{l c c c c c}
    \hline
    \textbf{Condition} & \textbf{\textit{n}} & \textbf{Early} & \textbf{Late} & \textbf{$\Delta$} & \textbf{\textit{p}} \\
    \hline
    Control   & \editmajor{12} & \editmajor{$0.209 \pm 0.029$} & \editmajor{$0.233 \pm 0.031$} & \editmajor{$+0.024 \pm 0.011$} & \editmajor{$.061^\dagger$} \\
    OLIVE-E   & \editmajor{13} & \editmajor{$0.215 \pm 0.016$} & \editmajor{$0.249 \pm 0.010$} & \editmajor{$+0.034 \pm 0.015$} & \editmajor{$.052^\dagger$} \\
    OLIVE-IE  & \editmajor{12} & \editmajor{$0.196 \pm 0.014$} & \editmajor{$0.266 \pm 0.021$} & \editmajor{$\mathbf{+0.070 \pm 0.012}$} & \editmajor{$.001^{**}$} \\
    Oracle    & \editmajor{8} & \editmajor{$0.360 \pm 0.015$} & \editmajor{$0.406 \pm 0.019$} & \editmajor{$+0.046 \pm 0.008$} & \editmajor{$.012^{*}$} \\
    \hline
  \end{tabular}%
}
\end{table}

\textit{Skill moderation.}
\editminor{OLIVE-IE's improvement is essentially skill-independent ($r=-0.24$, $p=.57$; Table~\ref{tab:us3_skill_mod}), as in US2 ($r=-0.02$); unaided Control's readaptation gain, by contrast, is strongly skill-dependent ($r=+0.68$, $p=.01$): without agent support, efficient readaptation is largely confined to higher-skill operators. EEG-augmented guidance thus benefits operators largely regardless of shooting precision.}

\begin{table}[h]
  \centering
  \caption{Skill moderation in US3: correlation between baseline hit rate and $\Delta$ new-target throughput.}
  \label{tab:us3_skill_mod}
  \small
  \begin{tabular}{l c c c c}
    \hline
    \textbf{Condition} & \textbf{\textit{n}} & \textbf{\textit{r}} & \textbf{slope} & \textbf{\textit{p}} \\
    \hline
    Control   & \editmajor{12} & \editmajor{$+0.68$} & \editmajor{$+0.928$} & \editmajor{$.01^{*}$} \\
    OLIVE-E   & \editmajor{13} & \editmajor{$-0.11$} & \editmajor{$-0.148$} & \editmajor{$.73$} \\
    OLIVE-IE  & \editmajor{12} & \editmajor{$\mathbf{-0.24}$} & \editmajor{$-0.260$} & \editmajor{$.57$} \\
    Oracle    & \editmajor{8} & \editmajor{$+0.42$} & \editmajor{$+0.560$} & \editmajor{$.30$} \\
    \hline
  \end{tabular}
\end{table}

\subsubsection{\editmajor{RQ3.3}: EEG Integration Produces a Larger Within-Session Gain}

\editminor{The readaptation advantage is specifically EEG-driven: OLIVE-IE's within-session gain exceeds OLIVE-E's, while OLIVE-E does not clearly separate from Control (Table~\ref{tab:us3_delta}). This addresses \textbf{RQ3.3}: the behavioral channel alone leaves a shorter window of reliable guidance per switch, giving the operator less consistent experience with reliable guidance and therefore less opportunity to learn to exploit it across rounds.}

\section{Discussion and Future Work}
\label{sec:discussion}

\editminor{An OLIVE agent learns, in real time, from a user's physiology and behavior and readapts as priorities shift. We discuss what this means beyond the immediate setting.}

\emph{Effective human--AI teaming requires the agent to earn the user's trust \textbf{through demonstrated alignment with user intent, not through preassigned configuration.}}
Traditional automation fixes roles at design time; \editminor{OLIVE inverts this by building trust incrementally: operators relied on it to decide where to look progressively more from Control to OLIVE-E to OLIVE-IE across the session (Table~\ref{tab:trust_reliance}).}
\editmajor{This reliance \emph{grew within each session}: operators relied on the agent more from round to round (Figure~\ref{fig:reliance_growth}), the behavioral signature of within-session augmentation. The pattern is most diagnostic under the silent target switch (US3), where gaze-reliance on the EEG-informed agent kept climbing ($+0.07$ rating units per round, $p < .05$) while reliance on the behavior-only agent eroded.}
\editmajor{Preserving human agency at the decision boundary also appears critical~\cite{bennett2023agency,gray2019ethical}: despite continuous ground-truth guidance, Oracle did not out-improve OLIVE-IE within-session (US2 $p = .105$, n.s.; US3 $\Delta = +0.046$ vs.\ OLIVE-IE's $+0.070$) and drew the highest frustration in US3 ($M = 5.33$ vs.\ $M = 3.2$--$3.5$ for C/E/IE). Oracle's always-correct cues leave the operator little to calibrate, consistent with Lee and See's~\cite{lee2004trust} prediction that trust calibrated to demonstrated reliability outperforms trust granted by design.}

\begin{figure}[t]
  \centering
  \includegraphics[width=\columnwidth]{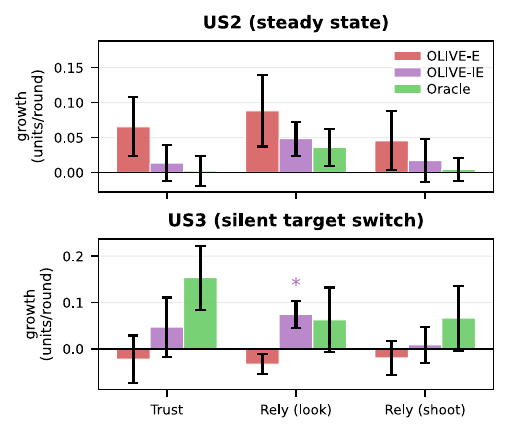}
  \Description{Two stacked bar charts showing within-session reliance growth by condition. The top panel (US2, steady state) and the bottom panel (US3, silent target switch) each plot the per-participant slope of three post-round ratings (Trust, Rely-look, Rely-shoot) regressed on round index, averaged per condition for OLIVE-E, OLIVE-IE, and Oracle, with standard-error bars. Positive bars indicate operators relied on the agent more as the session progressed; in US3 the OLIVE-IE Rely-look slope is significantly positive (single star, p less than 0.05).}
  \caption{\editmajor{Within-session \textbf{reliance growth} by condition: per-participant slope of each post-round rating (Trust, Rely-look, Rely-shoot) regressed on round index, averaged per condition (mean $\pm$ SE; $^{*}p<.05$, one-sample $t$-test of slope $\neq 0$). Positive values indicate operators relied on the agent \emph{more} as the session progressed.}}
  \label{fig:reliance_growth}
\end{figure}

\emph{OLIVE's reliability weights provide a principled path toward \textbf{task-tunable precision--coverage trade-offs.}}
OLIVE's per-source reliability weights can be initialized conservatively or updated asymmetrically, shifting from coverage-maximizing to precision-first behavior without structural changes. The E and IE conditions already mark two points on this curve: \editminor{OLIVE-E's improvement concentrated among lower-skill operators while OLIVE-IE's was essentially skill-independent}, making the evidence-fusion posture an explicit, auditable design parameter rather than an implicit property of training data~\cite{parasuraman1997humans}.
\editminor{OLIVE-E (no neural hardware) suffices for predominantly lower-skill pools; OLIVE-IE is needed for consistent gains across the full skill spectrum.}

\editminor{The weaker compensatory OLIVE-E effect in US3 ($r = -0.11$, ns, vs.\ $r = -0.63$ in US2) is mechanistically informative: OLIVE-E must first \emph{detect} the silent switch from shifting shot patterns, which lower-precision operators muddy, whereas IE's fixation-locked EEG signals perceptual reorientation independent of shot accuracy.}

\emph{The gap from lab to field is \textbf{a design problem about what constitutes sufficient evidence, not necessarily a signal-processing one.}}
\editminor{These properties recur across operator settings such as surveillance, radiology, and ATC.} OLIVE assumes three conditions each deployment must revalidate: (1) a known target prevalence $\pi$ for anchoring posteriors, (2) fixation-to-item alignment for EEG epoch extraction, and (3) a confirmation signal (trigger pull) for positive labels; domains vary on all three (radiology lacks explicit confirmation, ATC has coarser fixation mapping, surveillance may have unknown prevalence). Portable EEG simulation (Appendix~\ref{app:portable_eeg}) suggests the signal-processing bar is clearable with consumer hardware, and passive-BCI deployments in live ATC~\cite{arico2016passive} and biocybernetic loops without explicit confirmation~\cite{pope1995biocybernetic} show conditions (2) and (3) are achievable. \editminor{The harder open problem is condition (1): unknown or drifting prevalence needs adaptive anchoring beyond OLIVE's fixed $\pi$; richer \emph{hierarchical} interfaces would also extend OLIVE's binary posterior to multi-class beliefs, and visually subtle domains such as radiology would swap in a domain-pretrained frozen backbone (e.g., BiomedCLIP~\cite{zhang2023biomedclip}).}

\emph{Several extensions would deepen the evidence base and broaden applicability.}
\editminor{Multi-fixation fusion captures value the current architecture discards: US1 used only the first fixation yet reached 96.8\% guidance convergence. The reliability weighting is channel-count agnostic, so dwell time, voice annotation, pupil dilation, or a response-locked ERN channel (Appendix~\ref{app:ern}) needs only a compatible soft annotator.}

\textit{Limitations.}
\editminor{Three further caveats bound our claims.} \editmajor{We release the EEG decoder's per-fixation output probabilities alongside the code and data, and OLIVE is decoder-agnostic: its self-gating tolerates below-median decoders (Table~\ref{tab:damage_control}), so replication needs only a compatible per-fixation classifier. Our samples remain modest for between-subjects contrasts ($n{=}8$--$14$ per condition in US2/US3); we therefore frame US2/US3 as exploratory and anchor claims on within-person change. Finally, OLIVE's fixed prevalence anchor $\pi$ is untested under \emph{drifting} prevalence; because $\pi$ rescales posterior magnitudes without altering the belief ranking (§\ref{sec:olive}), we expect guidance to remain robust, but an online prevalence estimator is left to future work.}

\section{Conclusion}

We presented \textbf{OLIVE}, an online latent-inference framework that fuses fixation-locked EEG with behavioral evidence to adapt a VLM within each task round.
Three user studies show reliable convergence across the full range of user skill, substantial operational gains from a live agent acting on those beliefs, and real-time readaptation to silent, unannounced target switches.
The core lesson is that in attention-limited tasks, the AI should earn authority through demonstrated alignment with operator intent rather than be given it through configuration, opening a design space for passive neural copilots wherever human attention is the binding constraint.

\begin{acks}
We thank Nicholas Papadopoulos and Bettina Schlager for insightful discussions and feedback, and Yunzhu Li for helpful feedback in the early stages of this work. This research was supported in part by the Air Force Office of Scientific Research (FA9550-22-10337), the Army Research Laboratory (W911NF-19-2-0139, W911NF-19-2-0135, W911NF-21-2-0125), and the U.S. Department of Defense (N00014-20-1-2027).
\end{acks}

\bibliographystyle{ACM-Reference-Format}
\bibliography{citations}

\appendix

\section{\editminor{SpaceShooter Design Details}}
\label{app:spaceshooter_design}

This appendix details the design of the XR SpaceShooter environment introduced in Section~\ref{sec:tasks}.
Beyond OV, the environment is engineered in a way that preserves tight experiment control and ensures intuitive and smooth user interactions.

\subsection{User and Environment Setup}

Participants are positioned at a fixed egocentric viewpoint centered on a stationary mothership, which serves as the protected objective.
Spaceships spawn within a fixed 3D box region surrounding the mothership, with a minimum height constraint ensuring all ships remain visually separable from the mothership body.

\paragraph{Fixed viewpoint and bounded spawn volume.}
Users do not translate in space and interact via head orientation and shooting.
Ships spawn within a bounded region rather than arbitrarily in space to ensure consistent density and visibility across rounds.
The minimum height constraint prevents occlusion and guarantees that visual features remain observable.
For the user, this creates a stable perceptual field where attention allocation—not navigation—determines performance.
This aligns with OV tasks where operators monitor a fixed display under controlled spatial constraints.

\paragraph{Perceptual environment simplification.}
The skybox is set to a uniform blue with sparse clouds, and shadows on spaceships are disabled.
These choices reduce background clutter and avoid contrast artifacts (e.g., dark undersides), ensuring that visual discrimination depends on object features rather than lighting conditions.
This improves signal quality for both the user and the VLM, and ensures that difficulty arises from task structure rather than rendering noise.

\subsection{Spaceship Dynamics}

All spaceships (targets and non-targets) share identical motion dynamics, differing only in semantic role.
Each ship follows a structured multi-phase trajectory:

\begin{enumerate}[noitemsep,topsep=2pt]
    \item \textbf{Cruise phase:} ships move at lower speed toward a distant waypoint away from the mothership.
    \item \textbf{Attack phase:} ships transition into a faster ``bombing'' trajectory toward the mothership, descending as they approach and firing a laser.
    \item \textbf{Escape phase:} ships ascend and exit the engagement zone, after which they may reenter cruising behavior.
\end{enumerate}

\paragraph{Cruise vs.\ attack speed asymmetry.}
Cruising speed is intentionally lower than attack speed.
This ensures that ships are harder to intercept during the high-urgency attack phase, forcing the user to anticipate targets earlier rather than reacting at the last moment.
For the user, this creates a temporal trade-off: acting early with uncertainty in accuracy due to target being further away versus acting late with reduced success probability due to faster target movement.
This directly enforces the OV property of limited action windows.

\paragraph{Structured attack behavior.}
All ships descend toward the mothership and fire a laser upon entering the attack phase.
Each laser deals exactly 1 damage, and the mothership begins with 100 health.
These values are chosen to create a smooth and interpretable degradation signal in stages rather than an abrupt failure, which clearly paints a mental picture of the difficulty of the task for users.
For the experiment, it yields a stable performance metric tied directly to unhandled threats.

\paragraph{Uniform motion across classes.}
Targets and non-targets share identical trajectories and behaviors.
This prevents motion cues from trivially revealing class identity.
As a result, classification must rely on learned semantic features and accumulated evidence, supporting the OV requirement that relevance is inferred rather than directly observable.

\subsection{Target Structure and Feedback}

\paragraph{Discrete target classes.}
Each round contains five spaceship types, with exactly one designated as the target class at any given time.
Spawn probabilities are not uniform across the five types: the target class is over-represented so that its prevalence is $\pi\approx0.30$ (a uniform 1-of-5 split would instead give $0.20$).
This number is chosen to balance task complexity: fewer types would make classification trivial, while more would dilute learning signal within short rounds. Similarly, across rounds, the target class also randomizes.
For OLIVE, it ensures that adaptation corresponds to learning a concept, not memorizing individual items.

\paragraph{Silent target switch behavior.}
During target switches (US3), previously valid targets immediately disengage and fly away from the mothership, while the new target class begins attacking.
This behavioral transition is designed not to be explicitly announced, while perceptually non-obvious. Hence, for the user, prior knowledge becomes actively misleading so they have to reidentify the target, while for OLIVE, this introduces conflicting evidence that must be resolved during training, directly learning contextual target shift.

\paragraph{Post-destruction feedback.}
Destroyed ships produce an explosion followed by a spinning collapse with colored smoke:
red for targets and green for non-targets.
This delayed but unambiguous feedback reinforces correct or incorrect actions.
Users can then rapidly learn the current correct target class, leading to behaviors that would influence OLIVE convergence rate.
\subsection{Gun Interaction Mechanics}

\paragraph{Bullet assistance and curvature.}
For normal (non-highlighted) shooting, projectiles include a mild homing effect: bullets curve toward nearby ships, with curvature increasing as distance from the gun increases.
This design reduces motor noise and ensures that misses reflect decision errors rather than aiming difficulty.
For the user, shooting feels responsive and forgiving.
For the experiment, it preserves the interpretation of actions as meaningful evidence.

\paragraph{Action rate limit.}
The weapon enforces a 1-second cooldown, visualized via an energy bar (blue = ready, red = depleted) and an audio charging cue.
This limit is chosen to cap the maximum number of actions per round, discouraging user from mindless spraying without calculated pre-aiming that truly reflects decision. 
In addition, for the user, this forces prioritization: not all perceived targets can be acted upon.
This enforces a limited action bandwidth, a key OV constraint.

\paragraph{Handed interaction.}
The weapon is attached to the user's dominant virtual hand at the start of each round.
This ensures consistent ergonomics and minimizes variability in motor behavior across participants.

\subsection{Assistance Interface (AI agent)}

\paragraph{Highlight-based assistance.}
When the agent believes an item is a target, it renders a red outline around the spaceship.
This representation is intentionally minimal: it provides guidance without adding much visual clutter.
For the user, this acts as an attentional cue rather than an automated action, preserving agency in the hands of user.

\paragraph{Field-of-view augmentation.}
A secondary ``wingman'' agent scans regions outside the user's field of view and can highlight targets detected in those regions.
If a highlighted target lies outside the user's view, an edge-of-screen line indicator points toward its direction.
This design extends the user's effective perceptual field, which directly addresses competing concurrency by enabling the system to monitor items the user cannot currently attend to.

\paragraph{Shared perception model.}
The wingman highlights both items in the user's view and items in its own scanned region.
This ensures that assistance is not limited by the user's gaze, enabling earlier detection of relevant items and faster convergence of the wingman model as it obtains a broader range of visual signals.

\paragraph{Shooting as confirmatory action.}
Interaction is designed as a confirmatory targeting mechanism rather than continuous shooting.
When the user aims at an \emph{agent-highlighted} spaceship, its outline turns green and pressing the trigger causes immediate destruction (an aim-assist confirmatory action).
This removes fine motor requirements (e.g., projectile accuracy) and ensures that shooting reflects deliberate intent. For non-highlighted spaceships, shooting uses the normal curving projectile described above.

\subsection{User Interface and Feedback}

\paragraph{In-task information display.}
A smartwatch interface on the user's left hand displays key information such as mothership health and remaining time.
This placement ensures that status information is accessible but does not interfere with primary task engagement.

\paragraph{Pre- and post-round interfaces.}
Before each round, users are shown the target class as an interactive 3D object that can be inspected.
This ensures that the initial task definition is unambiguous.
Post-round interfaces collect subjective responses and provide structured transitions between rounds.

\subsection{\editminor{SpaceShooter Evidence Streams}}
\label{app:evidence_streams}

OLIVE ingests three evidence streams from SpaceShooter.

\paragraph{Visual.}
Two cameras capture the scene at 1\,FPS ($448\times448$ pixels): a head-worn \emph{user camera} and an \emph{agent camera} that orbits the user and continuously steers toward the least-recently-observed sector of the upper hemisphere, providing visual coverage of items outside the participant's current gaze (algorithm in Appendix~\ref{app:wingman_camera}).
Each spaceship instance is object-tracked; bounding-box crops are quality-weighted and cached as visual evidence for OLIVE's VLM scorer.

\paragraph{Explicit.}
When the user's shot lands on a ship, OLIVE receives an action label marking that ship as confirmed enemy.
Non-shots are treated as unlabeled (absence of action does not imply the item is a non-target; the user may simply not have reached it in time).

\paragraph{Implicit.}
Each fixation triggers extraction of a fixation-locked ERP from the participant's EEG.
The per-participant physiological decoder (trained offline on Visual Search data; see Appendix~\ref{app:physio}) returns a soft probability $\hat{y}_i^{\text{physio}} \in (0,1)$ that the fixated item is a target, used as the implicit evidence channel.

\subsection{\editminor{Autonomous Scene Camera: Kinematics and Coverage Algorithm}}
\label{app:wingman_camera}

OLIVE's visual evidence stream includes crops from an \emph{autonomous scene camera} (``agent camera'') that orbits the user's position and steers toward under-observed regions of the upper hemisphere, providing visual coverage of ships outside the participant's current gaze direction.

\paragraph{Coverage grid.}
The upper hemisphere is tessellated into $N=512$ cells using a quasi-uniform grid (azimuth $\times$ elevation, $32 \times 16$ cells).
Each cell tracks its \emph{last-observed timestamp}; when either the user's gaze-aligned camera or the wingman camera renders a ship in a cell, the cell's timestamp is refreshed.
Cells with elapsed time $>$ \texttt{memoryDuration} $= 3$\,s are considered stale.

\paragraph{Steering policy.}
At each physics update, the controller identifies the stale cell with the maximum angular distance from the current camera direction (breaking ties by largest elapsed time) and sets it as the steering target.
Camera position is updated via position interpolation toward the target cell's representative point at $v_{\max} = 1.2$\,m/s, and orientation via Slerp toward the target direction at $\omega_{\max} = 30\deg$/s.

\paragraph{Pitch oscillation.}
To sweep both low- and high-elevation sectors, the wingman camera's pitch oscillates between $0\deg$ and $\theta_{\max} = 90\deg$ at a frequency of $f = 0.3$\,Hz, driven by a sinusoidal modulation superimposed on the steering direction.
This ensures that even when the steering target is at a fixed azimuth, the camera periodically covers the full elevation range.

\paragraph{Rendering and cropping.}
The wingman camera renders at $448 \times 448$ pixels.
Per-ship crops are extracted at each frame by projecting the ship's bounding box into the camera's image plane; each crop is resized to the OLIVE visual encoder's fixed $224 \times 224$ input (frozen ViT-B/16 backbone).

\subsection{\editminor{Experimental Setup}}
\label{app:setup_screenshot}

The experimental setup used a \textbf{Varjo XR-3} mixed-reality headset (eye tracking at $200$\,Hz, $115^\circ$ horizontal field of view) running SpaceShooter in stereoscopic XR, paired with an \textbf{Advanced Brain Monitoring (ABM) B-Alert X24} wireless EEG system ($256$\,Hz, 20 electrodes, standard 10--20 layout). Figure~\ref{fig:setup_screenshot} shows a screenshot captured during a live SpaceShooter session, with the four simultaneous data streams visible to the experimenter. The main XR viewport occupies most of the screen; the three inset panels document the physiological and physical recording environment. This layout was used to monitor data quality and participant state throughout each session without interrupting task performance.

\begin{figure}[h]
  \centering
  \includegraphics[width=\columnwidth]{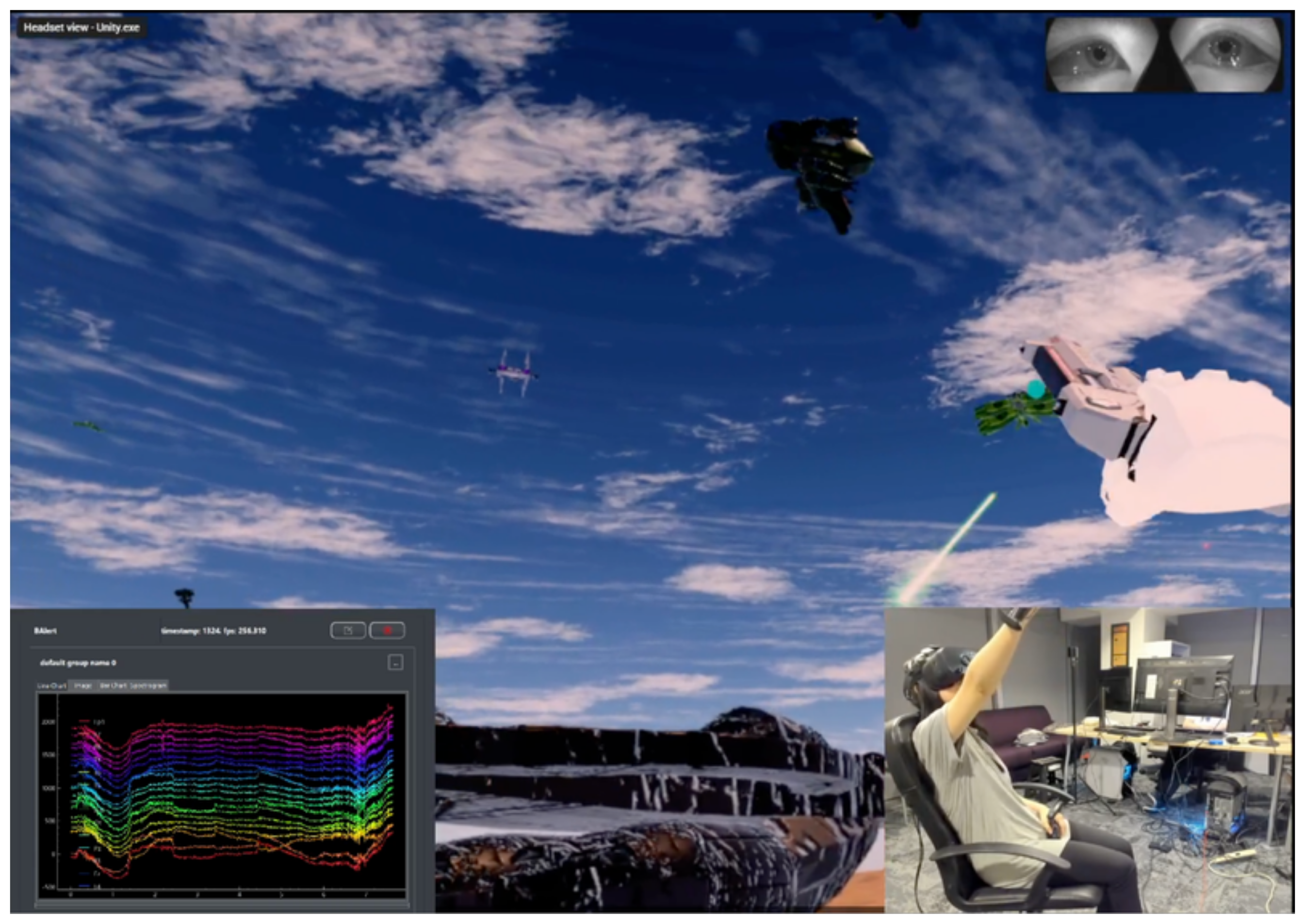}
  \caption{Annotated screenshot from a live SpaceShooter session. \textbf{Main area:} first-person XR view through the VR headset; the participant's virtual hand and controller are visible on the right alongside enemy ships. \textbf{Top right:} eye-tracking camera feeds, used to detect fixation onsets and align EEG epochs to specific on-screen items. \textbf{Bottom left:} real-time EEG monitor displaying raw traces from all 20 electrodes, confirming continuous recording. \textbf{Bottom right:} participant wearing EEG headset and VR headset.}
  \Description{Annotated composite screenshot showing four simultaneous views from a live SpaceShooter session. The main panel shows a first-person XR view of spaceships flying in a sky environment with the participant's virtual hand holding a controller on the right side. In the top-right corner, two circular eye-tracking camera feeds show close-up images of the participant's left and right eyes. In the bottom-left corner, a dark panel displays multicolored EEG signal traces for all 20 electrodes scrolling in real time. In the bottom-right corner, a webcam photograph shows the real participant seated at a desk wearing an EEG headset and VR headset in a laboratory room.}
  \label{fig:setup_screenshot}
\end{figure}

\subsection{\editminor{SpaceShooter Round Score}}
\label{app:score}

The round score $B \in [0, 100]$ is a weighted geometric composite of two normalized rates: enemy clearance ($s_E$) and friendly safety ($s_F$):
\begin{equation}
  B \;=\; 100 \cdot s_E^{\,\alpha} \cdot s_F^{\,\beta}, \quad \alpha = 1.2,\;\beta = 2.0,
\end{equation}
where
\begin{align*}
  s_E &= \text{enemies shot} / \text{enemies spawned}, \\
  s_F &= 1 - \text{friendlies shot} / \text{friendlies spawned},
\end{align*}
both clamped to $[0,1]$.
The higher exponent on $s_F$ ($\beta = 2.0 > \alpha = 1.2$) makes friendly fire disproportionately costly: a single unnecessary shot deflates the score more than a missed enemy.

\paragraph{US2 (TimeScaledRespawn).}
Because US2 rounds run for 120\,s versus the 90\,s US1 reference, raw shot counts are scaled by $90/120 = 0.75$ before computing $s_E$ and $s_F$, normalizing engagement opportunity across round lengths.

\paragraph{US3 (mid-round target switch).}
The round is split into two phases at the switch time $\tau \in \{75, 90, 105\}$\,s.
Each phase produces its own $(s_E, s_F)$ pair (with phase-specific time-scaling to the 90\,s reference), and the two phases are merged via a duration-weighted geometric combination:
\begin{equation}
  S_E = s_{E,\text{pre}}^{w} \cdot s_{E,\text{post}}^{1-w}, \quad
  S_F = s_{F,\text{pre}}^{w} \cdot s_{F,\text{post}}^{1-w}, \quad
  w = \tau / T,
\end{equation}
where $T$ is the total round duration. The final score is $B = 100 \cdot S_E^\alpha \cdot S_F^\beta$.
This formulation ensures that neither phase dominates the score when switch timing is unbalanced, and that friendly-fire penalties are propagated within each phase independently.
\section{System Implementation}
\label{app:system_implementation}

Figure~\ref{fig:system_design} shows the full runtime architecture.
The system comprises four communicating processes: a \textbf{Unity frontend} running on a dedicated VR PC and three Python backend services---the \textbf{Vision Service}, the \textbf{OLIVE Service}, and the \textbf{Physio Service}---each hosted on a separate GPU server.
All inter-process communication uses gRPC~\cite{grpc}, enabling low-latency, language-agnostic remote procedure calls between Unity (C\#) and the Python backends.
EEG and eye-tracking streams are acquired using PhysioLabXR~\cite{physiolabxr2024}, a real-time multi-modal BCI platform that exposes LSL (Lab Streaming Layer) streams consumed by the Physio Service.

\begin{figure*}[h]
  \centering
  \includegraphics[width=\linewidth]{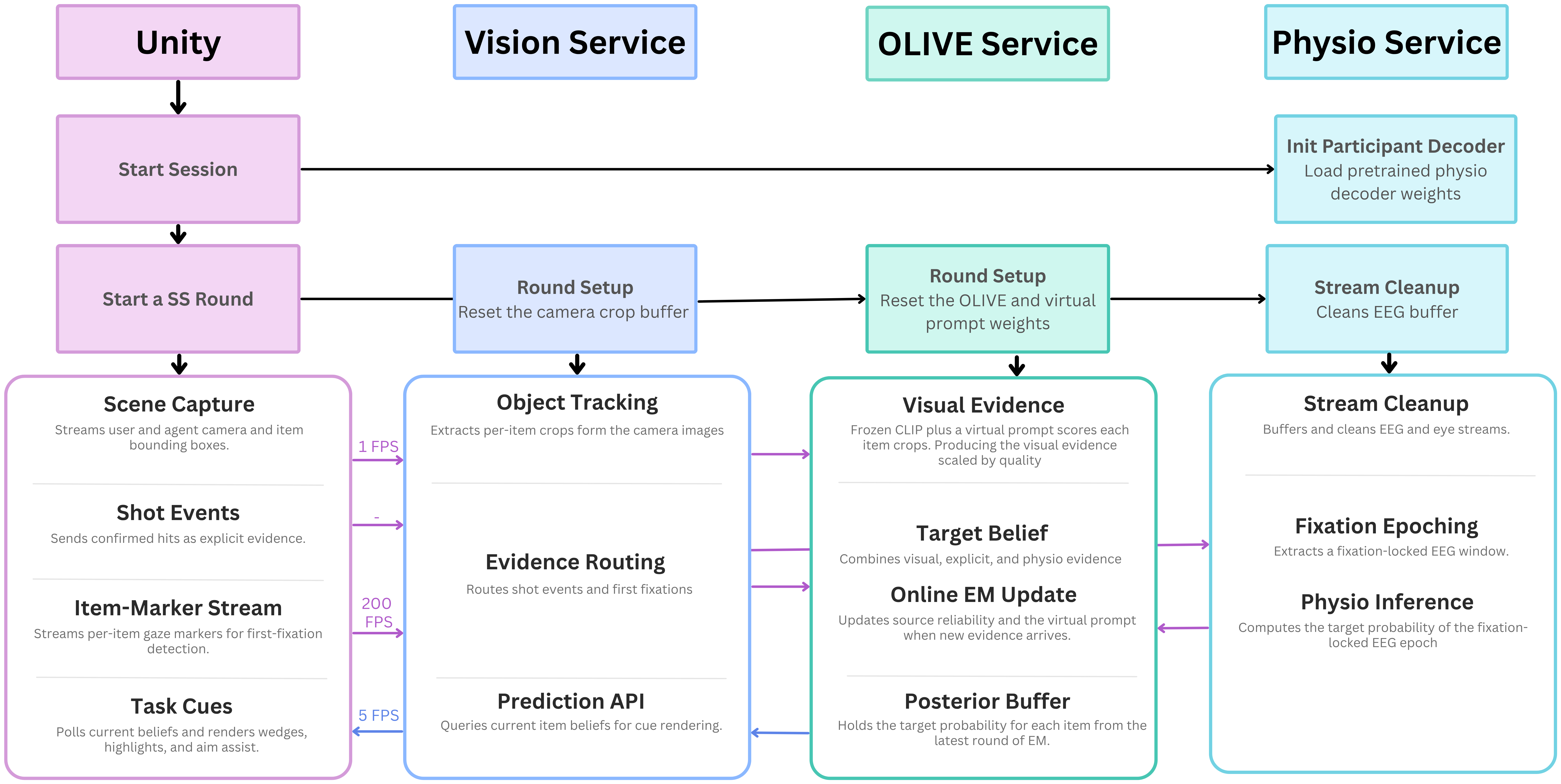}
  \caption{Runtime system architecture. Unity communicates with the Vision Service via gRPC. The Vision Service acts as a bridge, routing visual crops and shot events to the OLIVE Service and forwarding fixation events to the Physio Service. Initialization calls (dashed) set up per-participant state before each round.}
  \Description{A four-column system diagram. Unity Frontend (pink) contains Scene Capture, Shot Events, Item-Marker Stream, and Task Cues. Vision Service (blue) contains Object Tracking, Evidence Routing, First-Fixation Watch, and Prediction API. OLIVE Service (cyan) contains Visual Evidence, Target Belief, Online EM Update, and Posterior Buffer. Physio Service (teal) contains Stream Cleanup, Fixation Epoching, and Physio Inference. Arrows show asynchronous gRPC data flow between components.}
  \label{fig:system_design}
\end{figure*}

\paragraph{Unity frontend.}
The Unity application (C\#) renders the XR environment, captures two camera streams at 1~FPS ($448\times448$ pixels each)---a head-worn user camera and a scene-orbiting agent camera---and streams item-marker events via LSL.
On each frame it calls the Vision Service via gRPC to retrieve current item beliefs and render guidance cues (directional lines, highlights, aim assist).
When the player fires, Unity sends a \emph{ShotEvent} RPC to the Vision Service carrying the shot item's identity, which is forwarded as an explicit positive label to the OLIVE Service.

\paragraph{Vision Service (bridge).}
The Vision Service is the architectural hub, implemented as an AsyncIO gRPC server listening on five ports (5550--5554).
It performs object tracking: cameras stream raw frames, and the service extracts per-item bounding-box crops using rendering metadata provided by Unity.
Crops are cached and quality-weighted ($\gamma_{ic} \in [0,1]$, lower for blurry or partly occluded instances) before being dispatched to the OLIVE Service for inference and training.
The \emph{First-Fixation Watch} monitors the Unity.ReNa.ItemMarkers LSL stream (sampled at the LSL rate) to detect the first sufficiently long gaze on each item per round; on detection it issues a \texttt{PhysioService.OnFixate} RPC to the Physio Service and, on reply, forwards the returned soft EEG score to the OLIVE Service as implicit evidence.
The \emph{Prediction API} exposes current OLIVE beliefs to Unity via a polling endpoint used for cue rendering.
At round start, the Vision Service initializes its local state (crop buffer, round index, difficulty level) and calls the OLIVE Service's reset endpoint to warm-start the virtual prompt and reliability weights from the previous round.

\paragraph{OLIVE Service.}
The OLIVE Service implements the virtual-prompt CLIP adaptation loop, listening on port~50051.
It exposes four primary RPCs: \texttt{RunInference} (compute per-item cosine logits from cached crops), \texttt{TrainOnBatch} (one gradient step on the virtual prompt given soft labels), \texttt{SetHyperparams} (update learning rate, prompt length, loss type, and adapter mode at runtime), and \texttt{Get\-Features} (return frozen CLIP embeddings for external use).
Training is triggered asynchronously by the Vision Service whenever a batch of labeled crops accumulates; inference runs on demand for cue rendering.
The server uses a thread-pool executor with the model and AdamW optimizer resident on the GPU throughout the session, enabling sub-10\,ms inference latency for the 15--30 items typically scored per inference (those observed at any one time, well below the $(1{+}d)\times16$ ships spawned over a full round).

\paragraph{Physio Service.}
The Physio Service handles all EEG processing, listening on port~50052.
It subscribes to the EEG and eye-tracking LSL streams via PhysioLabXR's buffering utilities~\cite{physiolabxr2024}, applying an online preprocessing pipeline (bandpass filtering, baseline correction, and per-participant ICA artifact suppression).
On each \texttt{OnFixate} call, it aligns a $[{-}200, +800]$~ms EEG epoch to the fixation onset, extracts the FRP features used during Visual Search calibration, and passes them through the participant's pretrained physiological decoder to return a soft target probability $\hat{y}^{\text{physio}} \in (0,1)$ within a single RPC round-trip.
The decoder weights are loaded at round initialization via the \texttt{SetExperimentMeta} RPC and remain constant throughout the session.
\section{\editminor{OLIVE Model and Parameterization Details}}
\label{app:parameterization}

\editmajor{This appendix gives the complete OLIVE model (the per-channel likelihoods, the online-EM updates, the parameterization, and the guarantees) for readers who want a rigorous treatment or wish to reimplement the algorithm. It expands the qualitative description in §\ref{sec:olive}.}

\editmajor{Several symbols are reused with different meanings across sections; Table~\ref{tab:notation} disambiguates them.}

\begin{table}[t]
\centering
\small
\caption{\editmajor{Symbols reused with different meanings in different sections. The load-bearing OLIVE model notation ($\alpha,\beta$ = action-channel sensitivity / false-positive rate; $\tau$ = CLIP logit scale) is defined in §\ref{sec:olive} and below; the remaining rows disambiguate glyphs reused elsewhere.}}
\label{tab:notation}
\begin{tabular}{@{}lll@{}}
\toprule
Symbol & Meaning & Where \\
\midrule
$\alpha,\beta$ & Action sensitivity / false-positive rate & §\ref{sec:olive}, App.~\ref{app:parameterization} \\
$\alpha,\beta$ & Round-score exponents ($1.2,\ 2.0$) & App.~\ref{app:score} \\
$\alpha$ & Belief EMA smoothing constant ($0.25$) & App.~\ref{app:cue_policy} \\
$\beta$ & Operator ability slope (QUEST+ sim) & App.~\ref{app:questplus_error_bound} \\
$\hat{\beta}$ & Linear mixed-effects coefficient & App.~\ref{app:us3_lme} \\
$\tau$ & CLIP logit / temperature scale & §\ref{sec:olive}, App.~\ref{app:parameterization} \\
$\tau$ & Target-switch time ($\{75,90,105\}$\,s) & §\ref{sec:us3}, App.~\ref{app:us3_lme} \\
$w$ & Post-switch duration weight ($\tau/T$) & §\ref{sec:us3} \\
\bottomrule
\end{tabular}
\end{table}

\subsection{\editmajor{Model and evidence likelihoods}}
\editmajor{Let $i\in\{1,\dots,N\}$ index the items seen so far in the task. Each item has a latent target state $z_i\in\{0,1\}$ with prior $P(z_i{=}1)=\pi$ ($\pi\approx0.30$ in SpaceShooter), and OLIVE maintains the posterior $\mu_i=P(z_i{=}1\mid\mathcal{C}_i,\hat{y}_i^{\text{act}},\hat{y}_i^{\text{physio}})$ by fusing three evidence channels, each contributing a log-likelihood ratio (LLR).}

\emph{\editmajor{Visual.}} \editmajor{The visual channel scores each item's crop against a learned target concept. With frozen unit-norm CLIP encoders $\phi_{\text{img}},$ $\phi_{\text{txt}}$ and a virtual prompt $\boldsymbol{\theta}_{\text{prompt}}\in\mathbb{R}^{M\times d}$ ($M$ token embeddings of dimension $d$) that replaces a natural-language description of the target, the scorer for a crop $c$ of item $i$ is}
\begin{equation}
  f_{\text{vis}}(c;\boldsymbol{\theta}_{\text{prompt}}) = \sigma\bigl(\tau\langle\phi_{\text{img}}(c),\,\phi_{\text{txt}}(\boldsymbol{\theta}_{\text{prompt}})\rangle + b\bigr),
\end{equation}
\editmajor{with fixed logit scale/bias $\tau,b$. Per-item crops (from the head-worn user camera and the autonomous agent camera; Appendix~\ref{app:wingman_camera}) carry quality weights $\gamma_{ic}\in[0,1]$ (lower for blurry or occluded views); their $f_{\text{vis}}$ scores are pooled into a quality-weighted mean $p_i^{\text{vis}}$ and converted to the visual LLR $\Lambda_{\text{vis}}(i)=\gamma_i\,\mathrm{logit}(p_i^{\text{vis}})$. Only $\boldsymbol{\theta}_{\text{prompt}}$ adapts; the backbone is frozen throughout.}

\emph{\editmajor{Explicit (behavioral).}} \editmajor{For each item, $\hat{y}_i^{\text{act}}\in\{0,1\}$ records whether the operator engaged it. Non-actions are treated as \emph{unlabeled}: under load the operator may not have reached a true target in time, so absence of action never implies irrelevance. The channel is parameterized by sensitivity $\alpha=P(\hat{y}^{\text{act}}{=}1\mid z{=}1)$ and false-positive rate $\beta=P(\hat{y}^{\text{act}}{=}1\mid z{=}0)$~\cite{raykar2010learning}, giving}
\begin{equation}
  \Lambda_\text{act}(\hat{y}_i^{\text{act}}) = \mathbf{1}[\hat{y}_i^{\text{act}}{=}1]\cdot\log\frac{\alpha}{\beta},
  \label{eq:act-llr}
\end{equation}
\editmajor{with $\hat{y}_i^{\text{act}}{=}0$ contributing zero.}

\emph{\editmajor{Implicit (physiological).}} \editmajor{On the first fixation of item $i$, the decoder returns a soft label $\hat{y}_i^{\text{physio}}\in\mathbb{R}$ and quality weight $q_i\in[0,1]$; only the first fixation is used, so reobserving an item cannot contribute multiple times. Under an equal-variance Gaussian likelihood $\hat{y}_i^{\text{physio}}\mid z_i{=}1\sim\mathcal{N}(\nu_+,\sigma^2)$ and $\hat{y}_i^{\text{physio}}\mid z_i{=}0\sim\mathcal{N}(\nu_-,\sigma^2)$ (class means $\nu_+>\nu_-$, shared variance $\sigma^2$), the LLR is linear in $\hat{y}_i^{\text{physio}}$:}
\begin{equation}
  \Lambda_\text{physio}(\hat{y}_i^{\text{physio}}) = q_i\,\frac{\nu_+ - \nu_-}{\sigma^2}\Bigl(\hat{y}_i^{\text{physio}} - \tfrac{\nu_++\nu_-}{2}\Bigr).
  \label{eq:physio-llr}
\end{equation}
\editmajor{Its sign depends on which class mean $\hat{y}_i^{\text{physio}}$ is nearer; its magnitude scales with the class separation $(\nu_+-\nu_-)$ and the per-fixation quality weight $q_i$ (initialized to the decoder's calibration AUC). Adapting $(\nu_+,\nu_-)$ online (M-step~A) lets OLIVE reestimate how informative the channel is for \emph{this} operator in \emph{this} session.}

\subsection{\editmajor{Online EM}}
\editmajor{OLIVE runs an EM loop during task execution, triggered whenever the operator acts or produces a new fixation.}

\emph{\editmajor{E-step.}} \editmajor{Per item, log-odds accumulate additively over the three channels,}
\begin{equation}
  \ell_i = \log\frac{\pi}{1-\pi} + \Lambda_{\text{vis}}(i) + \Lambda_\text{act}(\hat{y}_i^{\text{act}}) + \Lambda_\text{physio}(\hat{y}_i^{\text{physio}}),
\end{equation}
\editmajor{where $\Lambda_\text{physio}$ contributes only once a fixation has been observed for item $i$. Because targets are rare ($\pi<0.5$), distractor evidence would collapse all posteriors toward zero (\emph{zero collapse}); to prevent this, after computing $\{\ell_i\}$ a global bias $b^*$ is found by binary search such that}
\begin{equation}
  \frac{1}{N}\sum_{i=1}^{N}\sigma(\ell_i + b^*) = \pi,
  \label{eq:anchor}
\end{equation}
\editmajor{and the posterior is set to $\mu_i=\sigma(\ell_i+b^*)$. This \emph{prevalence anchoring} pins the mean posterior to $\pi$ at every round regardless of evidence balance; by the intermediate value theorem a unique $b^*$ always exists, guaranteeing $\bar{\mu}=\pi$ and $\max_i\mu_i\geq\pi>0$, so the agent cannot stop issuing guidance no matter how one-sided the evidence (Proposition~\ref{prop:anchor}). Because $b^*$ is a single global shift added to every item's log-odds, it preserves the belief \emph{ranking} and never changes which items are cued; $\pi$ calibrates only the absolute posterior magnitudes, not the guidance itself, so a coarse, mis-estimated, or drifting $\pi$ cannot induce over-cueing: $\pi$ need only be a rough prior (or estimated online), and $0.30$ is simply SpaceShooter's empirical base rate.}

\emph{\editmajor{M-step A: source reliability.}} \editmajor{Action-channel parameters are updated by soft-weighted maximum likelihood with pseudo-count regularization (strength $\lambda$),}
\begin{equation}
  \hat{\alpha} = \frac{\sum_i \mu_i \hat{y}_i^{\text{act}} + \lambda\alpha^{(0)}}{\sum_i \mu_i + \lambda}, \qquad
  \hat{\beta} = \frac{\sum_i (1{-}\mu_i) \hat{y}_i^{\text{act}} + \lambda \beta^{(0)}}{\sum_i (1{-}\mu_i) + \lambda},
\end{equation}
\editmajor{with all seen items contributing to the denominators. With sparse data, a raw M-step can invert channel polarity (pointing the operator toward wrong items, which is worse than staying silent); raw estimates are therefore projected onto the ordered half-space via a shrinkage operator: with $c=\tfrac{1}{2}[\mathrm{logit}(\hat{\alpha})+\mathrm{logit}(\hat{\beta})]$ and $\Delta=\frac{n_\text{eff}}{n_\text{eff}+\delta}\max(\mathrm{logit}(\hat{\alpha})-\mathrm{logit}(\hat{\beta}),\,0)$,}
\begin{equation}
  \alpha = \sigma\!\left(c + \tfrac{\Delta}{2}\right),\quad \beta = \sigma\!\left(c - \tfrac{\Delta}{2}\right).
  \label{eq:proj}
\end{equation}
\editmajor{The physiological class-conditional means are updated by quality-weighted soft MLE,}
\begin{equation}
  \hat{\nu}_+ = \frac{\sum_i \mu_i\, q_i\, \hat{y}_i^{\text{physio}}}{\sum_i \mu_i\, q_i},\qquad
  \hat{\nu}_- = \frac{\sum_i (1{-}\mu_i)\, q_i\, \hat{y}_i^{\text{physio}}}{\sum_i (1{-}\mu_i)\, q_i},
  \label{eq:nu-update}
\end{equation}
\editmajor{then projected to maintain $\nu_+\geq\nu_-$ via the same operator~\eqref{eq:proj}. Since $\Delta\geq0$ by construction, ordering holds at every round; when a raw step would invert polarity, $\Delta=0$ maps both parameters to the most conservative non-inverted estimate (Proposition~\ref{prop:polarity}).}

\emph{\editmajor{M-step B: prompt update.}} \editmajor{The virtual prompt $\boldsymbol{\theta}_{\text{prompt}}$ takes one gradient step minimising the quality-weighted soft binary cross-entropy between VLM scores and current posteriors,}
\begin{multline}
  \mathcal{L}_\mathrm{VLM}(\boldsymbol{\theta}_{\text{prompt}}) = -\!\sum_{i}\sum_{c} \gamma_{ic}\Bigl[
    \mu_i \log f_{\text{vis}}(c;\boldsymbol{\theta}_{\text{prompt}}) \\
    + (1{-}\mu_i)\log\!\bigl(1{-}f_{\text{vis}}(c;\boldsymbol{\theta}_{\text{prompt}})\bigr)\Bigr],
\end{multline}
\editmajor{where $c$ indexes sampled crops and $\gamma_{ic}$ is the crop quality weight; the VLM backbone is frozen, so only $\boldsymbol{\theta}_{\text{prompt}}$ receives gradients.}

\subsection{\editmajor{Parameterization and robustness}}
OLIVE's four learned scalars each absorb a specific real-world failure mode rather than propagating it as corrupted evidence.
For the action channel, $\alpha = P(\hat{y}^{\text{act}}{=}1 \mid z{=}1)$ and $\beta = P(\hat{y}^{\text{act}}{=}1 \mid z{=}0)$ are estimated from observed shots: if a participant occasionally fires on a non-target, OLIVE raises $\hat{\beta}$ and automatically down-weights the action channel's log-likelihood ratio, absorbing imprecision rather than propagating it.
For the physiological channel, $(\nu_+, \nu_-)$ are the class-conditional EEG means estimated online per session: motion artifacts and facial-muscle activity compress the inferred separation $(\nu_+ - \nu_-)$, attenuating the channel's influence on the posterior rather than corrupting it.
OLIVE does not require well-behaved users or clean sensors; it treats imperfection as the default and continuously calibrates how much to trust each channel from the evidence it receives.

\paragraph{Initialization.}
Action channel priors are set from pilot data: $\alpha^{(0)} = 0.85$, $\beta^{(0)} = 0.15$ (pilot specificity $= 0.85$).
The physiological quality weight $q_i$ is initialized to the offline-trained AUC of the participant's physiological decoder from the Visual Search calibration phase.

\subsection{\editminor{Formal Propositions and Proofs}}
\label{app:propositions}
\begin{proposition}[Prevalence anchoring prevents zero collapse]
\label{prop:anchor}
Let $\pi \in (0,1)$ and $N \geq 1$.
After anchoring~\eqref{eq:anchor}, $\frac{1}{N}\sum_i \mu_i = \pi$, so $\max_i \mu_i \geq \pi > 0$ at every EM round.
\end{proposition}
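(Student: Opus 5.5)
The plan is to study the scalar function
\[
g(b) \;=\; \frac{1}{N}\sum_{i=1}^{N}\sigma(\ell_i + b),
\]
where the log-odds $\ell_1,\dots,\ell_N$ are the finite values produced by the E-step at the current EM round. We show that the equation $g(b)=\pi$ has exactly one root $b^*$. Once $b^*$ exists, the averaging identity is simply the definition of anchoring~\eqref{eq:anchor}, and the bound on $\max_i\mu_i$ follows by an averaging argument.

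\textbf{Existence.} First, every $\ell_i$ is finite. The prior term $\log\frac{\pi}{1-\pi}$ is finite because $\pi\in(0,1)$. The visual term is $\gamma_i\,\mathrm{logit}(p_i^{\text{vis}})$, and $p_i^{\text{vis}}\in(0,1)$ since it is an average of sigmoid outputs. The action term $\log(\alpha/\beta)$ is finite because the projection~\eqref{eq:proj} yields $\alpha,\beta\in(0,1)$. The physiological term~\eqref{eq:physio-llr} is finite provided $\sigma^2>0$.

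Next, $g$ is continuous, because it is a finite sum of sigmoids. Each summand tends to $0$ as $b\to-\infty$ and to $1$ as $b\to+\infty$, so $g$ has limits $0$ and $1$ at the two ends. Since $0<\pi<1$, the intermediate value theorem gives some $b^*$ with $g(b^*)=\pi$.

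\textbf{Uniqueness.} We have
\[
g'(b)=\frac1N\sum_i\sigma(\ell_i+b)\bigl(1-\sigma(\ell_i+b)\bigr)>0,
\]
so $g$ is strictly increasing and the root is unique. This also justifies the binary search: $g(b)-\pi$ changes sign exactly once, so bisection on any bracketing interval converges to $b^*$.

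\textbf{Consequences.} Setting $\mu_i=\sigma(\ell_i+b^*)$, the identity $\frac1N\sum_i\mu_i=g(b^*)=\pi$ holds immediately. The maximum of finitely many numbers is at least their mean, so $\max_i\mu_i\ge\pi$. Since $\pi>0$, this gives $\max_i\mu_i>0$.

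None of this depends on the evidence values, so the conclusion holds at every EM round and for any data sequence. As a side remark, a common shift $b^*$ preserves the order of the $\ell_i$, which is the ranking-invariance claim made in the text.

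\textbf{Main obstacle.} The only delicate step is the finiteness of every $\ell_i$. If some $\ell_i$ could equal $\pm\infty$, for example through $p^{\text{vis}}_i\in\{0,1\}$ or $\beta=0$, then the limits of $g$ would no longer be exactly $0$ and $1$, and existence could fail. I would handle this by arguing directly from the sigmoid ranges and the projection~\eqref{eq:proj}. If that argument does not go through, I would instead state finiteness as a standing numerical assumption, for instance by noting that the implementation clips logits.
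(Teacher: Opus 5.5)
Your proof is correct and follows the paper's argument. Both use the same function $g(b)=\frac{1}{N}\sum_i\sigma(\ell_i+b)$, its continuity and strict monotonicity, its limits $0$ and $1$, the intermediate value theorem for a unique $b^*$, and the bound that the maximum is at least the mean. Your check that every $\ell_i$ is finite is a worthwhile refinement that the paper's sketch leaves implicit. For the action term, finiteness ultimately rests on the pseudo-counts in M-step A keeping $\hat{\alpha},\hat{\beta}\in(0,1)$, so that $c$ and $\Delta$ in the projection are finite.
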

\begin{proof}[Proof sketch]
$g(b)=\frac{1}{N}\sum_i\sigma(\ell_i+b)$ is continuous, strictly increasing, with $g\to 0$ as $b\to-\infty$ and $g\to 1$ as $b\to+\infty$.
By the intermediate value theorem a unique $b^*$ with $g(b^*)=\pi$ exists, so after anchoring $\bar{\mu}=\pi$ and $\max_i\mu_i\geq\pi>0$.
\end{proof}

\begin{proposition}[Ordered projection prevents polarity inversion]
\label{prop:polarity}
The projection~\eqref{eq:proj} ensures $\alpha \geq \beta$ (and $\nu_+ \geq \nu_-$) at every EM round, regardless of the raw M-step estimates.
\end{proposition}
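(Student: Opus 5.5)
The argument rests on monotonicity of the sigmoid together with the sign of the shrunken gap $\Delta$ in~\eqref{eq:proj}. Take whatever raw M-step estimates $\hat\alpha,\hat\beta\in(0,1)$ the update produces. The pseudo-count regularization with $\lambda>0$ and $\alpha^{(0)},\beta^{(0)}\in(0,1)$ keeps both strictly inside $(0,1)$, so the logits are finite. Set $c=\tfrac12[\mathrm{logit}(\hat\alpha)+\mathrm{logit}(\hat\beta)]$, which is a finite real. The gap is $\Delta=\frac{n_\text{eff}}{n_\text{eff}+\delta}\max(\mathrm{logit}(\hat\alpha)-\mathrm{logit}(\hat\beta),0)$. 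Since $n_\text{eff}\ge 0$ and $\delta>0$, the shrinkage factor lies in $[0,1)$, and the $\max(\cdot,0)$ is nonnegative. Hence $\Delta\ge 0$ for every input.

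The conclusion then follows from strict monotonicity of $\sigma$. Because $c+\Delta/2\ge c-\Delta/2$, we get $\alpha=\sigma(c+\Delta/2)\ge\sigma(c-\Delta/2)=\beta$. Equality holds exactly when $\Delta=0$, which is the case where the raw step would have inverted or tied polarity. In that case both parameters collapse to $\sigma(c)$, and the action LLR $\log(\alpha/\beta)$ in~\eqref{eq:act-llr} becomes $0$. This gives the ``most conservative non-inverted'' behavior: the channel is silenced rather than turned adversarial. I will also note that $\log(\alpha/\beta)\ge 0$, so a shot can never lower an item's log-odds.

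The physiological means are handled the same way, and this is the step where I expect the main obstacle. The raw $\hat\nu_\pm$ from~\eqref{eq:nu-update} are real-valued decoder-output averages, not probabilities, so the logit in~\eqref{eq:proj} may not be defined on them. I would first try applying the same operator with the identity map in place of $\mathrm{logit}/\sigma$: $c=\tfrac12(\hat\nu_++\hat\nu_-)$, $\Delta=\frac{n_\text{eff}}{n_\text{eff}+\delta}\max(\hat\nu_+-\hat\nu_-,0)$, and $\nu_\pm=c\pm\Delta/2$. This gives $\nu_+-\nu_-=\Delta\ge 0$ directly. If instead $\hat y^{\text{physio}}\in(0,1)$ as stated in the main text, the weighted means lie in $(0,1)$ whenever the weights are nonzero, and the logit version applies verbatim. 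Either way, any strictly increasing link preserves the ordering, so I will state the argument for a generic strictly increasing bijection $h$ with $\nu_\pm=h^{-1}(c\pm\Delta/2)$.

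Degenerate denominators also need attention. If $\sum_i\mu_iq_i=0$, for example before any fixation, I would fall back to the previous or initial values, which satisfy the ordering by the induction hypothesis. Finally, since the claim is that the ordering holds ``at every EM round'', I will close with an induction over rounds. The argument above uses no property of the incoming estimates, so the ordering holds after each projection regardless of the data sequence, and the warm-start across rounds inherits it. With the same-sign argument, $\Lambda_\text{physio}$ in~\eqref{eq:physio-llr} is then nondecreasing in $\hat y^{\text{physio}}$, which is the semantic content of ``no polarity inversion''.
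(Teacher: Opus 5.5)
Your proposal is correct and follows the same route as the paper's proof sketch. Because $\Delta\ge 0$ by construction, strict monotonicity of $\sigma$ gives $\alpha=\sigma(c+\Delta/2)\ge\sigma(c-\Delta/2)=\beta$, and an inverted raw step collapses both parameters to $\sigma(c)$. Your additional checks fill well-definedness gaps that the paper glosses over, without changing the argument: finite logits via the pseudo-counts, a generic increasing link for $\hat\nu_\pm$ (which need not lie in $(0,1)$), and empty denominators.

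One small imprecision in an aside: $\Delta=0$ also occurs when $n_{\text{eff}}=0$, not only when the raw step inverts or ties polarity. This does not affect the claim.
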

\begin{proof}[Proof sketch]
Since $\Delta = \frac{n_\text{eff}}{n_\text{eff}+\delta}\max(\cdot,\,0)\geq 0$, we have $c+\Delta/2 \geq c-\Delta/2$ and therefore $\alpha\geq \beta$ always.
When the raw M-step would invert polarity ($\hat{\alpha}\leq\hat{\beta}$), $\Delta=0$ and both are mapped to $\sigma(c)$, the most conservative non-inverted estimate.
\end{proof}
\section{Agent Cue Policy: Algorithmic Details}
\label{app:cue_policy}

OLIVE's per-item posterior scores are converted to guidance cues via a \textbf{quantile suppression policy}.
Raw scores are first smoothed with an exponential moving average ($\alpha = 0.25$) to suppress transient noise.
The system then computes P50 and P90 of all smoothed scores across active ships.
If the spread P90 $-$ P50 $< 0.01$ (beliefs insufficiently differentiated from the uniform prior), all cues are suppressed entirely: an uncertain model actively misleads rather than helps.
Otherwise:
\begin{itemize}[noitemsep,topsep=2pt]
  \item Items above P50 receive a \emph{soft cue} with opacity proportional to $(s - P_{50})/(P_{90} - P_{50})$.
  \item \editmajor{The top-4 ranked items receive a \emph{hard cue} (full-opacity highlight), applied with sticky ranking: an item enters the hard-cue set at rank $\leq 4$ and leaves only once it drops past rank~6, preventing flicker. These four hard-cued items are exactly the set scored by \emph{guidance} convergence (Precision@4); $k{=}4$ is thus a display parameter (the number of items the operator actually sees highlighted), not a claim about perceptual capacity.}
\end{itemize}

Three modalities render the two tiers:
\begin{itemize}[noitemsep,topsep=2pt]
  \item \textbf{Out-of-view directional line} (inspired by~\cite{gustafson2008wedge,gruenefeld2018beyond}; preliminary studies showed operators preferred lines over wedges in this fast-paced task): a viewport-edge line pointing toward off-screen targets, opacity scaled by soft-cue strength.
  \item \textbf{Outline highlight}: a colored border on in-view ships carrying a hard cue, providing visual pop-out without occluding scene content.
  \item \textbf{Aim assist}: when the user's crosshair is within $\pm 20^\circ$ of a highlighted ship, the border changes color; pressing the trigger at that moment engages the assist and destroys the target.
\end{itemize}

This policy requires no task-specific calibration: thresholds are computed relative to the current belief distribution and adapt automatically as OLIVE's posteriors evolve.
\section{Physiological Decoder: Training, Signal Characterization, and Design Decisions}
\label{app:physio}

OLIVE requires, as implicit evidence, a per-fixation soft label $\hat{y}_i^{\text{physio}} \in (0,1)$: a calibrated probability that the currently fixated item is a target, derived from the neural response at fixation onset. This value is produced by a \emph{physiological decoder}, a model that maps fixation-locked EEG epochs to calibrated target-presence probabilities. OLIVE is decoder-agnostic by design (§\ref{sec:olive}): any module that returns a calibrated scalar per fixation can serve as the implicit evidence source. This appendix documents the training pipeline and design decisions for the decoder used in our experiments.

\paragraph{Decoder architecture note.}
The physio decoder used in this work was developed as part of a collaborative research project whose full architecture is being prepared for separate publication; its implementation details are therefore beyond the scope of this work. What we describe below: calibration data format, preprocessing, feature extraction, and training protocol, is fully sufficient to replicate the training pipeline with any binary classifier that accepts the described feature vectors.

\subsection{Calibration Data: The Visual Search Phase}

Before each experimental session, participants complete a \emph{visual search calibration phase} in which they view static arrays of targets and distractors and count the number of targets within a fixed time budget. No per-item responses are collected during viewing; participants only report the count afterward. This means the full gaze record during viewing is available as free-viewing fixation data, providing naturally labeled fixation epochs without disrupting the observation behavior the decoder is designed to characterize.

For each fixation on an item, we extract a FRP: an EEG epoch time-locked to fixation onset, spanning $[-100,$ $800]$\,ms, with a $[-100, 0]$\,ms pre-fixation baseline. The label is the ground-truth identity of the fixated item (target~$= 1$, non-target~$= 0$). Fixations shorter than 100\,ms are excluded. This yields a set of labeled (epoch, label) pairs that constitute the per-participant calibration dataset.

\subsection{EEG Preprocessing}

Raw EEG is recorded at $256$\,Hz from $20$ electrodes (B-Alert X24, standard 10--20 layout) and synchronized with gaze events via PhysioLabXR \cite{physiolabxr2024} with $<5$\,ms skew. The continuous signal is bandpass-filtered (0.1--40\,Hz, zero-phase FIR) and notch-filtered at 60\,Hz (harmonics at 120\,Hz), then rereferenced to the common average. Independent component analysis (ICA) is applied to attenuate ocular artifacts, with saccade onset times used as informed priors for artifact component identification. Epochs are baseline-corrected to $[-100, 0]$\,ms. Trials are rejected if peak-to-peak amplitude exceeds $100\,\mu\text{V}$ in any channel, or if the gaze sample at fixation onset falls off the item.

\subsection{\editminor{Decoder Input}}

\editminor{We do not hand-engineer features. For each fixation, the full preprocessed $[-100, +800]$\,ms fixation-locked epoch across all 20 electrodes is passed directly to the decoder, which learns its own spatiotemporal representation. The approach is motivated by well-established fixation-related components: the N2pc (180--300\,ms, contralateral posterior negativity) indexes covert attentional selection, and the P300 (300--600\,ms, midline parietal/occipital) indexes target-category confirmation (§\ref{sec:background-physio}). These components are most prominent over posterior/parietal sites (Pz, POz), as the grand-average waveforms in Figure~\ref{fig:frp_eeg_combined} confirm; the channel-importance analysis (§\ref{app:portable_eeg}) shows these sites dominate discrimination while the model also draws on channels beyond a fixed posterior subset.}

\subsection{Decoder Training}

\editminor{The physio decoder is a per-participant deep neural network (the all-channel tokenizer of §\ref{app:portable_eeg}) trained on the calibration dataset. It takes the fixation-locked epoch described above and returns a calibrated probability $\hat{y}_i^{\text{physio}} \in (0,1)$ that the fixated item is a target. Following our data-availability policy, we release the per-fixation decoder outputs rather than the trained weights; the inputs, outputs, and training protocol are described here.}

Training uses the labeled calibration epochs as a supervised binary classification problem. Calibration of the output probability is verified by checking that the mean predicted score across held-out non-target fixations matches the empirical non-target base rate; if necessary, Platt scaling is applied to recalibrate the output. The resulting $\hat{y}_i^{\text{physio}}$ values serve as soft labels for OLIVE's EM updates.

A quality weight $q_i \in [0,1]$ accompanies each prediction, derived from a measure of decoder confidence on individual epochs (e.g., margin from the decision boundary or entropy of the output distribution). High-quality fixations (those where the decoder is confident and the epoch is artifact-free) receive higher $q_i$, down-weighting noisy or ambiguous fixation events in the OLIVE EM update.

\subsection{Why Pupil Was Excluded}
\label{app:pupil}

Task-evoked pupil responses (TEPR) are a natural complement to EEG: non-invasive, easy to record, and providing an independent index of cognitive effort and uncertainty \cite{beatty1982task,van2018pupil}. We initially planned to include TEPR as a third evidence stream in OLIVE.

In practice, the pupil signal proved too contaminated by lu\-mi\-nance-driven responses in SpaceShooter to be useful. The task involves rapid, large luminance transients (explosions, beam effects, and animated backgrounds) that produce strong light-reflex-driven pupil constriction and dilation unrelated to cognitive evaluation of individual items. Disentangling TEPR from luminance responses requires trial-by-trial luminance control or matched stimuli, which conflicts with the ecological validity of the game environment. The coarser 500--2000\,ms time window of TEPR also makes it difficult to attribute responses to specific fixations in fast-paced multi-target scenes where fixations are short and many items are processed in rapid succession.

We therefore excluded pupil from the current implementation and rely solely on fixation-locked EEG as the implicit channel. This is a practical scope decision, not a claim that TEPR is uninformative: in environments with controlled luminance profiles (e.g., clinical imaging workstations, document review interfaces), TEPR could be incorporated as an additional reliability-weighted annotator in OLIVE's EM framework with no change to the core algorithm.

\subsection{FRP Signal Characterization}

Figure~\ref{fig:frp_eeg_combined} shows the grand-average FRP waveforms locked to first fixation onset, averaged across all participants ($N = 57$) and split by item category (target vs.\ non-target). Waveforms are plotted across all 20 EEG channels in topomap layout over the epoch window $[-100, 800]$\,ms, with a $[-100, 0]$\,ms pre-fixation baseline.

A clear P300 deflection (300--600\,ms) is visible at Pz and occipital sites for target items relative to non-targets, consistent with prior fixation-related EEG studies~\cite{dimigen2011coregistration,dimigen2021regression,golenia2018implicit,shishkin2016eeg}. The orange shading marks time points with significant target vs.\ non-target differences (uncorrected Welch's $t$-test, $p < .05$); the bottom strip shows Cohen's $d$. Peak discrimination occurs at posterior sites (Pz, P3, P4, POz, O1, O2), matching the expected P300 topography. These waveforms confirm that the calibration phase captures robust, spatiotemporally consistent neural discrimination between target and non-target fixations, providing the signal that the physio decoder is trained to classify and that OLIVE uses as implicit evidence.

\begin{figure*}[p]
  \centering
  \includegraphics[width=\linewidth, height=0.82\textheight, keepaspectratio]{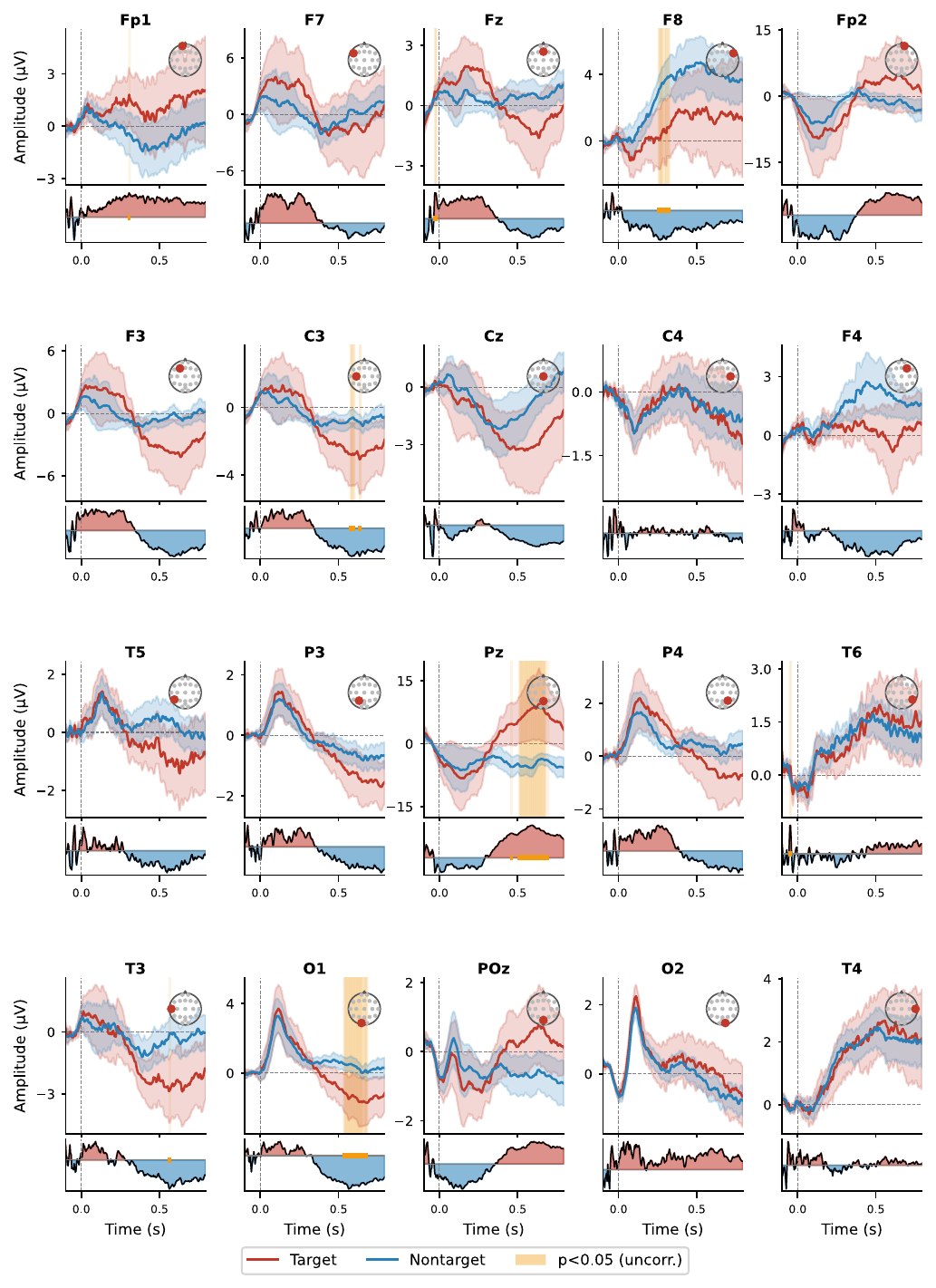}
  \caption{Grand-average FRPs for target and non-target items across all 20 EEG channels in topomap layout ($N = 57$ participants; shading = $\pm 1$ SEM; orange shading = $p < .05$ uncorrected Welch's $t$-test; bottom strip = Cohen's $d$). The posterior P300 (300--600\,ms) is the primary discriminative component exploited by the physio decoder.}
  \Description{Topomap-layout grid of fixation-related ERP waveforms for all 20 EEG channels, showing grand-average target (red) and non-target (blue) traces over the epoch window minus 200 to 800 ms. Orange shading marks significant time points; a Cohen d strip at the bottom highlights the P300 window at posterior sites.}
  \label{fig:frp_eeg_combined}
\end{figure*}
\subsection{Topomap Timeline of the Fixation-Related P300}
\label{app:frp_topomap}

Figure~\ref{fig:frp_topomap_timeline} shows the spatial distribution of the target--distractor ERP difference at successive 100\,ms windows from fixation onset to 800\,ms.
The posterior positivity emerging at 300--600\,ms at parietal and parieto-occipital sites (Pz, POz, P3, P4) is consistent with the P300 target-detection response~\cite{polich2007updating}: a late, broadly distributed positivity that reflects category confirmation when the fixated item matches the current target class.
The early (0--200\,ms) windows show minimal difference, as expected before attentional selection and category evaluation occur.
The sustained positivity at POz through 600\,ms is particularly notable: it aligns with the channel importance ranking (Figure~\ref{fig:eeg_channel_importance}), where POz ranks as the single most discriminative electrode for the physio decoder, lending neural validity to the algorithmic importance estimate.

\begin{figure*}[t]
  \centering
  \includegraphics[width=\linewidth]{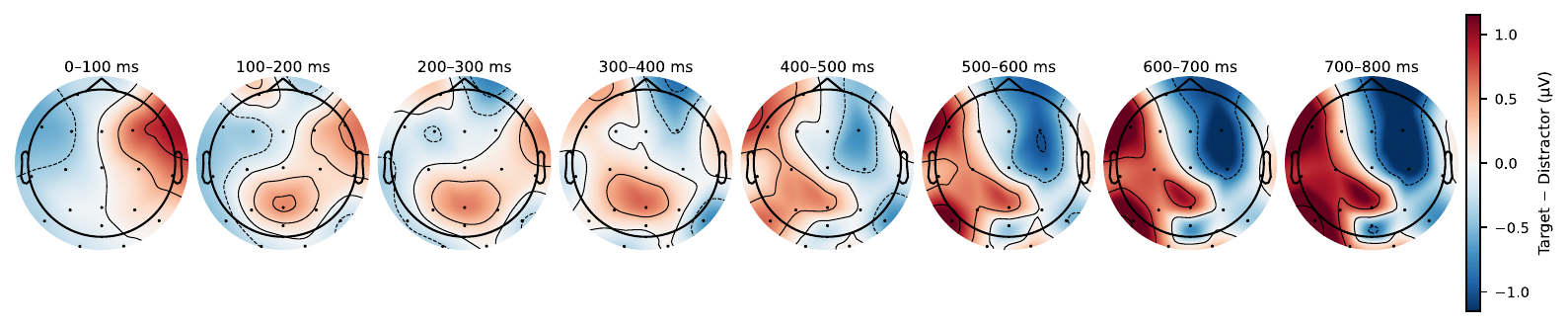}
  \caption{Grand-average target--distractor ERP difference topomaps from the US1 Visual-Search calibration phase ($N=41$ participants; 2{,}060 target and 3{,}946 distractor fixation epochs after preprocessing: amplitude rejection $>100\,\mu$V per channel, per-participant FastICA with Fp1/Fp2 ocular proxies; baseline $[-100, 0]$\,ms).
  Each panel shows the scalp distribution of the grand-average amplitude difference (target $-$ distractor, µV) averaged within the indicated 100\,ms window.
  The posterior positivity emerging at 300--600\,ms (Pz, POz) is consistent with the P300 orienting response, validating that the decoder captures genuine neural evidence of target detection. The lateralization likely reflects participants aiming with the right hand immediately after fixation.}
  \Description{Series of scalp topomaps at successive 100 ms windows from fixation onset to 800 ms, showing the grand-average target-minus-distractor ERP difference. A posterior positivity emerges at 300-600 ms consistent with the P300 target-detection response.}
  \label{fig:frp_topomap_timeline}
\end{figure*}

\subsection{Error-Related Negativity During Friendly-Fire Events}
\label{app:ern}

SpaceShooter generates a naturalistic error paradigm: every friendly-fire shot
constitutes a response error, an action directed at the wrong target class, compounded
by an explicit score penalty, making these events both objectively incorrect and
subjectively salient.
We extracted response-locked EEG epochs from continuous recordings ($N=38$ US1
participants with sufficient clean error epochs) centered on each shot event, labeled as
errors (friendly fire) or correct responses (enemy destroyed),
using the shot timestamps recorded in the same LSL stream as the EEG.
Preprocessing: continuous 0.5--30\,Hz bandpass filter applied before epoching,
then epoch extraction ($[-200, +600]$\,ms), baseline correction ($[-200, 0]$\,ms),
amplitude rejection ($>150\,\mu$V per channel), and per-participant FastICA using
Fp1/Fp2 as ocular proxy channels. This response-locked ERN pipeline differs by design from the fixation-locked FRP pipeline of §\ref{app:physio} (a narrower 0.5--30\,Hz band, $[-200,0]$\,ms baseline, and $150\,\mu$V rejection), following standard ERN practice.

Figure~\ref{fig:ern_waveforms} shows the grand-average ERN and CRN waveforms at
frontocentral sites (Fz$+$Cz mean; the B-Alert X24 lacks FCz) and the
difference topomaps at the ERN peak and Pe window.
A frontocentral positivity for error relative to correct trials is visible in the
40--140\,ms window, though the effect does not reach conventional significance
($M=+1.05\,\mu$V, $SE=0.60$, $t(37)=1.71$, $p = .096$; cluster permutation:
no significant cluster).
We retain the ERN/Pe nomenclature for the response-locked frontocentral and centroparietal components, but note that our friendly-fire error$-$correct difference is \emph{positive}-going at the frontocentral peak rather than the canonical negativity, and does not reach significance; we therefore treat it as suggestive rather than confirmatory.
The modest amplitude and marginal significance are consistent with the ecological
validity of the paradigm: friendly-fire events in SpaceShooter are heterogeneous
(some intentional, some accidental), the B-Alert headset records under active VR
conditions with elevated muscle and motion artifact, and the ERN is known to be
attenuated for high-workload, time-pressured tasks where error monitoring competes
with ongoing perceptual demands~\cite{gehring2012error}.
Nonetheless, the scalp topography at the ERN peak (frontocentral) and Pe window
(centroparietal) is consistent with canonical ERN/Pe distributions, suggesting
that the error-monitoring signal is present even if it does not reach significance
in this sample.

\begin{figure*}[t]
  \centering
  \includegraphics[width=\linewidth]{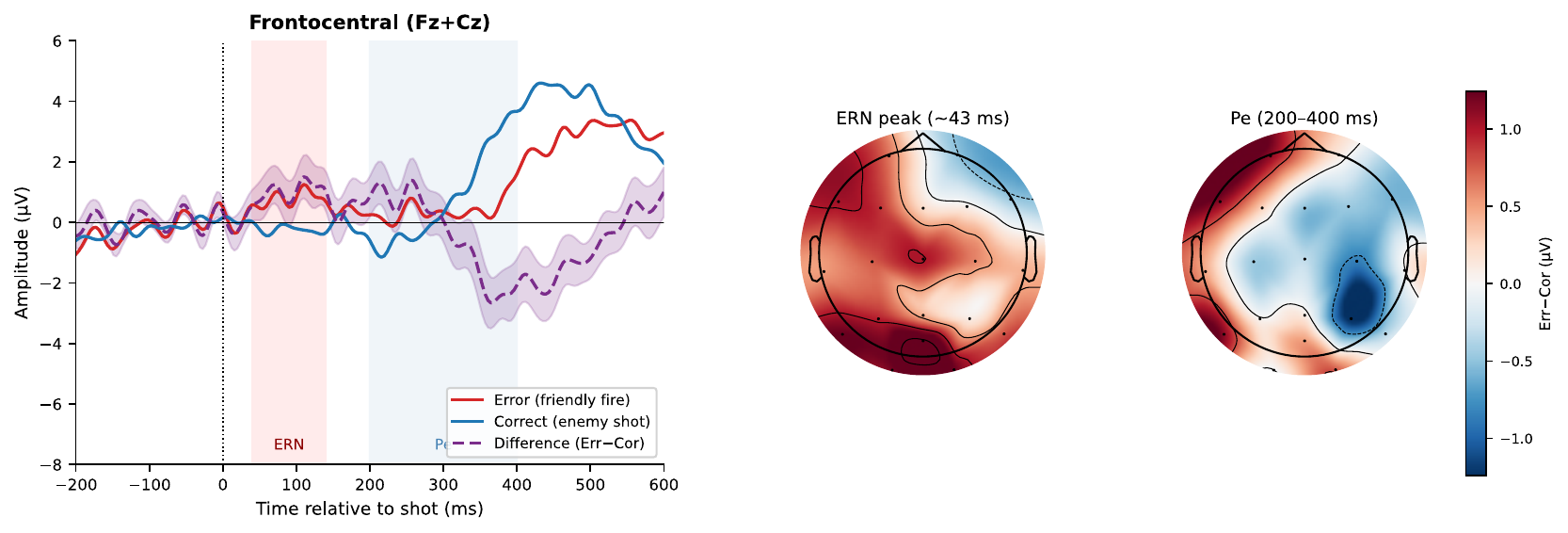}
  \caption{Error-Related Negativity (ERN) from SpaceShooter friendly-fire events.
  \textbf{Left:} Grand-average frontocentral waveforms (Fz$+$Cz) for error (red)
  and correct (blue) trials; dashed purple $=$ difference wave (Err$-$Cor),
  shading $=\pm$1 SEM across participants.
  \textbf{Center/right:} Scalp topomaps of the error$-$correct difference at
  the ERN peak and Pe window.
  ($N=38$ participants; preprocessing: 0.5--30\,Hz continuous bandpass,
  $[-200, 0]$\,ms baseline, amplitude rejection $>150\,\mu$V, FastICA;
  $p = .096$, one-sample $t$-test on frontocentral ERN amplitude.)}
  \Description{Grand-average ERP waveforms time-locked to shot execution, comparing error trials (friendly fire) and correct trials (enemy destroyed) at frontocentral sites Fz and Cz, with difference topomaps at the ERN peak and Pe window.}
  \label{fig:ern_waveforms}
\end{figure*}

\clearpage
\subsection{EEG Channel Importance and Portable Form-Factor Simulation}
\label{app:portable_eeg}

\paragraph{Channel importance via permutation ablation.}
To identify which of the 20 B-Alert X24 electrodes contribute most to the physiological decoder's target-discrimination performance, we conducted a permutation ablation analysis on all $N{=}41$ US1 participants using the val-run data (runs 33--50 per participant). For each of the 20 channels, we zeroed out that channel's signal across all fixation epochs and recomputed the model's AUC; importance is defined as $\max(0,\,\text{AUC}_\text{baseline} - \text{AUC}_\text{channel zeroed})$.
The decoder architecture uses a \texttt{TemporalEmbedChannelConv} tokenizer that jointly encodes all channels via a 2-D convolution before attention-based temporal pooling, meaning no single channel has an independent representational role.
Consequently, individual ablation drops are modest (group average peak drop $<0.02$ AUC), yet consistent across participants.
Figure~\ref{fig:eeg_channel_importance} shows the group-average importance map and channel ranking.
The four most important channels are \textbf{POz, Cz, Fp2, and T3} (T7 in standard 10-20 notation). The midline centro-parietal cluster (POz, Cz) is consistent with the posterior P300 topography that drives FRP discrimination (\S\ref{app:physio}); the additional frontal-pole and temporal contributions (Fp2, T3) are modest (AUC drops $<$ 0.02) and do not correspond to a canonical target-detection component.
\begin{figure}[t]
  \centering
  \includegraphics[width=\columnwidth]{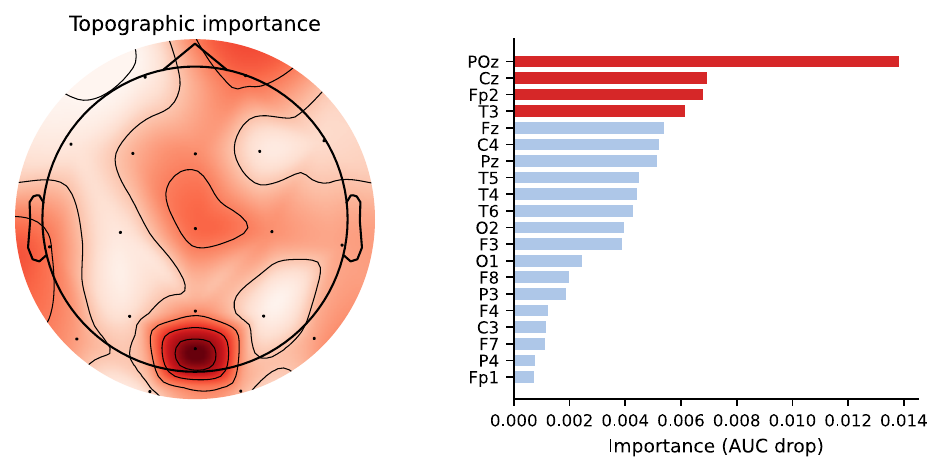}
  \caption{Group-average EEG channel importance from permutation ablation ($N{=}41$ participants; $459 \pm 174$ fixation epochs per participant, val-set runs 33--50).
  \textit{Left:} topographic importance map.
  \textit{Right:} channels ranked by mean AUC drop when zeroed; top-4 (POz, Cz, Fp2, T3) are highlighted in red.
  Drops are modest due to the joint multi-channel tokenizer architecture.}
  \Description{Bar chart and scalp topography showing permutation-ablation importance scores for each of the 20 B-Alert X24 EEG channels. Posterior and parieto-occipital channels show highest importance, consistent with the P300 spatial distribution.}
  \label{fig:eeg_channel_importance}
\end{figure}

\paragraph{Portable form-factor simulation.}
We simulated decoder performance under four commercially available EEG form factors: the ABM B-Alert X10 (9-channel subset of X24), Emotiv EPOC X (14 channels), Muse 2 (4 channels), and an OLIVE-optimized strip (the 4 most important channels identified above: POz, Cz, Fp2, T3).
For each device, a two-pass spherical spline interpolation (MNE-Python software package~\cite{perrin1989spherical}) reconstructed the full 20-channel B-Alert X24 montage: channels present in both the device and B-Alert X24 were taken directly from the recording, while channels exclusive to the device (Emotiv EPOC X: AF3, AF4, FC5, FC6; Muse 2: all 4 channels) were first estimated from B-Alert data via a forward pass, then the full B-Alert montage was reconstructed from the combined device set via a backward pass.
The reconstructed signals were passed through the per-participant decoder and the resulting soft labels were used to refit beta distributions and compute the simulated OLIVE AUC, targeting the same $+0.10$ additive boost above the empirical val-set AUC.

\paragraph{Results and comparison.}
Figure~\ref{fig:portable_eeg_comparison} shows the simulated AUC distribution across all 41 participants for each form factor.
Using one-tailed paired $t$-tests (directional hypotheses prespecified from channel importance), the OLIVE strip significantly outperforms Emotiv EPOC X ($t(40)=1.72$, $p = .047$, mean difference $+0.02$~AUC). No other pairwise comparison reached significance ($p > .05$ two-tailed for all remaining pairs; $n=41$).
Group means ranged from $0.664$ (EPOC X) to $0.685$ (OLIVE strip), with the full 20-channel headset at $0.682$.
A counterintuitive finding is that Muse 2 ($0.670$; 4 interpolated channels) performed comparably to EPOC X ($0.664$; 14 channels, 10 recorded).
This follows directly from the channel importance analysis: EPOC X covers only one of the four most important channels (T7/T3), with the remaining 13 channels spanning temporal, frontal, and occipital regions that carry lower discriminative weight for this decoder.
By contrast, the OLIVE strip covers all four top channels and achieves the highest simulated AUC despite having the same electrode count as Muse 2, illustrating that \emph{channel placement relative to the model learned feature map} matters more than electrode count per se.

\begin{figure}[t]
  \centering
  \includegraphics[width=\columnwidth]{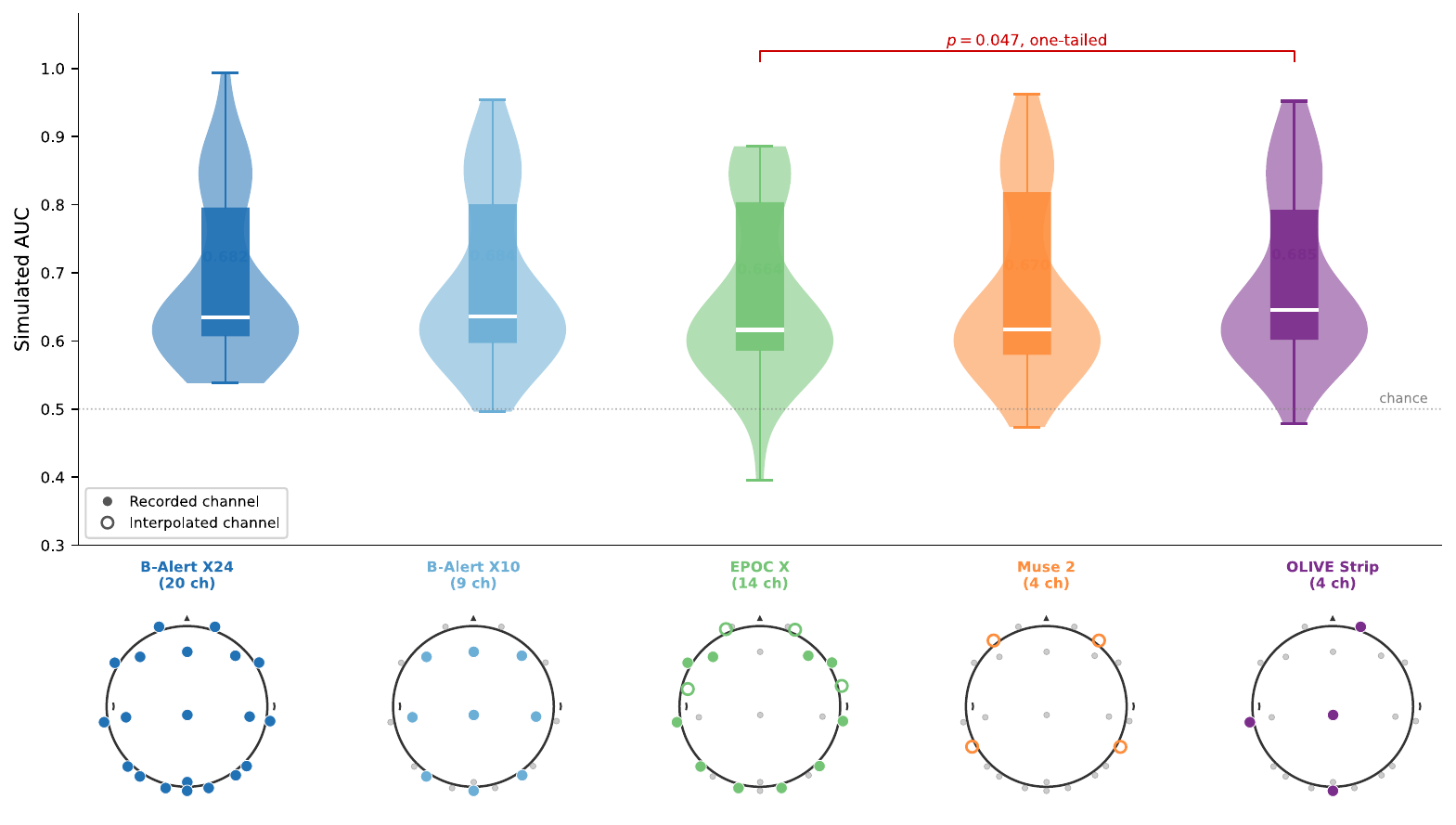}
  \caption{Simulated decoder AUC by portable EEG form factor ($N{=}41$).
  Violins show participant distribution; boxes show median and IQR; means annotated numerically.
  Topomap insets: filled circles = recorded channels; open circles = interpolated via spherical spline.
  The red bracket indicates the significant OLIVE Strip vs.\ Emotiv EPOC X comparison ($t(40)=1.72$, $p = .047$, one-tailed; prespecified directional hypothesis).}
  \Description{Bar chart comparing simulated decoder AUC for four portable EEG form factors (full 20-channel, 8-channel frontal strip, 8-channel posterior strip, 4-channel minimal) relative to the full B-Alert X24 array.}
  \label{fig:portable_eeg_comparison}
\end{figure}

\paragraph{Limitations.}
These simulations have two important caveats.
First, the reported AUC values are \emph{simulated priors for OLIVE's Bayesian update}, not measured decoder performance on an actual portable device.
They reflect a beta-distribution model fitted to val-set decoder outputs, boosted to approximate a plausible performance improvement; the true AUC of a decoder retrained or adapted from a specific device's native recording could differ substantially. These simulated reduced-montage prior AUCs are also not directly comparable to the offline validation AUC used for the median split in Table~\ref{tab:damage_control}.
Second, the spherical spline reconstruction is most reliable when the interpolated channels are geometrically surrounded by recorded channels; for Muse 2, where all four channels fall outside the B-Alert X24 layout, the backward reconstruction of midline and parietal channels relies entirely on extrapolation, introducing approximation error that is not present in a real headset recording.
These results should therefore be interpreted as a simulation-based plausibility check for OLIVE's prior construction rather than an empirical evaluation of portable EEG hardware.
\section{Convergence Metrics}
\label{app:convergence}

An unconverged model does not merely withhold useful guidance: it actively misleads: its highest-ranked items are plausible-looking distractors, and a user following the cues will make worse decisions than one ignoring them entirely.
We therefore define convergence not as internal accuracy alone, but as the moment when the agent's guidance becomes reliably worth acting on.

Both metrics share a \emph{ranking stability} component.
Let $T_t$ denote the set of four items with the highest posterior beliefs at second $t$.
We define
\begin{equation}
  S_t \;=\; \frac{|T_t \cap T_{t-1}|}{|T_t \cup T_{t-1}|},
  \label{eq:stability}
\end{equation}
the Jaccard similarity between the current and previous top-4 sets, analogous to measures used for comparing top-$k$ lists~\cite{fagin2003comparing} and quantifying feature-selection stability~\cite{nogueira2018stability}.
When $S_t$ is high, the agent consistently surfaces the same items; when $S_t$ is low, its suggestions are still in flux and following them would send the user in different directions moment to moment.

\paragraph{Belief convergence: the model has formed a stable, correct view.}
We operationalize this as: (i) the posterior-based ranking achieves AUC $>0.90$ against ground-truth target labels, (ii) $S_t \geq 0.75$ (the top-4 highlighted set is unchanged between consecutive seconds), and (iii) both conditions hold for at least 10 consecutive seconds, long enough for the user to receive and act on the agent's support.

\paragraph{Guidance convergence: the model's suggestions are trustworthy.}
Stability alone is insufficient: a model can be stably wrong.
Guidance convergence marks the point at which the items the agent surfaces are actually the true targets.
We operationalize this as: Precision@4 $=$ 1.0 (all four highlighted items are true targets) \emph{and} $S_t \geq 0.75$, sustained for $\geq 10$ consecutive seconds.
We report time-to-guidance-convergence as a primary outcome in US1 and US2.

Both thresholds are intentionally strict: requiring \emph{all four} highlighted items to be correct and \emph{sustained} for 10 seconds rules out transient agreement: the agent must be consistently right.
At a 30\% target base rate, chance Precision@4 $\approx 0.3$; a system that clears this bar within the first half of a task round is genuinely useful, allowing users to redirect their attention with confidence.
\section{User Study 1: Additional Details}
\label{app:us1_task_characterization}
\subsection{Evidence Configurations and Baselines}
\label{app:us1_variants}

\paragraph{Evidence configurations.}
\textbf{E (Explicit only)} receives shot events as positive labels and no EEG signal; it represents the EEG-free deployment scenario, requiring no neural hardware, and is the reference against which EEG's added value is measured in US2 and US3.
\textbf{I (Implicit only)} receives EEG-derived fixation labels only, with no shot events; it tests whether neural signals alone are sufficient without any behavioral confirmation.
\textbf{IE (Implicit + Explicit)} receives both channels and is the primary condition; E and I are ablations that decompose its advantage.

\paragraph{Baselines.}
\textbf{olive-base} is OLIVE with M-step~A disabled: evidence reliability weights ($\alpha$, $\beta$, $\nu_+$, $\nu_-$) are frozen at pilot-calibrated values rather than estimated online, ablating the benefit of learned reliability.

\textbf{TPT} adapts the VLM prompt via entropy minimization on $N{=}7$ augmented views per fixation-locked crop, keeping the top $\text{selec}\-\text{tion\_p}{=}0.1$ fraction of views by self-consistency before computing the entropy loss~\cite{shu2022test}.

\textbf{TDA} maintains positive and negative CLIP feature caches updated from confirmed targets (shots / high-EEG soft labels) and distractors (low-EEG soft labels); posteriors are computed by cosine-similarity retrieval against the two caches with no backpropagation~\cite{farina2024frustratingly}.

All baselines receive the same evidence streams as the corresponding OLIVE evidence variant and share the same visual encoder (frozen CLIP ViT-B/16).

\subsection{Objective Performance by Difficulty}

Table~\ref{tab:us1_game_by_difficulty} summarizes SpaceShooter performance metrics by adaptive difficulty level ($N = 57$ participants). Hit rate remains stable across difficulty levels ($\sim$0.67--0.71), indicating that participants maintain shot accuracy even as the scene becomes more complex. Mothership health decreases monotonically from 85.7 at difficulty~1 to 52.6 at difficulty~5 ($r = -0.90$), reflecting the increased number of enemies that leak past the user. Target kill rate drops from 0.73--0.75 at levels~1--3 to 0.61 at level~5, as the higher target count ($\sim$24 at level~4--5 vs.\ 9 at level~1) outpaces the user's engagement capacity.

\begin{table}[h]
\centering
\caption{SpaceShooter objective performance by adaptive difficulty level. Hit rate = fraction of shots on enemy targets; FF rate = fraction on friendly ships; target kill rate = fraction of true targets destroyed; MS health = mothership HP remaining (0--100).}
\label{tab:us1_game_by_difficulty}
\small
\begin{tabular}{l c c c c}
\hline
\textbf{Diff.} & \textbf{Hit Rate} & \textbf{FF Rate} & \textbf{Tgt Kill} & \textbf{MS Health} \\
\hline
1 & $0.67 \pm 0.21$ & $0.33 \pm 0.21$ & $0.73 \pm 0.19$ & $85.7 \pm 3.0$ \\
2 & $0.67 \pm 0.14$ & $0.33 \pm 0.14$ & $0.74 \pm 0.17$ & $77.9 \pm 3.9$ \\
3 & $0.71 \pm 0.11$ & $0.29 \pm 0.11$ & $0.75 \pm 0.13$ & $69.9 \pm 4.4$ \\
4 & $0.71 \pm 0.15$ & $0.29 \pm 0.15$ & $0.69 \pm 0.17$ & $60.4 \pm 5.5$ \\
5 & $0.71 \pm 0.11$ & $0.29 \pm 0.11$ & $0.61 \pm 0.11$ & $52.6 \pm 2.6$ \\
\hline
\end{tabular}
\end{table}

\subsection{Adaptive Difficulty Trajectories}

All participants begin at difficulty~1 and the system ramps adaptively based on performance. Individual trajectories show substantial variation: high-performing participants (P13, P25) reach difficulty~5 by rounds~12 and~14 respectively, while others (P8, P12) plateau at difficulty~2. The median maximum difficulty reached is 3 (IQR: 3--4). Most participants stabilize within 4--6 rounds, after which difficulty fluctuates by $\pm 1$ level. The non-monotonic trajectories (e.g., P18 peaks at difficulty~4 then regresses to~2) confirm that overwhelmed periods occur and the system responds by reducing load.
Figure~\ref{fig:us1_difficulty_trajectory} shows individual trajectories alongside the grand average.

\begin{figure}[h]
  \centering
  \includegraphics[width=\columnwidth]{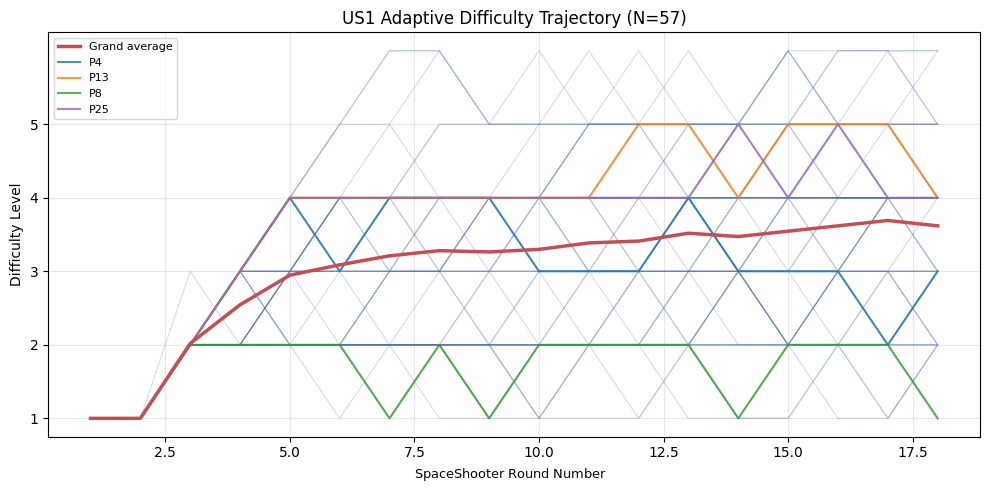}
  \caption{Adaptive difficulty trajectories per participant ($N=57$). Grand average in red; four representative participants highlighted. High performers (P13, P25) reach difficulty~5; P8 plateaus at~2. Non-monotonic trajectories confirm the adaptive controller steps down under overload.}
  \Description{Line plot of adaptive difficulty level over the 18 SpaceShooter rounds for each US1 participant. Most participants stabilize within 4--6 rounds; trajectories range from difficulty 1 to 5 across the group.}
  \label{fig:us1_difficulty_trajectory}
\end{figure}

\subsection{Subjective Ratings by Difficulty}

Table~\ref{tab:us1_subjective_by_difficulty} shows self-reported workload, confidence, and overwhelm by difficulty level. Overwhelm increases with difficulty ($r = 0.15$), consistent with the task's design intent. Confidence also increases ($r = 0.24$), reflecting a selection effect: participants who reach higher difficulty levels tend to be more skilled rather than difficulty causing higher confidence. Mothership health correlates with overwhelm ($r = -0.31$) and workload ($r = -0.22$), confirming that subjective experience tracks objective performance. Hit rate correlates with confidence ($r = 0.43$), indicating that shot accuracy is a salient cue for self-assessed performance.
Figure~\ref{fig:us1_subjective_scatterplots} shows these relationships.

\begin{figure*}[h]
  \centering
  \includegraphics[width=\linewidth]{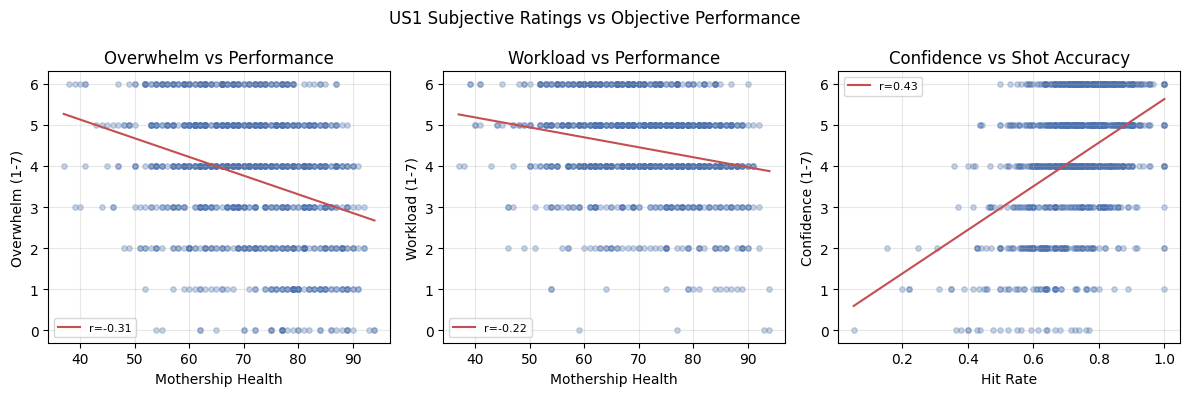}
  \caption{Subjective ratings vs.\ objective performance in US1 ($N=57$, 1015 rounds). Overwhelm and workload decrease with mothership health ($r=-0.31$, $r=-0.22$); confidence increases with hit rate ($r=0.43$). Each dot is one round.}
  \Description{Scatter plots correlating subjective ratings (overwhelm, workload, confidence) with objective performance metrics (mothership health, hit rate) by adaptive difficulty level across US1 participants.}
  \label{fig:us1_subjective_scatterplots}
\end{figure*}

\begin{table}[h]
\centering
\caption{Subjective ratings (1--7 Likert scale) by adaptive difficulty level.}
\label{tab:us1_subjective_by_difficulty}
\small
\begin{tabular}{l c c c}
\hline
\textbf{Diff.} & \textbf{Workload} & \textbf{Confidence} & \textbf{Overwhelm} \\
\hline
1 & $3.97 \pm 1.45$ & $3.20 \pm 1.56$ & $3.05 \pm 1.88$ \\
2 & $4.36 \pm 1.12$ & $3.71 \pm 1.65$ & $3.11 \pm 2.03$ \\
3 & $4.61 \pm 1.46$ & $4.24 \pm 1.61$ & $3.64 \pm 1.51$ \\
4 & $3.87 \pm 1.69$ & $4.25 \pm 1.37$ & $3.72 \pm 1.89$ \\
5 & $4.14 \pm 2.19$ & $4.86 \pm 1.07$ & $4.29 \pm 2.29$ \\
\hline
\end{tabular}
\end{table}
\subsection{Simulation-based error bound for the 16-round QUEST+-style calibration}
\label{app:questplus_error_bound}

A potential concern is that the individualized difficulty calibration in US1 uses only 18 SpaceShooter rounds in total: 2 fixed practice rounds followed by 16 adaptive rounds. We refer to these 16 adaptive rounds as the \emph{16-round} (equivalently \emph{16-block}) calibration, matching Figures~\ref{fig:questplus_abs_error_cdf} and~\ref{fig:questplus_true_vs_est}. To quantify how accurately this procedure can recover a participant-specific operating point, we conducted a Monte Carlo simulation using the \emph{same} QUEST+-style Bayesian difficulty selection rule as in our implementation.

\paragraph{Generative model.}
We simulated $N=2500$ virtual participants with latent ability $\theta \sim \mathcal{U}(1,6)$ and slope $\beta \in \{0.4,0.6,0.8,1.0,1.2\}$ (uniformly sampled). For a given difficulty level $d$, the expected round score (0--100) was
\begin{equation}
\mathbb{E}[\mathrm{Score}(d)] \;=\; 100 \cdot \sigma\!\left(\frac{\theta - d}{\beta}\right),
\end{equation}
where $\sigma(\cdot)$ is the logistic sigmoid. We then added logistic noise on the score axis (scale $14$), clamped the score to $[0,100]$, and discretized it into five outcome categories:
$[0,20]$, $(20,40]$, $(40,60]$, $(60,80]$, $(80,100]$.

\paragraph{Calibration procedure and target difficulty.}
For each simulated participant, we ran exactly 18 rounds:
two fixed practice rounds at $d{=}1$ and $d{=}2$, followed by 16 adaptive rounds over an internal 0.5-step grid $d \in \{1.0,1.5,\dots,6.0\}$.
The adaptive policy chose the next difficulty to maximize expected information gain (equivalently, minimize expected posterior entropy) over the latent parameters. We define the participant's \emph{target difficulty} $d^\star$ as the difficulty at which the expected score equals 70 (our ``challenging-but-playable'' operating point), i.e.,
\begin{equation}
d^\star \;\triangleq\; \left\{ d : \mathbb{E}[\mathrm{Score}(d)] = 70 \right\}
\;=\; \theta - \beta \cdot \mathrm{logit}(0.70).
\end{equation}
After round 16, we estimated $\hat d$ as the posterior mean of $d^\star$.

\paragraph{Results.}
Figure~\ref{fig:questplus_abs_error_cdf} shows the cumulative distribution of absolute estimation error $|\hat d - d^\star|$.
Across simulations, the median absolute error was $0.20$ difficulty levels (75th percentile $0.34$, 90th percentile $0.50$), with $90.0\%$ of participants within $\pm 0.5$ difficulty and $99.5\%$ within $\pm 1.0$.
Figure~\ref{fig:questplus_true_vs_est} plots $\hat d$ against the ground-truth $d^\star$, showing close agreement and negligible bias (mean signed error $0.006$).
Together, these results indicate that, under a plausible score-generation model, the proposed 16-round calibration is sufficient to place most participants within roughly half a difficulty level of their individualized operating point, which is the intended purpose of US1 (initializing US2 at a participant-appropriate difficulty band).

\begin{figure}[t]
  \centering
  \includegraphics[width=0.85\linewidth]{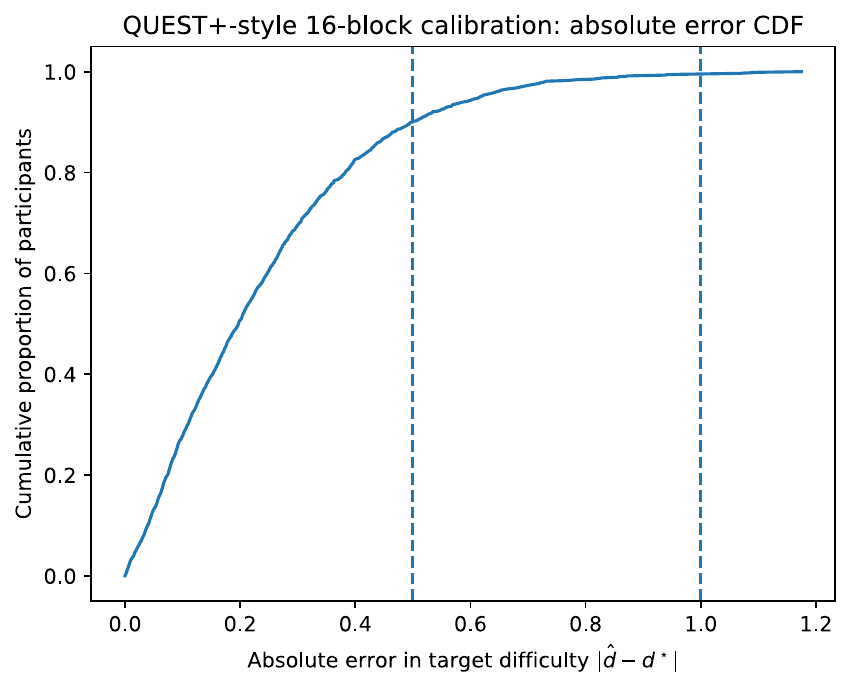}
  \caption{CDF of absolute error $|\hat d - d^\star|$ for the 16-round QUEST+-style calibration. Dashed lines mark $\pm 0.5$ and $\pm 1.0$ difficulty levels.}
  \Description{Cumulative distribution function of absolute estimation error for the 16-round QUEST-plus difficulty calibration. Dashed vertical lines mark 0.5 and 1.0 difficulty level error thresholds; 90 percent of simulated participants fall within 0.5 levels.}
  \label{fig:questplus_abs_error_cdf}
\end{figure}

\begin{figure}[t]
  \centering
  \includegraphics[width=0.85\linewidth]{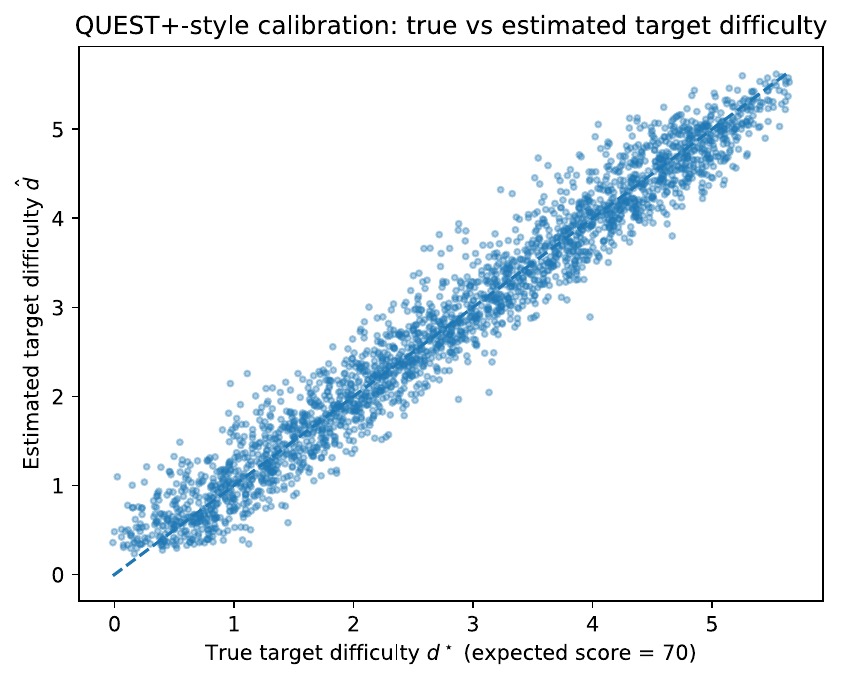}
  \caption{True vs.\ estimated target difficulty ($d^\star$ vs.\ $\hat d$) across simulated participants. The dashed line indicates $y{=}x$.}
  \Description{Scatter plot of estimated versus true target difficulty across 2500 simulated participants. Points cluster tightly along the identity line (y equals x), demonstrating low bias and close agreement between the 16-round calibration estimate and the ground-truth difficulty.}
  \label{fig:questplus_true_vs_est}
\end{figure}
\section{User Study 2: Additional Details}
\subsection{Session Structure}
\label{app:us2_session}

Figure~\ref{fig:procedure_us2} shows the US2 session flow. It follows the same structure as US1 (Visual Search calibration then SpaceShooter rounds) but introduces TimeScaledRespawn (ships respawn when shot, 120\,s rounds) and live OLIVE deployment.

\paragraph{US2 conditions.}
All four conditions share the same task; they differ only in what assistance the agent provides.
\textbf{Control}: no agent assistance: the participant acts alone and receives no guidance cues.
\textbf{Oracle}: the agent receives ground-truth target labels directly from the game engine and uses them to guide the participant, establishing an empirical performance ceiling.
\textbf{E} and \textbf{IE} use OLIVE (EEG-free and EEG-augmented respectively); see §\ref{sec:olive} and Appendix~\ref{app:us1_variants}.

\paragraph{TimeScaledRespawn rationale.}
Because a ship respawns immediately when shot, the battlefield stays populated and the agent must maintain useful guidance throughout each round rather than converging once to a static target set. This is essential for testing whether OLIVE's guidance holds under ongoing pressure, not just during an initial convergence window.

\begin{figure}[h]
  \centering
  \includegraphics[width=\columnwidth]{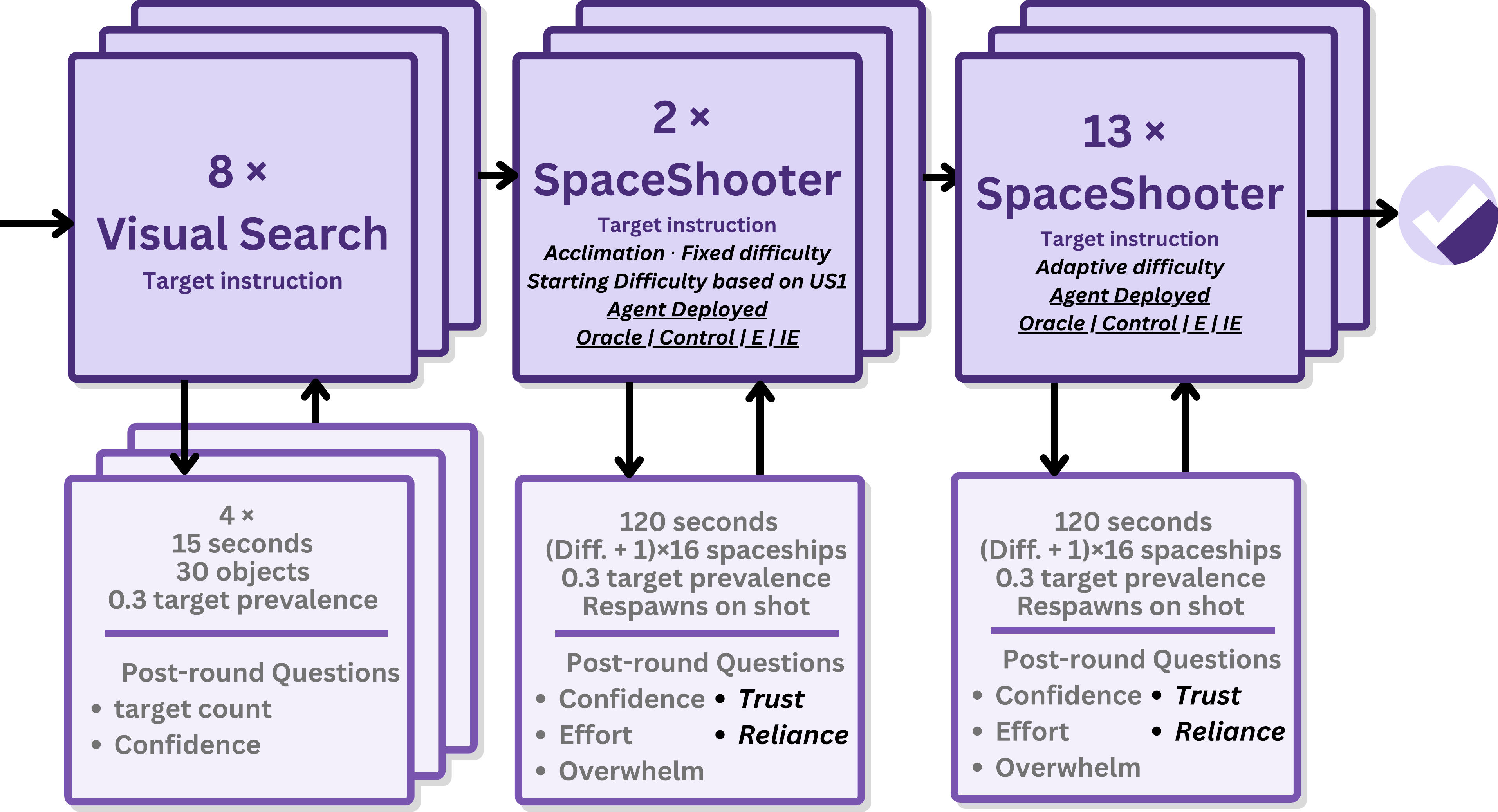}
  \Description{Horizontal timeline showing US2 session structure: a Visual Search calibration phase, then 15 SpaceShooter rounds in the TimeScaledRespawn variant. Condition (Control, Oracle, E, IE) is assigned once at session start.}
  \caption{US2 session structure. Participants are assigned to one of four between-subjects conditions at session start. OLIVE runs live throughout all SpaceShooter rounds (TimeScaledRespawn variant).}
  \label{fig:procedure_us2}
\end{figure}
\subsection{Condition Assignment Procedure}
\label{app:us2_assignment}

To prevent shooting skill from confounding the between-subjects comparison, each participant's US2 starting difficulty was computed as the score-weighted mean difficulty over their last six qualifying US1 rounds (rounds meeting thresholds on shooting precision, throughput, and mothership health).
Participants were then binned into four ability tiers ($d < 2$, $2 \le d < 3$, $3 \le d < 4$, $d \ge 4$) and conditions were balanced within each tier before global equalization, ensuring no condition disproportionately received higher- or lower-ability participants.
\subsection{Adaptive Difficulty Trajectories}
\label{app:us2_difficulty}

Figure~\ref{fig:us2_difficulty_trajectory} shows mean difficulty level per round for each condition.
All conditions start near difficulty~3--4 and converge to difficulty~5 by mid-session, confirming that the stratified assignment (based on US1 max-difficulty) successfully balanced ability across conditions.

Pairwise Welch $t$-tests on per-participant maximum difficulty reached show no significant overall difference (one-way ANOVA: $F=1.50$, $p=.239$).
However, OLIVE-E and Oracle more reliably saturated the difficulty ceiling: both conditions had every participant reach difficulty~6 (mean~$6.00 \pm 0.00$), compared to Control (mean~$5.38 \pm 0.86$; $t=1.93$, $p=.095$, marginal).
This is not a confound but the expected consequence of agent assistance: participants who receive reliable target guidance maintain high enough performance to keep the adaptive controller escalating.
OLIVE-IE showed no significant ceiling advantage over Control ($p = .790$), consistent with its within-session improvement story: the EEG channel's benefit manifests progressively over rounds rather than immediately saturating the ceiling.

\begin{figure}[h]
  \centering
  \editmajorfig{\includegraphics[width=\columnwidth]{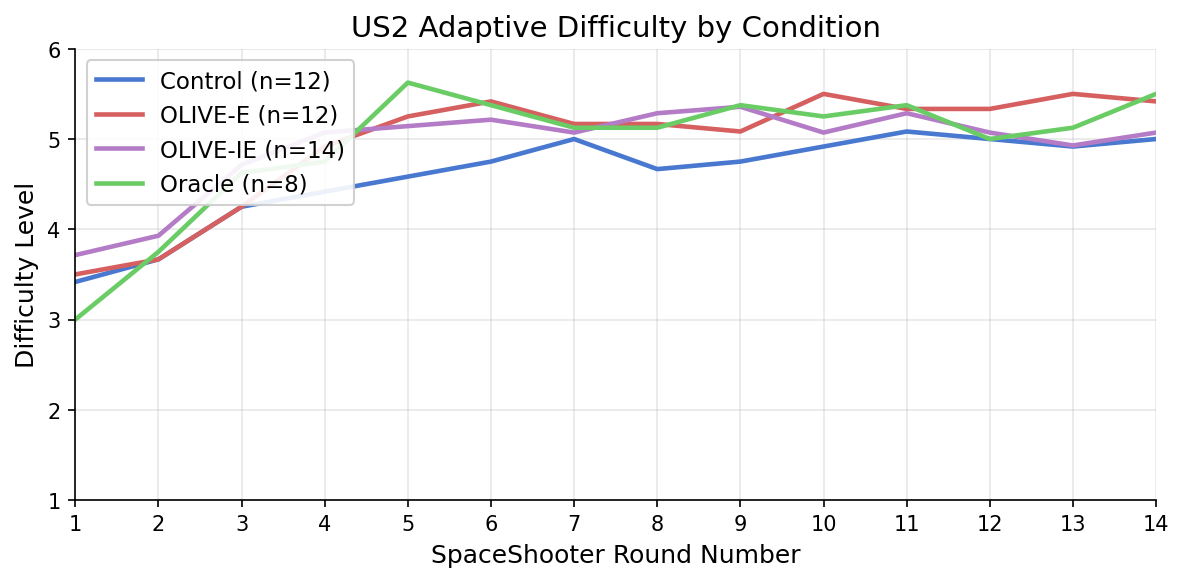}}
  \caption{US2 mean adaptive difficulty by condition (grand average per condition; stratified assignment ensures comparable starting levels). All conditions converge to difficulty~5 by round~6; OLIVE-E and Oracle sustain slightly higher difficulty in late rounds.}
  \Description{Line chart showing mean adaptive difficulty level (vertical axis, 1 to 6) over SpaceShooter round number (horizontal axis, 1 to 14) for four conditions: Control, Oracle, OLIVE-E, and OLIVE-IE. All four lines start near their US1-calibrated band (around difficulty 3) and rise over the first six rounds, converging around difficulty 5. In later rounds, Control trends slightly lower than the OLIVE and Oracle conditions.}
  \label{fig:us2_difficulty_trajectory}
\end{figure}
\subsection{Convergence: Offline vs.\ Live Deployment}
\label{app:us2_vs_us1_convergence}

Figure~\ref{fig:us2_vs_us1_convergence} compares OLIVE's per-round convergence trajectories between US1 (offline simulation replay) and US2 (live deployment) for the IE and E conditions.
Both conditions reach near-total guidance convergence in live deployment (IE: 99.4\%; E: 97.1\%), exceeding their US1 offline benchmarks (IE: 96.8\%; E: 79.0\%), consistent with denser and more varied behavioral evidence in the live task.

\begin{figure*}[h]
  \centering
  \editmajorfig{\includegraphics[width=\linewidth]{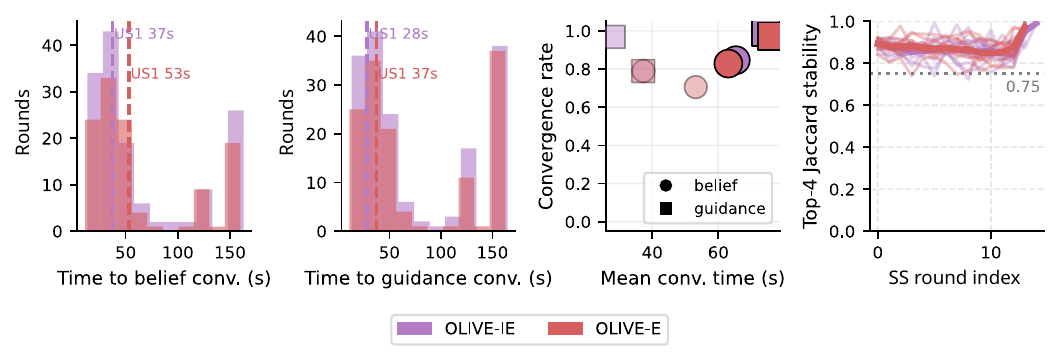}}
  \caption{OLIVE convergence: US2 live deployment vs.\ US1 offline simulation (IE and E). Top row: time-to-convergence curves for belief spread (score std) and top-4 ranking stability over SS round sequence. Bottom row: convergence rate comparison across all condition $\times$ type combinations. }
  \Description{Multi-panel figure comparing OLIVE convergence between US1 offline simulation and US2 live deployment for IE and E conditions. Shows time-to-convergence distributions and convergence rate bars; both conditions reach near-total guidance convergence in US2 (IE 99.4 percent, E 97.1 percent).}
  \label{fig:us2_vs_us1_convergence}
\end{figure*}
\section{User Study 3: Additional Details}
\subsection{Session Structure and Switch Timing}
\label{app:session_structure}

Figure~\ref{fig:procedure_us3} shows the US3 session flow. It is identical to US2 (Visual Search calibration then SpaceShooter rounds in the TimeScaledRespawn variant) except for a mid-round silent target switch at a randomized time ($\in\{75,90,105\}$\,s) unknown to participants.

\paragraph{Switch-timing design.}
Switch times were drawn from $\{75,$ $90,$ $105\}$\,s, with each value appearing exactly three times across the nine data-collection rounds in a randomized order per participant.
This balanced design controls for the structural confound that earlier switches yield longer post-switch windows (${\approx}125$\,s, $110$\,s, and $95$\,s respectively): across a full session, each participant has the same aggregate distribution of post-switch time, ensuring participant-level means are unconfounded by switch timing.
Participants were required to infer the switch entirely from evolving in-game score feedback (a change in which ship type yields favorable outcomes) without any system alert or instruction to reorient, replicating real-world settings where task priorities shift through environmental cues rather than explicit notifications.
The session comprised 10 SpaceShooter rounds: one acclimation round followed by nine data-collection rounds, with three rounds at each of 75, 90, and 105\,s to maintain the 3$\times$3 timing balance.

\begin{figure}[h]
  \centering
  \includegraphics[width=\columnwidth]{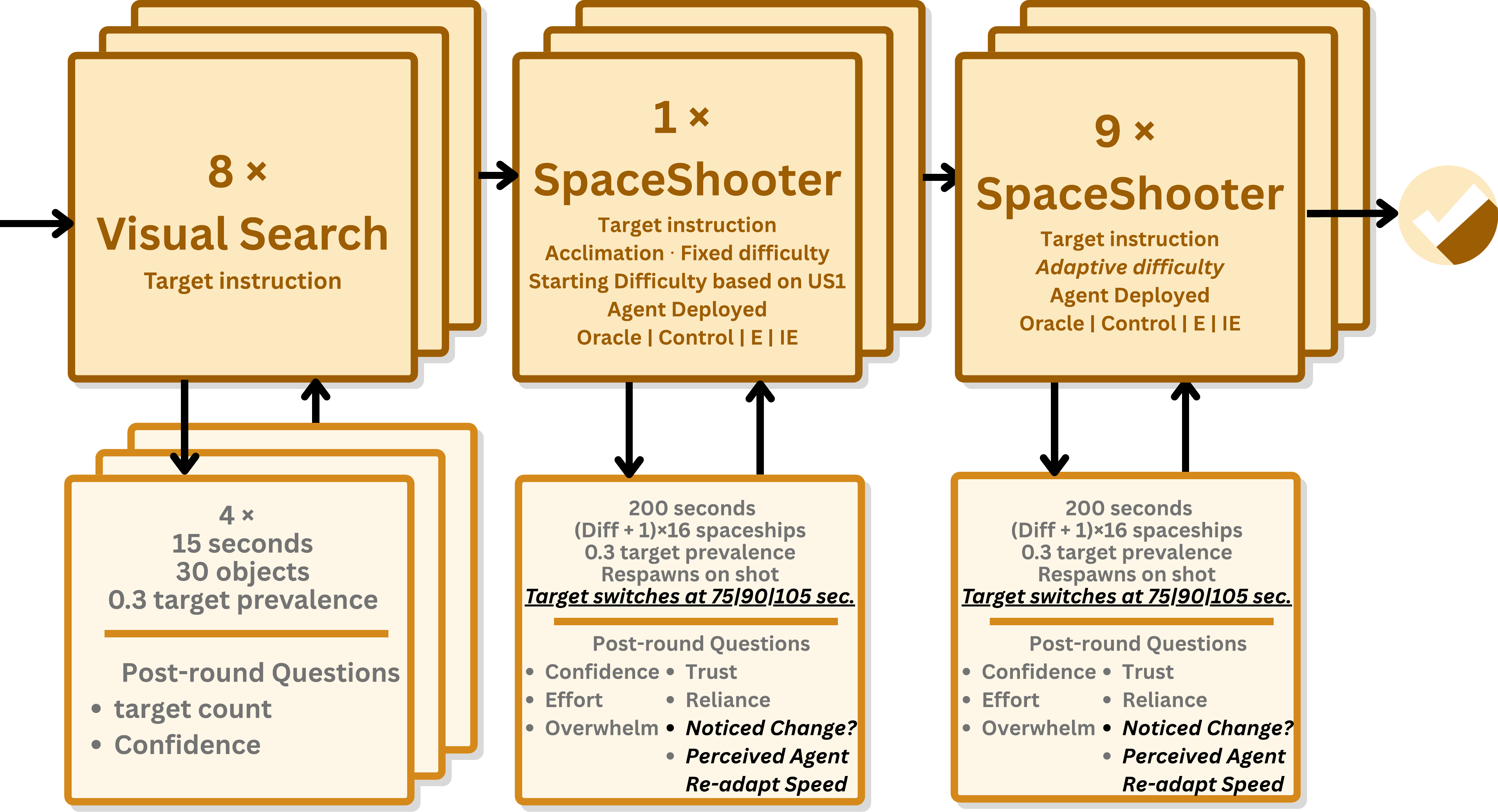}
  \Description{Horizontal timeline showing US3 session structure: identical to US2 with an additional silent target switch event occurring at a randomized time (75, 90, or 105 s) within each data round.}
  \caption{US3 session structure. Identical to US2 except for longer rounds (200s vs. 120s) and a silent target switch occurs at a prerandomized time (unknown to participants).}
  \label{fig:procedure_us3}
\end{figure}
\subsection{Statistical Model Details}
\label{app:us3_lme}

The primary metric is new-target throughput: $\text{second\_target\_shot}$ / $(200 - t_\text{switch})$ kills/s per round, averaged per participant. Switch times are balanced (three rounds each at 75, 90, 105\,s), so participant-level means are unconfounded by timing.
The within-session delta (late three rounds minus early three rounds) is the primary inferential quantity; one-sample $t$-tests are used to test improvement against zero, and Welch $t$-tests for between-condition comparisons (see Statistical approach in US2 Study Design).

For reference, a round-level LME on raw \texttt{second\_target\_shot} with condition, difficulty, and switch time as fixed effects and by-participant random intercepts (Control as reference) yields $\hat{\beta} = 9.51$ ($p = .168$) for OLIVE-E and $\hat{\beta} = 11.50$ ($p = .122$) for OLIVE-IE; these effects are directionally consistent but strongly attenuated by the switch-timing covariate ($\hat{\beta} = -0.228$, $p < .001$), confirming that normalization by post-switch window duration is essential.
Oracle remains significant in the LME ($\hat{\beta} = 23.30$, $p = .007$).
\subsection{Adaptive Difficulty Trajectories}
\label{app:us3_difficulty}

Figure~\ref{fig:us3_difficulty_trajectory} shows mean difficulty level per round for each condition.
All four conditions follow similar trajectories, reaching difficulty~4--5 by mid-session and sustaining it through the nine data-collection rounds.

A one-way ANOVA on per-participant maximum difficulty\linebreak reached confirms no significant between-condition differences ($F=0.20$, $p=.896$), and no pairwise comparison reaches even marginal significance (all $p > .40$, $|\Delta| \leq 0.38$).
This is a strong null result: the difficulty confound concern can be dismissed entirely for US3.
All conditions faced comparably challenging enemy compositions when the silent switches occurred, so the H3.1 and H3.2 throughput differences reflect genuine adaptation performance, not task demand asymmetries.

\begin{figure}[h]
  \centering
  \includegraphics[width=\columnwidth]{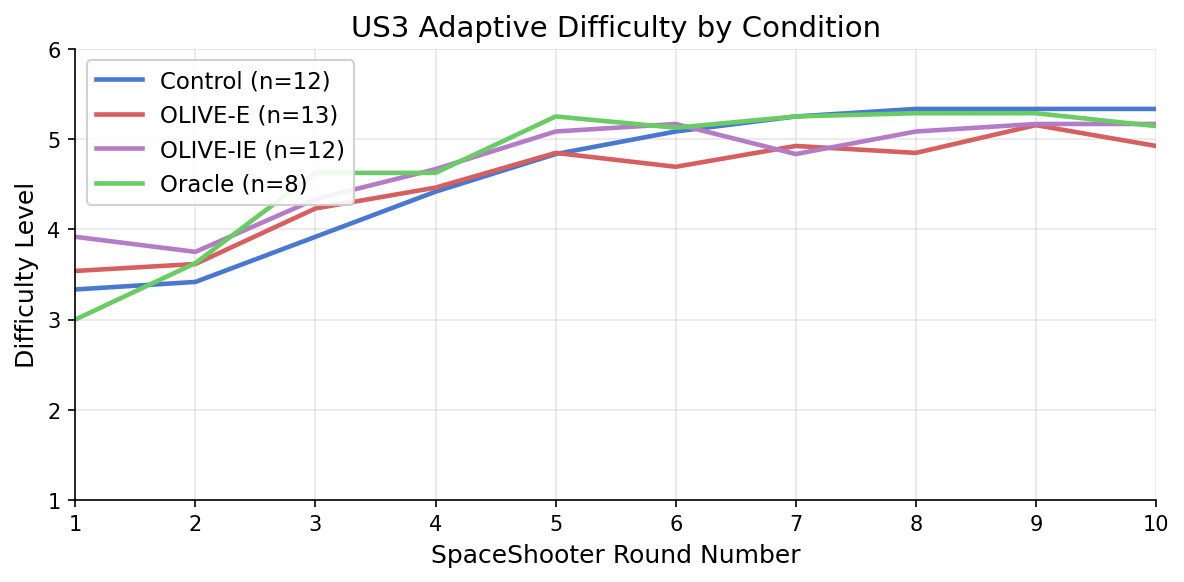}
  \caption{US3 mean adaptive difficulty by condition. All conditions converge to difficulty~4--5 by round~4, confirming that between-condition throughput differences in H3.1--H3.2 are not confounded by systematically different task demands at the time of the silent switch.}
  \Description{Line chart showing mean adaptive difficulty level (vertical axis, 1 to 6) over SpaceShooter round number (horizontal axis, 1 to 10) for four conditions in US3. All four lines start near their US1-calibrated band (around difficulty 3) and rise to around difficulty 5 by round 5, then remain roughly stable. The overlapping trajectories confirm that all conditions experienced similar task demands throughout the session, ruling out difficulty as a confound for the post-switch throughput results.}
  \label{fig:us3_difficulty_trajectory}
\end{figure}
\subsection{Subjective Measures}
\label{app:us3_subjective}

\paragraph{Wingman adaptation speed.}
OLIVE-E and OLIVE-IE participants rated how quickly their wingman adapted to the silent switch on a 1--7 scale (post-round item; Oracle excluded as its adaptation was automatic).
IE participants rated wingman adaptation speed notably higher than E (OLIVE-IE: $M = 5.1$ vs.\ OLIVE-E: $M = 4.4$), consistent with the faster objective reconvergence (H3.1).

\paragraph{NASA-TLX}
Figure~\ref{fig:us3_nasa} shows the post-experiment raw NASA-TLX profile by condition ($n = 7$ C, $8$ E, $8$ IE, $5$ O; Welch $t$-tests). Subscales were administered on an adapted 1--7 response scale rather than the conventional 0--100 raw format.
Two dimensions show statistically notable differences.
\textbf{Temporal demand} was significantly higher in both OLIVE-E ($M = 5.2$) and Oracle ($M = 5.33$) than in Control ($M = 2.8$; E vs.\ C: $t = 2.56$, $p = .035$; Oracle vs.\ C: $t = 2.55$, $p = .046$), while OLIVE-IE ($M = 4.0$) fell intermediate, suggesting that EEG augmentation partially relieves the temporal pressure of monitoring for the switch, as the wingman reorients faster and reduces the window during which the operator must bear that burden alone.
\textbf{Self-rated performance} was significantly higher for OLIVE-IE ($M = 5.75$) than Control ($M = 4.0$; $t = 2.57$, $p = .048$), consistent with the within-session readaptation improvement (H3.2).
Frustration was low and comparable across C, E, and IE ($M = 3.2$--$3.5$), but notably elevated for Oracle ($M = 5.33$; Oracle vs.\ C: $t = 2.02$, $p = .118$), suggesting that automatic adaptation without operator agency may be perceived as disorienting rather than helpful.

\begin{figure}[h]
  \centering
  \includegraphics[width=0.82\linewidth]{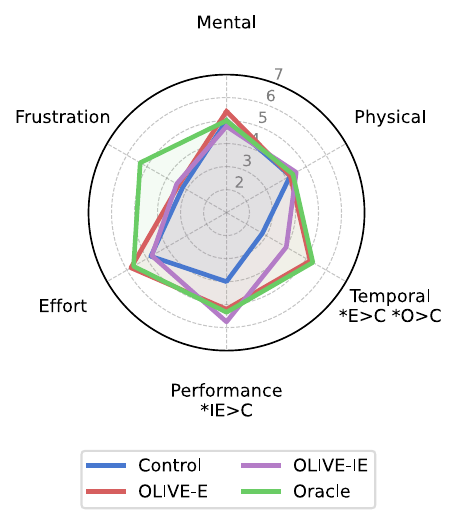}
  \caption{NASA-TLX profile by condition (post-experiment; 1--7 scale). Axis labels annotated with pairwise significance ($* p < .05$, Welch $t$-tests).}
  \Description{Radar chart showing NASA-TLX workload dimensions (mental demand, physical demand, temporal demand, performance, effort, frustration) for each US3 condition (Control, OLIVE-E, OLIVE-IE, Oracle) on a 1-to-7 scale.}
  \label{fig:us3_nasa}
\end{figure}

\end{document}
\endinput